\begin{document}

\title{Geometric Learning with Positively Decomposable Kernels}

\author{\name Natha\"el Da Costa \email nathael.dacosta@gmail.com \\
        \name Cyrus Mostajeran \email cyrus.mostajeran@gmail.com \\
        \name Juan-Pablo Ortega \email juan-pablo.ortega@ntu.edu.sg \\
       \addr Division of Mathematical Sciences\\
		School of Physical and Mathematical Sciences\\
       Nanyang Technological University\\
       21 Nanyang Link, 637371, Singapore
       \AND
        \name Salem Said \email salem.said@univ-grenoble-alpes.fr \\
       \addr Laboratoire Jean Kuntzman\\
       Universit\'e Grenoble-Alpes\\
       Grenoble, 38400, France
       }

%\editor{Jean-Philippe Vert}

\maketitle

\begin{abstract}%   <- trailing '%' for backward compatibility of .sty file
Kernel methods are powerful tools in machine learning. Classical kernel methods are based on positive definite kernels, which enable learning in reproducing kernel Hilbert spaces (RKHS). For non-Euclidean data spaces, positive definite kernels are difficult to come by. In this case, we propose the use of reproducing kernel Krein space (RKKS) based methods, which require only kernels that admit a positive decomposition. We show that one does not need to access this decomposition to learn in RKKS. We then investigate the conditions under which a kernel is positively decomposable. We show that invariant kernels admit a positive decomposition on homogeneous spaces under tractable regularity assumptions. This makes them much easier to construct than positive definite kernels, providing a route for learning with kernels for non-Euclidean data. By the same token, this provides theoretical foundations for RKKS-based methods in general.
\end{abstract}

\begin{keywords}
  kernel methods, geometric learning, Krein spaces, non-positive kernels, invariant kernels, Gaussian kernel, homogeneous spaces, symmetric spaces.
\end{keywords}

{\tableofcontents}

\section{Introduction}
Kernel methods have proved to be successful in machine learning. Classically, kernel methods rely on a positive definite (PD) kernel on the data space. Such a PD kernel gives an embedding of the data space into a reproducing kernel Hilbert space (RKHS), which is a space of functions over the data space. Learning problems are then often phrased as learning a function in the RKHS, which can be done effectively thanks to the Representer Theorem {\citep{scholkopf_generalized_2001}}. {This theorem makes kernels a particularly attractive paradigm as, combined with universality results {\citep{steinwart2001influence, micchelli_universal_2006, sriperumbudur_universality_2011, simon-gabriel_kernel_2018}}, it provides a learning guarantee. In addition, kernels allow for uncertainty quantification by using Gaussian processes \citep{rasmussen_gaussian_2005, kanagawa_gaussian_2018}.}\par
Kernel methods on Euclidean data spaces have been thoroughly studied, and numerous families of PD kernels have been proposed in Euclidean spaces, many of which have universal approximation properties {\citep{steinwart2001influence, micchelli_universal_2006}}. However, in many applications, it is important to capture the geometry of the data and view it as lying on a non-Euclidean space, such as a manifold. There has been considerable recent interest in constructing PD kernels on manifolds {\citep{jayasumana_kernel_2013, feragen_geodesic_2014, borovitskiy_matern_2022, azangulov_stationary_2023, azangulov_stationary_2023-1, da_costa_invariant_2023}}. It has proved difficult to find closed-form PD kernels that are {easy to implement and evaluate} on general geometries. \par
\cite{ong_learning_2004} proposed a different class of kernel algorithms that do not require positive definiteness of the kernel. Instead, they require the existence of a positive (PD) decomposition for the kernel, meaning that it can be written as a difference of PD kernels. Such a kernel embeds the data space into a reproducing kernel Krein space (RKKS), consisting of functions on the data space. As in the RKHS case, this allows for the solution of learning problems from data {\citep{schleif_indefinite_2015, bonnet-loosli_learning_2016, oglic_learning_2018, liu_fast_2021,liu_analysis_2021}} via, for instance, an adapted representer theorem. \par
We are interested in the problem of characterizing kernels that admit a PD decomposition, both on Euclidean and non-Euclidean data spaces. Providing such a characterization promises to solve several open problems in the literature. More specifically, it achieves the following.\par
\begin{enumerate}
    \item \emph{It justifies the use of non-PD kernels for RKHS-based methods.} In applied areas, non-PD kernels have sometimes been used when restricted to a data set for which the Gram matrix corresponding to this kernel is PD. This has, for example, been done with the geodesic Gaussian kernel on non-Euclidean geometries {\citep{calinon_gaussians_2020,jousse_geodesic_2021}}, which is known not to be PD in general {\citep{feragen_geodesic_2014, da_costa_gaussian_2023, da_costa_invariant_2023, li_gaussian_2023}}. It seems that many RKHS-based methods (such as support vector machines {\citep{cristianini_support_2008}}) may be generalized to RKKS-based methods {\citep{bonnet-loosli_learning_2016}}, such that the solution to both learning problems are the same when restricted to a finite data set on which the kernel is PD. Therefore applying an RKHS-based method to a non-PD kernel, which is PD on a given finite data set, can be justified by the existence of a PD decomposition for the kernel.
    \item \emph{It justifies the use of RKKS-based methods.} Analogous to how a kernel may be PD on a finite data set but not on the whole space, a kernel may be PD decomposable on a finite data set but not on the whole space. Except that in this case, this subtlety is even easier to ignore, as kernels are \emph{always} PD decomposable on finite data sets. However, if a kernel does not admit a PD decomposition \emph{on the whole} data space, then the solution to the learning problem has no guarantees to generalize to unseen data. RKKS-based methods have so far largely ignored the issue of checking PD decomposability of the kernels used. Only linear combinations of PD kernels were truly known to admit a PD decomposition. Recently, a theoretical advance was made by \cite{liu_fast_2021} by observing that invariant kernels on $X=\R^n$ admit a PD decomposition if and only if they are the inverse Fourier transform of a finite signed measure. We will generalize this result to many non-Euclidean data spaces $X$.
    \item \emph{It motivates the use of RKKS-based methods for non-Euclidean data spaces.} In the Euclidean case $X=\R^n$, many families of kernels are known to be PD. RKHS-based methods are, therefore, usually sufficient in this case. In contrast, for non-Euclidean data spaces $X$, PD kernels can be challenging to construct. For example, it has been shown that the generalization of some of the most widely used kernels in $\R^n$, such as the Gaussian kernel, are often not PD on non-Euclidean geometries {\citep{feragen_geodesic_2014, da_costa_gaussian_2023, da_costa_invariant_2023, li_gaussian_2023}}. The positive definiteness of a kernel is a stringent condition and heavily depends on the specific geometry of the data space in question. This makes it difficult to describe general families of kernels that are PD on general classes of geometries. {Such families can sometimes be expressed as infinite series expansions on compact manifolds or as integrals on non-compact ones \citep{borovitskiy_matern_2022, azangulov_stationary_2023, azangulov_stationary_2023-1}}. Closed-form PD kernels, however, have only been found on a handful of geometries {\citep{feragen_geodesic_2014, da_costa_invariant_2023}}.
In contrast, we will see that PD decomposability is a much weaker condition to impose on a kernel, and it will be enough to require some symmetry and regularity assumptions. Therefore, in non-Euclidean settings, RKKS-based methods become particularly compelling.
\end{enumerate}

\paragraph{Paper organization and contributions:} This paper aims at providing verifiable sufficient conditions for kernels to admit a positive decomposition, thereby justifying their use for RKKS learning. We pay particular attention to \emph{invariant} kernels on locally compact groups $G$ and their coset spaces $G/H$, and then look into the special case where $G$ is a Lie group, and $G/H$ is a homogeneous space. \par
We start by briefly reviewing the theory of PD kernels and how they enable RKHS learning in Section \ref{pd_ker_sec}. We then describe the analogous theory of PD decomposable kernels and RKKS learning in Section \ref{pdd_ker_sec}. In particular, {we formalize} a general representer theorem for RKKS learning. Crucially, we show that, when the regularizer of the learning problem is linear in the squared indefinite inner product, one does not need access to the PD decomposition of the kernel in order to apply the representer theorem; one only needs to know that it exists. {We then discuss the extent to which RKHS methods can be adapted to the RKKS framework. In Section \ref{pd_decomposition_sec} we review the few examples where PD decompositions are known to exist.} \par
In Section \ref{sta_ker_sec} we focus on invariant kernels on homogeneous spaces of locally compact groups and prove the correspondence between (PD/PD decomposable) invariant kernels on $G/H$ and (PD/PD decomposable) Hermitian functions on the double coset space $H\backslash G/H$. When $H=\{e\}$, this allows us to relate PD decomposable invariant kernels on $G$ to real-valued functions in the Fourier-Stieltjes algebra $B_\C(G)$, which in turn allows us to leverage results from harmonic analysis. \par
In Section \ref{com_sec}, we study the case where $G$ is commutative, where the situation simplifies considerably. In particular, we review known sufficient conditions for functions to belong to the spaces $B_\C(\R^n)$ and $B_\C(S^1)$. As an example, we show that the Gaussian kernel, despite failing to be PD, for instance, on the torus $\T^n$, has a PD decomposition on any Abelian Lie group. \par
In Section \ref{non_com_sec}, we move to the case where $G$ is not assumed to be commutative. In this case the literature has focused on the space of functions $B_\C(G)$, and not the more general $B_\C(H\backslash G/H)$. To fill this gap, we show that any function that is invariant on (double) cosets and is PD decomposable has a PD decomposition into functions that are invariant on (double) cosets, i.e. $B_\C(G)\cap C{_\C}(G/H) = B_\C(G/H)$ and $B_\C(G)\cap C{_\C}(H\backslash G/H) = B_\C(H\backslash G/H)$. This allows us to show in Section \ref{suf_cond_sec} that on a homogeneous space $G/H$, with $G$ a unimodular Lie group and $H$ a compact Lie subgroup, smooth functions whose derivatives decay appropriately at infinity admit PD decompositions. As an example, we then show that the Gaussian kernel has a PD decomposition on non-compact symmetric spaces.
\section{Kernels and Learning}\label{ker_sec}
\subsection{{Learning with} positive definite kernels}\label{pd_ker_sec}
\begin{definition}\label{kernel_def}
A kernel on a set $X$ is a Hermitian map $k:X\times X \rightarrow \C$.
\end{definition}
In practice, we are interested in real-valued kernels $k:X\times X \to \R$, but allowing kernels to take complex values will be convenient in the following theoretical work.
{
\begin{definition}
A kernel $k$ on a set $X$ is said to be positive definite (PD) if for all $N\in\N$, $x_1,\dots,x_N\in X$ and all $c_1,\dots,c_N\in\C$,
$$\sum_{i=1}^{N}\sum_{j=1}^{N}\overline {c_i}c_jk(x_i,x_j)\geq 0$$
i.e. the matrix $\big(k(x_i,x_j)\big)_{i,j}$, which we call the Gram matrix of $x_1,\dots,x_N$, is Hermitian positive semidefinite.
\end{definition}}
We refer to {\citet[Chapter 3]{berg_harmonic_1984}} for the general theory of PD kernels.

In machine learning, a key reason for the importance of PD kernels is the following theorem (see {\citet[Theorem 3.16]{paulsen_introduction_2016}}) that guarantees that PD kernels have a natural associated reproducing kernel Hilbert space (RKHS):
\begin{theorem}\label{rkhs_thm}
Let $k$ be a PD kernel on a set $X$. Then there is a complex Hilbert space $\H$ {of complex-valued functions}, which we call the reproducing kernel Hilbert space (RKHS) associated with $k$, with a map
\begin{equation}
\label{canonical feature map}
\begin{aligned}
\Phi: X&\to \H \\
x &\mapsto k(x,\cdot )
\end{aligned}
\end{equation}
such that
\begin{enumerate}
\item $\langle \Phi(x),\Phi(y) \rangle_\H = k(x,y)$ for all $x,y\in X$,
\item $\overline{\spn(\Phi(X))} = \H$.
\end{enumerate}
\end{theorem}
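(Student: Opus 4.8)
The plan is to carry out the classical Moore--Aronszajn construction: build $\H$ by hand from the functions $k(x,\cdot)$ and then complete. First I would set $V = \spn\{k(x,\cdot) : x \in X\} \subseteq \C^X$, the finite linear combinations of the candidate feature vectors, and equip it with the sesquilinear form
$$\Big\langle \sum_i c_i\, k(x_i,\cdot),\ \sum_j d_j\, k(y_j,\cdot) \Big\rangle = \sum_{i,j} \overline{c_i}\, d_j\, k(x_i,y_j),$$
taken conjugate-linear in the first slot. This choice is forced: linearity together with Property 1 leaves no alternative.

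Two things must then be checked at the level of $V$. First, that the form is well defined, i.e. independent of the (non-unique) representations of its arguments. Here I would use that the double sum can be regrouped two ways: as $\sum_i \overline{c_i}\, g(x_i)$, which depends only on $g$ as a function and the coefficients of $f$, and (using that $k$ is Hermitian) as $\sum_j d_j\, \overline{f(y_j)}$, which depends only on $f$ as a function and the coefficients of $g$; together these pin the value down in terms of $f$ and $g$ alone. Second, positive semidefiniteness, $\langle f,f\rangle = \sum_{i,j}\overline{c_i}c_j\, k(x_i,x_j)\ge 0$, which is \emph{exactly} the PD hypothesis. To promote this to a genuine inner product I would combine the Cauchy--Schwarz inequality (valid for any positive semidefinite form) with the reproducing identity $\langle k(x,\cdot), f\rangle = f(x)$ to obtain $|f(x)|^2 \le k(x,x)\,\langle f,f\rangle$; hence $\langle f,f\rangle = 0$ forces $f\equiv 0$, so no quotient is needed and $(V,\langle\cdot,\cdot\rangle)$ is already an inner product space of honest functions.

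The remaining and most delicate step is to complete $V$ to a Hilbert space $\H$ while keeping it a space of \emph{functions} on $X$. The abstract metric completion $\overline V$ is a Hilbert space, but its elements are a priori equivalence classes of Cauchy sequences rather than functions. To realize them as functions I would observe that for each fixed $x$ the evaluation $f\mapsto f(x)=\langle k(x,\cdot),f\rangle$ is bounded on $V$, with norm at most $\sqrt{k(x,x)}$, so it extends continuously to $\overline V$; sending $\xi\in\overline V$ to the function $x\mapsto\langle k(x,\cdot),\xi\rangle$ then defines a linear map $\overline V\to\C^X$. The crux is that this realization map is injective: if $\xi$ maps to the zero function, then $\xi$ is orthogonal to every $k(x,\cdot)$, hence to the dense subspace $V$, hence $\xi=0$. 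Injectivity lets me identify $\H:=\overline V$ with its image, a space of complex-valued functions on $X$ that contains $V$ and on which the realization map restricts to the identity.

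With this identification the conclusions are immediate. Property 1 reads $\langle\Phi(x),\Phi(y)\rangle=\langle k(x,\cdot),k(y,\cdot)\rangle=k(x,y)$, which holds by the definition of the form; Property 2 holds since $\overline{\spn(\Phi(X))}=\overline V=\H$ by construction; and the reproducing property $\langle k(x,\cdot),f\rangle=f(x)$ for all $f\in\H$, built into the realization, is what makes $\H$ a genuine RKHS. The one real obstacle is thus the passage to the completion together with the verification that point evaluation survives it, everything else being forced algebraically by Property 1 and the Hermitian and PD hypotheses.
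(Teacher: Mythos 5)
Your construction is the standard Moore--Aronszajn argument, which is precisely the proof behind the result the paper cites (Paulsen, Theorem 3.16) rather than proves itself; the architecture --- pre-Hilbert structure on $\spn\{k(x,\cdot)\}$, well-definedness by regrouping the double sum, non-degeneracy via Cauchy--Schwarz plus the reproducing identity, completion realized as functions through the bounded evaluation functionals, and injectivity of the realization from density --- is correct and complete. One caveat: with the paper's feature map $\Phi(x)=k(x,\cdot)$ and the target identity $\langle\Phi(x),\Phi(y)\rangle=k(x,y)$, the sesquilinear form must be taken \emph{linear} in the first slot and conjugate-linear in the second, so that the reproducing identity reads $f(y)=\langle f,k(y,\cdot)\rangle$; under your conjugate-linear-in-the-first-slot convention the stated identity $\langle k(x,\cdot),f\rangle=f(x)$ fails for genuinely complex kernels (test $f=k(y,\cdot)$: the left side is $k(x,y)$ while $f(x)=k(y,x)=\overline{k(x,y)}$), and your realization map sends $k(y,\cdot)$ to $k(\cdot,y)$ rather than to itself. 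Swapping which argument carries the conjugation repairs this everywhere with no other change to the proof.
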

In the above theorem $\overline{\spn(\Phi(X))}$ stands for the completion of $\spn(\Phi(X))$ with respect to the RKHS norm.

Any map $\Phi $ that satisfies the equality $\langle \Phi(x),\Phi(y) \rangle_\H = k(x,y)$ in the first part of the theorem is called a {\it feature map } of the RKHS ${\mathcal H} $. The functions in the image of the feature map $\Phi  $ in \eqref{canonical feature map} are called {\it kernel sections}{, or {\it canonical basis functions},} of $k$. 
In practice, kernels will be real-valued, so we will usually consider the induced \emph{real} RKHS{, defined as the subspace of real-valued functions of $\mathcal H$}. \par
Suppose we are trying to learn a real or complex-valued function $f$ over $X$ from a finite set of observations. One can try to find a good approximation for this function in the RKHS associated with a particular kernel map $k$. This is particularly pertinent when the kernel in question has universality properties {\citep{steinwart2001influence, micchelli_universal_2006, sriperumbudur_universality_2011, simon-gabriel_kernel_2018}}, that is, when its sections are dense, say, in the set of continuous functions on compact subsets of $X$. The representer theorem in RKHS {\citep{scholkopf_generalized_2001}} tells us that, for a general class of learning problems, the best approximation $f^*$ in the RKHS, given finitely many observations of the function, is a finite linear combination of the kernel sections $k(x_i,\cdot )$. {More specifically, given
\begin{enumerate}
    \item $\mathcal D = \{(x_1,y_1),\dots, (x_N,y_N)\}\subset (X\times\R)^N$ a finite data set,
    \item $g:[0,\infty)\to \R$ a strictly increasing function,
    \item $L:\H \times(X\times\R)^N \to \R$ a loss functional, determined exclusively through function evaluations, i.e. if $f,g\in\H$ are such that $f|_{\mathcal D} = g|_{\mathcal D}$, then $L(f,\mathcal D) = L(g,\mathcal D)$,
    \item $\Omega(\mathcal D) \subset \H$ a feasible set, determined exclusively through function evaluations, i.e. if $f,g\in\H$ are such that $f|_{\mathcal D} = g|_{\mathcal D}$, then $f\in\Omega(\mathcal D)$ if and only if $g\in\Omega(\mathcal D)$.
\end{enumerate}
Then any solution to the minimization problem
\begin{equation}
\begin{aligned}
\label{optimisation problem RKHS}
    \underset{f\in\H}{\text{\rm{minimize}}}&\; L(f,\mathcal D) +g(\|f\|_\H) \\
    \text{s.t.}&\; f\in \Omega(\mathcal D)
\end{aligned}
\end{equation}
has the form
$$f^\ast  = \sum_{i=1}^N\alpha_i k(x_i,\cdot )$$
for some $\alpha_1, \ldots, \alpha _N\in\R$.}

The problem is therefore reduced to learning the coefficients $\alpha_i$. We have thus transformed the infinite-dimensional problem of learning a function into a finite-dimensional linear algebra problem.
{
\begin{example} \normalfont\label{krr_ex}
    Kernel ridge regression (KRR) is a learning algorithm for fitting a function in the RKHS to noisy observations. Given a PD kernel $k$ on a set $X$, $\H$ the RKHS associated to $k$, and a finite data set $\mathcal D = \{(x_1,y_1),\dots, (x_N,y_N)\} \subset (X\times \R)^N$, the learning problem solved by KRR is given by
    \begin{equation*}
        \underset{f\in\H}{\text{\rm{minimize}}} \frac{1}{N}\sum_{i=1}^N(f(x_i) - y_i)^2 + c\|f\|_\H^2
    \end{equation*}
    for some $c > 0$. The representer theorem gives us a closed-form solution to this problem as $f^\ast  = \sum_{i=1}^N\alpha_i k(x_i,\cdot )$ for some $\alpha_i \in \R$, namely $\alpha = (K+NcI)^{-1}y$, where $K = \big(k(x_i,x_j)\big)_{i,j}$ is the Gram matrix and $y=(y_i)_i$.
\end{example}}
\begin{example} \normalfont\label{svm_ex}
 Support vector machines (SVM) {\citep{cristianini_support_2008}} are a class of learning algorithms for classification. Given a finite data set $\mathcal D = \{(x_1,y_1),\dots, (x_N,y_N)\} \subset (X\times \{-1,1\})^N$, and $k$, $X$, $\mathcal H$ as above, the learning problem solved by binary SVM can be written as
 \begin{equation}\label{svm_eq}
     \underset{f\in\H, \;b\in \R}{\text{\rm{minimize}}}\; \|f\|_\H^2 \quad \mbox{s.t.} \quad\sum_{i=1}^N \max(0, 1-y_i(f(x_i)+b))  \leq \tau
 \end{equation}
 for some fixed parameter $\tau >0$. Once again, the representer theorem gives us a closed-form solution to this problem. Note that to be precise, the parametric extension to the representer theorem {\citep[Theorem 2]{scholkopf_generalized_2001}} is applied here, since the minimization is over $f$ and $b$, and not solely over $f$.
\end{example}

{
The field of positive definite kernel methods extends beyond the use of the representer theorem. Gaussian processes (GPs) are the Bayesian analogue to frequentist kernel methods, with the posterior mean of GP regression corresponding to the solution KRR being one of many correspondences between the two paradigms \citep{kanagawa_gaussian_2018}. Kernels can also be used to define distances on the space of probability measures through maximum mean discrepancy \citep{gretton_kernel_2012}.}
\subsection{{Learning with} positively decomposable kernels}\label{pdd_ker_sec}
A natural question that can be posed is whether the assumption on the kernel $k$ being positive definite is necessary. That is, can we still obtain a representer theorem while dropping this assumption? It turns out that we can. For this, we need to define the notion of a Krein space. We refer to \cite{bognar_indefinite_1974} for a thorough presentation of Krein spaces and to \cite{schwartz_sous-espaces_1964} and \cite{alpay_remarks_1991} for the study of their reproducing kernels. This last reference contains a good summary of Schwartz's contributions.
\begin{definition}\label{krein_def}
An indefinite inner product on a real vector space $\K$ is a Hermitian bilinear map $\langle\cdot ,\cdot \rangle: X\times X \to \R$ which is non-degenerate, that is, for all $f\in \K$,
$$\langle f,f\rangle = 0 \quad \mbox{implies that} \quad f =0.$$
A complex vector space $\K$ equipped with an indefinite inner product $\langle\cdot ,\cdot \rangle$ is called a Krein space if it can be written as the algebraic direct sum
\begin{equation}\label{decomposition_eq}
    \K = \H_+\oplus \H_-
\end{equation}
such that
\begin{enumerate}
\item $\H_+$ equipped with $\langle\cdot ,\cdot \rangle_+:=\langle\cdot ,\cdot \rangle$ is a Hilbert space,
\item $\H_-$ equipped with $\langle\cdot ,\cdot \rangle_-:=-\langle\cdot ,\cdot \rangle$ is a Hilbert space,
\item $\langle f_+,f_-\rangle=0$ for all $f_+\in\H_+$, $f_-\in\H_-$. 
\end{enumerate}
In other words, there are complete inner products $\langle\cdot ,\cdot \rangle_+$, $\langle\cdot ,\cdot \rangle_-$ on $\H_+$, $\H_-$ respectively such that
$$\langle f,g\rangle = \langle f_+,g_+\rangle_+ - \langle f_-,g_-\rangle_-$$
for all $f = f_++f_-$, $g= g_++g_-$ with $f,g\in\K$, $f_+,g_+\in\H_+$, $f_-,g_-\in\H_-$. \par
$\K$ is equipped with the product topology on $\H_+ \times \H_-$, which can be shown to be independent of the choice of decomposition (\ref{decomposition_eq}).
\end{definition}
\begin{definition}\label{decomposition_def}
Let $k$ be a kernel on $X$. Then $k$ is said to have a positive (PD) decomposition if it can be written as
$$k = k_+ - k_-$$
where $k_+$ and $k_-$ are PD kernels.
\end{definition}
\begin{remark} \normalfont PD decompositions are also sometimes called {\it fundamental decompositions} {\citep{bognar_indefinite_1974}}, or {\it Kolmogorov decompositions} {\citep{mary_hilbertian_2003}}.
\end{remark}
With this notion, we obtain an analogous result to Theorem \ref{rkhs_thm} (see {\citet[Theorem 2.1]{alpay_remarks_1991}}):

\begin{theorem}
\label{rkks_thm}
Let $k$ be a PD decomposable kernel on a set $X$. Then there is a complex Krein space $\K$ {of complex-valued functions}, which we call the reproducing kernel Krein space (RKKS) associated to $k$, with a map
\begin{equation*}
\begin{aligned}
\Phi: X&\to \K \\
x &\mapsto k(x,\cdot ).
\end{aligned}
\end{equation*}
such that
\begin{enumerate}
\item $\langle \Phi(x),\Phi(y) \rangle_\K = k(x,y)$ for all $x,y\in X$,
\item $\overline{\spn(\Phi(X))} = \K$.
\end{enumerate}
\end{theorem}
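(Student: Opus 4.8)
The plan is to build $\K$ directly from a positive decomposition $k = k_+ - k_-$ together with the two reproducing kernel Hilbert spaces it supplies. First I would apply Theorem~\ref{rkhs_thm} to the PD kernels $k_+$ and $k_-$ to obtain RKHSs $\H_+$ and $\H_-$, with feature maps $\Phi_\pm(x) = k_\pm(x,\cdot)$ and reproducing properties $\langle f_\pm, k_\pm(x,\cdot)\rangle_\pm = f_\pm(x)$. I would then form the external direct sum $\K_0 := \H_+ \oplus \H_-$ equipped with the indefinite inner product $\langle (f_+,f_-),(g_+,g_-)\rangle_{\K_0} = \langle f_+, g_+\rangle_+ - \langle f_-, g_-\rangle_-$; with the fundamental decomposition into $\H_+\oplus 0$ and $0\oplus\H_-$, this is tautologically a Krein space in the sense of Definition~\ref{krein_def}. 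Setting $\Phi_0(x) := (k_+(x,\cdot), k_-(x,\cdot))$, the computation $\langle \Phi_0(x),\Phi_0(y)\rangle_{\K_0} = k_+(x,y) - k_-(x,y) = k(x,y)$ shows that the reproducing identity already holds on $\K_0$.

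It remains to turn $\K_0$ into a Krein space of \emph{functions} on $X$ whose feature map is literally $x\mapsto k(x,\cdot)$. For this I would introduce the evaluation map $\iota:\K_0 \to \C^X$, $\iota(f_+,f_-):= f_+ - f_-$, which by the reproducing properties satisfies $\iota(u)(x) = \langle u, \Phi_0(x)\rangle_{\K_0}$ and hence $\iota(\Phi_0(x)) = k_+(x,\cdot) - k_-(x,\cdot) = k(x,\cdot)$. To enforce the density requirement (part~2) I would pass to the minimal closed subspace $\K_0' := \overline{\spn\,\Phi_0(X)} \subseteq \K_0$ and aim to realise $\K$ as the image $\iota(\K_0')$, transporting the inner product along $\iota$ so that $\langle k(x,\cdot), k(y,\cdot)\rangle_\K = \langle \Phi_0(x),\Phi_0(y)\rangle_{\K_0} = k(x,y)$ and so that $\spn\{k(x,\cdot)\}$ is dense by construction.

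The crux, and the step I expect to be the main obstacle, is that $\iota$ need not be injective on $\K_0'$, so the inner product need not descend. Concretely, if $\H_+ \cap \H_- \neq \{0\}$ as function spaces, the decomposition $f = f_+ + f_-$ is not unique; the extreme case $k_+ = k_-$ gives $k = 0$ with $\K_0'$ the whole "diagonal", which $\iota$ sends to $0$. From $\iota(u)(x) = \langle u,\Phi_0(x)\rangle_{\K_0}$ one sees that the ambiguity is exactly the isotropic part $\mathcal N := \K_0' \cap (\K_0')^\perp = \{u \in \K_0' : \iota(u) = 0\}$, which is a closed neutral subspace of $\K_0'$. The resolution is to quotient it out: I would show that $\iota$ induces an injection $\K_0'/\mathcal N \hookrightarrow \C^X$ with image $\iota(\K_0')$, and that the indefinite form descends to a \emph{non-degenerate} form on $\K_0'/\mathcal N$. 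Verifying that this quotient is genuinely a Krein space, i.e. that $\mathcal N$ is ortho-complemented in $\K_0'$ so that a fundamental decomposition survives the quotient, is the delicate point, and is where the Krein-space structure theory of \cite{alpay_remarks_1991} (building on \cite{bognar_indefinite_1974} and \cite{schwartz_sous-espaces_1964}) is needed. Granting this, transporting the structure along the induced map yields the desired RKKS $\K$ with $\Phi(x) = k(x,\cdot)$ satisfying both (1) and (2). I would close by noting that, in contrast with the RKHS of Theorem~\ref{rkhs_thm}, the space $\K$ obtained this way is in general \emph{not} unique, since it may depend on the chosen decomposition $k = k_+ - k_-$.
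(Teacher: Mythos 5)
Your outline is essentially the construction that underlies the result the paper invokes: the paper does not prove Theorem \ref{rkks_thm} itself but defers to \citet[Theorem 2.1]{alpay_remarks_1991}, and the argument there (going back to \citealp{schwartz_sous-espaces_1964}) likewise starts from the RKHSs $\H_\pm$ of a decomposition $k=k_+-k_-$, forms $\H_+\oplus\H_-$ with the difference of the inner products, and realises the RKKS inside $\C^X$ via $(f_+,f_-)\mapsto f_+-f_-$. Your identification $\iota(u)(x)=\langle u,\Phi_0(x)\rangle_{\K_0}$, hence of $\ker\iota$ with the indefinite orthogonal complement of $\spn\Phi_0(X)$ and of the obstruction with the isotropic part $\mathcal N=\K_0'\cap(\K_0')^\perp$, is correct, as is the non-degeneracy of the induced form on $\K_0'/\mathcal N$ and the closing remark on non-uniqueness (cf.\ Remark \ref{non_unique_rmk}).

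The step you flag as delicate is, however, a genuine gap rather than a deferred verification, and as you have set it up it does not go through directly. A closed subspace of a Krein space, even a non-degenerate one, need not itself be a Krein space: writing the indefinite form on a Hilbert-orthogonal complement of $\mathcal N$ inside $\K_0'$ as $\langle Gu,v\rangle_H$ for a bounded self-adjoint Gram operator $G$, non-degeneracy only gives injectivity of $G$, whereas the existence of a fundamental decomposition \eqref{decomposition_eq} on the quotient requires $G$ to be boundedly invertible, i.e.\ $0\notin\sigma(G)$. Nothing in your argument excludes $0$ from the continuous spectrum, in which case $\K_0'/\mathcal N$ is a non-degenerate inner product space that is \emph{not} a Krein space. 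The standard resolution is not to quotient at all but to first replace the given decomposition by one with $\H(k_+)\cap\H(k_-)=\{0\}$ (that such a ``minimal'' decomposition exists is the substantive lemma in \citealp{schwartz_sous-espaces_1964}). Then $\iota$ is injective on all of $\K_0=\H_+\oplus\H_-$, so $\K:=\iota(\K_0)$ inherits the Krein structure of $\K_0$ tautologically; moreover $(\spn\Phi_0(X))^\perp=\ker\iota=\{0\}$ forces $\overline{\spn\Phi_0(X)}=\K_0$, so the density requirement in part 2 holds with no further restriction. Either supply that reduction or invoke it explicitly; without it, the crux of the theorem remains unproved.
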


\begin{remark} \normalfont\label{non_unique_rmk}
As opposed to the PD case {where the RKHS associated to a kernel is unique (see \citet[Theorem 3.16 \& following Remark]{paulsen_introduction_2016})}, the RKKS associated to a PD decomposable kernel $k$ can, in some {(arguably atypical)} cases, be non-unique; see \cite{schwartz_sous-espaces_1964}  and \cite{alpay_remarks_1991}.
\end{remark}
Theorem \ref{rkks_thm} can be made into an if and only if statement: $k$ must admit a PD decomposition in order to give rise to an RKKS, in the same way that $k$ must be PD in order to give rise to an RKHS. {Indeed, a PD decomposition for $k$ can be obtained given a decomposition of the RKKS as in (\ref{decomposition_eq}).}

As in the PD case, we will usually consider only the induced \emph{real} RKKS of a real-valued PD decomposable kernel{, defined as the subspace of real-valued functions of $\mathcal K$}. \par
Now, as in the RKHS case, a representer theorem can be formulated in the context of RKKS. {Such a representer theorem in RKKS was formulated in \citet[Theorem 11]{ong_learning_2004}. The indefinite nature of the inner product implies that the optimization problem \eqref{optimisation problem RKHS} is replaced by a critical point problem. More specifically, given
\begin{enumerate}
    \item $\mathcal D = \{(x_1,y_1),\dots, (x_N,y_N)\}\subset (X\times\R)^N$ a finite data set,
    \item $g:\R\to \R$ a strictly monotonic differentiable function,
    \item $L:\K \times (X\times\R)^N \to \R$ a loss functional, determined exclusively through function evaluations, Fréchet differentiable in the first argument,
    \item $I:\K \times (X\times\R)^N \to \R^m$ and $E:\K \times (X\times\R)^N \to \R^l$ functionals determined exclusively through function evaluations and Fréchet differentiable in the first argument,
\end{enumerate}
under some appropriate additional regularity condition on $I$ and $E$ (see Appendix \hyperref[representer_app]{A}), any solution to the stabilization (critical point) problem
\begin{equation}\label{rkks_opti_eq}
\begin{aligned}
    \underset{f\in\K}{\text{stabilize}}&\; L(f,\mathcal D) +g(\langle f,f\rangle_\K) \\
    \text{s.t.}&\; I(f,\mathcal D) \leq 0 \\
    &\; E(f,\mathcal D) = 0
\end{aligned}
\end{equation}
has the form
\begin{equation*}
f ^\ast  = \sum_{i=1}^N\alpha_i k(x_i,\cdot ),
\end{equation*}
for some $\alpha_1,\dots,\alpha_N\in \R$. Specifically, $f^*$ is given in closed form by
\begin{equation*}
f ^* =- \frac{1}{2 g'(\left\langle f^\ast,f^\ast\right\rangle_{{\mathcal K}})}\sum _{i=1} ^N \left(\partial _i L(f^*, \mathcal D) + \lambda ^{\top}\partial _i I(f^*,\mathcal D)+\mu ^{\top}\partial _i E(f^*,\mathcal D)\right)  k(x _i, \cdot )
\end{equation*}
for some Lagrange multipliers $\lambda\in \R^m$, $\mu\in \R^l$, where the $\partial_i$ denotes the derivative with respect to the $i$\textsuperscript{th} evaluation. Importantly, note that in the generic case $g(x) = cx$ for some $c\in \R$ and all $x\in \R$, we have $g'(\left\langle f^\ast,f^\ast\right\rangle_{{\mathcal K}}) = c$, and hence the $\alpha_i$ do not depend on the choice of indefinite inner product $\langle \cdot ,\cdot \rangle_\K$ associated to $\K$. This means that in this case \emph{one does not need to access the positive decomposition of the kernel in order to compute the solution, one only needs to know of its existence}.
}

We emphasize that, in view of this result, the situation for RKKS is analogous to the RKHS case in the sense that the solution of the stabilization problem is {\it in the span of the data}, although this adapted representer theorem is designed for the study of critical points, as opposed to minima of a loss functional, and thus algorithms must be adapted to this setting {\citep{hassibi_indefinite-quadratic_1999}}.

The reason for stabilizing, as opposed to minimizing, the regularized loss functional $L(f,\mathcal D) +g(\langle f,f\rangle_\K)$ is not that the proof of the RKKS representer theorem fails for minimization; it doesn't. Rather, the reason is that it is not expected that the regularized loss functional admits a minimum since $\langle f,f\rangle_\K$, as opposed to $\|f\|_\H$, is not bounded below for $f\in\K$. Thus, solutions to learning problems in RKKS may take the form of a saddle point instead.

{
Note that we have not described precisely what is meant by a critical point of the learning problem \eqref{rkks_opti_eq}. This is a subject that has not received much attention in the RKKS learning literature, possibly due to the difficulty of defining a critical point of a constrained optimization problem in function space. In Appendix \hyperref[representer_app]{A}, we thus expand on the work from \citet[Theorem 11]{ong_learning_2004} by providing a formal definition of such a critical point and outlining a proof for the resulting representer theorem.
}

{
Below we present a few examples of the use of the RKKS representer theorem.
\begin{example} \normalfont\label{kkrr_ex}
    Kernel ridge regression (c.f. Example \ref{krr_ex}) can be generalized to PD decomposable kernels \citep{ong_learning_2004}. Given a PD decomposable kernel $k$ on a set $X$, $\K$ the RKKS associated to $k$, and a finite data set $\mathcal D = \{(x_1,y_1),\dots, (x_N,y_N)\} \subset (X\times \R)^N$, the learning problem solved by KRR is given by
    \begin{equation*}
        \underset{f\in\H}{\text{\rm{stabilize}}} \frac{1}{N}\sum_{i=1}^N(f(x_i) - y_i)^2 + c\langle f,f\rangle_\K
    \end{equation*}
    for some $c \in \R$. The representer theorem gives us a closed-form solution to this problem as $f^\ast  = \sum_{i=1}^N\alpha_i k(x_i,\cdot )$ for some $\alpha_i\in\R$, namely $\alpha = (K+NcI)^{-1}y$, where $K = \big(k(x_i,x_j)\big)_{i,j}$ is the Gram matrix and $y=(y_i)_i$, assuming $-Nc$ is not an eigenvalue of $K$.
\end{example}}

\begin{example} \normalfont\label{ksvm_ex}
Support vector machines (c.f. Example \ref{svm_ex}) have been generalized to PD decomposable kernels in \cite{bonnet-loosli_learning_2016}. Given a finite data set $\mathcal D = \{(x_1,y_1),\dots, (x_N,y_N)\}$ $\subset (X\times \{-1,1\})^N$, and $k$, $X$, $\mathcal K$ as above, the analogous version to the learning problem (\ref{svm_eq}) can be written as
 \begin{equation*}
     \underset{f\in\K, \;b\in \R}{\text{\rm{stabilize}}}\;\langle f,f\rangle_\K 
    \quad\text{\rm{s.t.}}\quad\sum_{i=1}^N \max(0, 1-y_i(f(x_i)+b))  \leq \tau,
\end{equation*}
 for some fixed parameter $\tau >0$. The representer theorem tells us that any solution $(f,b)$ to this problem has the form $f= \sum_{i=1}^N\alpha_ik(x_i,\cdot )$ for some $\alpha_1, \ldots, \alpha _N\in\R$. We emphasize again that we are applying here the semiparametric extension to the representer theorem \cite[Theorem 12]{ong_learning_2004}, since the stabilization is over $f$ and $b$, and not solely over $f$.

 See Figure \ref{svm_fig} for an application.
 \end{example}

\begin{example}\normalfont{
    More generally, given an RKHS $\H$ and an RKKS $\K$, and a learning problem
    \begin{equation*}
    \begin{aligned}
    \underset{f\in\H}{\text{minimize}}&\; L(f,\mathcal D) +g(\|f\|^2_\H) \\
    \text{s.t.}&\; I(f,\mathcal D) \leq 0 \\
    &\; E(f,\mathcal D) = 0,
    \end{aligned}
    \end{equation*}
    one can consider the learning problem in RKKS
    \begin{equation*}
    \begin{aligned}
    \underset{f\in\K}{\text{stabilize}}&\; L(f,\mathcal D) +g(\langle f,f\rangle_\K) \\
    \text{s.t.}&\; I(f,\mathcal D) \leq 0 \\
    &\; E(f,\mathcal D) = 0.
    \end{aligned}
    \end{equation*}
This gives a general recipe for turning an RKHS problem into an RKKS problem. While Examples \ref{kkrr_ex} and \ref{ksvm_ex} follow this procedure, it appears to be an open question whether, starting from an arbitrary RKHS problem, the solutions to the corresponding RKKS problem behave as one would hope from solutions to the RKHS problem (eg. provide a good regression/classification rule).}

{
Turning an RKHS problem into an RKKS problem achieves another goal: suppose that, as in \citet{calinon_gaussians_2020,jousse_geodesic_2021}, we use a non-PD kernel, which has a PD Gram matrix when restricted to a given dataset, and consider a representer solution, as if the kernel was PD on the whole space. Through PD kernel theory, this has no theoretical backing: the kernel has no underlying RKHS over the whole space, therefore one cannot write an RKHS minimization problem that the representer solution solves. However, assuming that the kernel has a PD decomposition and that the appropriate regularity conditions on $L$, $I$, $E$ are satisfied, the representer solution is actually a critical point of the corresponding RKKS problem!}
\end{example}
\begin{figure}
\centering
\includegraphics[width=1\linewidth]{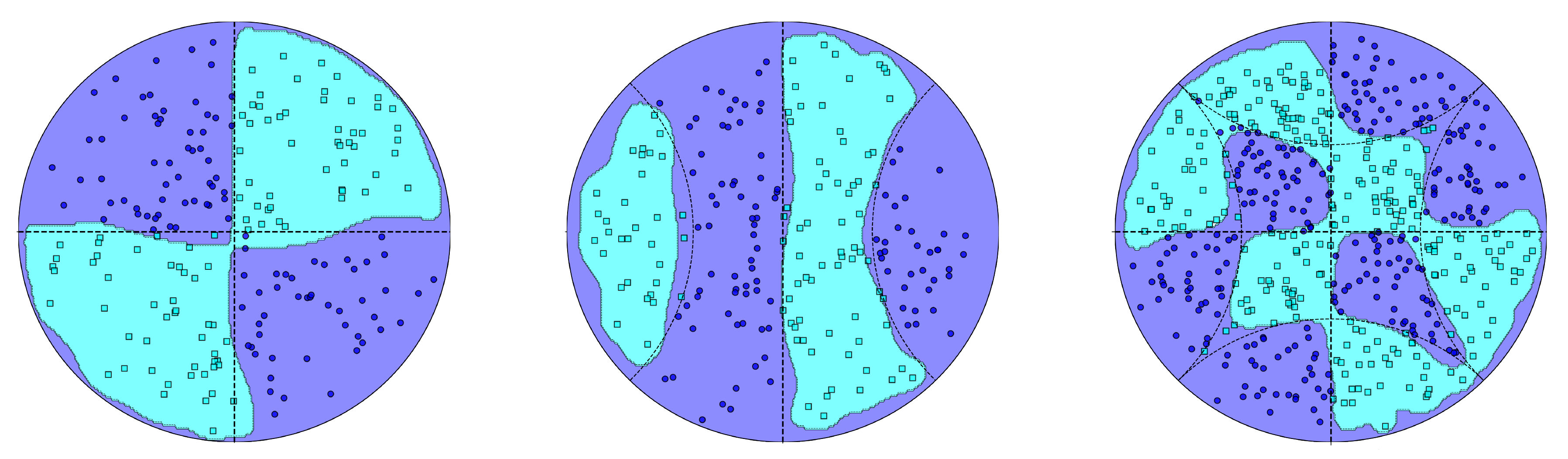}
\caption{The Krein SVM algorithm {\citep{bonnet-loosli_learning_2016}} applied on the hyperbolic plane $\Hyp^2$, with the geodesic Gaussian kernel $k = \exp(-\lambda d(\cdot ,\cdot )^2)$. The data is sampled from a Riemannian Gaussian distribution {\citep{Said2018,HOS2022,Said2023}} centered at the origin of the Poincare disc and is split into two classes according to geodesic decision boundaries (dotted curves in the figure). The number of sampled data points is 200, 200, and 500, respectively. The results of the classification are displayed in the Poincare disc model of $\Hyp^2$. We will show in Corollary \ref{symmetric_cor} that the Gaussian kernel admits a PD decomposition on $\Hyp^2$, justifying its use in this scenario.}
\label{svm_fig}
\end{figure}

{
The vast majority of optimization research has focused on minimization or maximization and rarely on stabilization. Stabilization problems are nonetheless common in fields such as Lagrangian and Hamiltonian mechanics where the least action principles \citep{Abraham1978, Marsden1994} that are at the core of the formulation of various physical theories adopt a stabilization rather than a minimization form. It is in contexts of this type, that in addition are rarely Euclidean, where the RKKS framework could be particularly appropriate}.

{
Nevertheless, designing minimization problems in RKKS could allow one to leverage existing optimization and RKHS learning literature more easily. This is possible, by replacing the regularizer $g(\langle\cdot,\cdot\rangle_\K) = g(\langle\cdot,\cdot\rangle_+ - \langle \cdot,\cdot\rangle_-)$ in \eqref{rkks_opti_eq} by a regularizer $g_+(\langle\cdot,\cdot\rangle_+)+g_-(\langle\cdot,\cdot\rangle_-)$ for some strictly increasing $g_+$, $g_-$. With respect to such a regularizer, one is then able to solve minimization problems. However, this requires a choice of inner product $\langle\cdot,\cdot\rangle_\K$ for the RKKS, i.e. a choice of PD decomposition for the kernel. Constructive choices of PD decompositions for kernels are beyond the scope of the present work.
\begin{example} \normalfont
    Krein kernel ridge regression introduced in \citet{oglic_learning_2018}, is an alternative generalization of kernel ridge regression to Krein spaces (c.f. Example \ref{kkrr_ex}). It refers to the learning problem
    \begin{equation*}
    \begin{aligned}
        \underset{f\in\K}{\text{\rm{minimize}}}&\; \sum_{i=1}^N(f(x_i) - y_i)^2 + c_+\langle f,f\rangle_+ +c_-\langle f,f\rangle_-\\
        \text{s.t.}&\;\sum_{i=1}^N\frac{1}{N}\left(f(x_i)-\sum_{j=1}^N\frac{1}{N}f(x_j)\right)^2 = r^2
        \end{aligned}
    \end{equation*}
    for some $r>0$.
\end{example}}

{
We conclude this section by noting that it is unclear at present whether other PD kernel non-representer-based approaches, such as Gaussian processes and maximum mean discrepancy, have analogues in the PD decomposable context. This appears to be an interesting avenue for future research.
}

{
\subsection{Positive decompositions}\label{pd_decomposition_sec}}
Without a PD decomposition, we do not have a representer theorem. In that case, it is not so much that the proof of the representer theorem fails; rather, we cannot state it in the first place. Indeed, the kernel does not have a reproducing property as in Theorem \ref{rkks_thm}, so there is no obvious concept of a space of functions we can stabilize over. Therefore, a solution found, for instance, in Example \ref{ksvm_ex}, may not come with a guarantee to provide a good classification rule for unseen data. 
This gives rise to the following fundamental question: \newline
\begin{problem}
\normalfont
\label{problem}
When does a kernel $k$ admit a PD decomposition?
\end{problem}
\begin{remark} 
\normalfont
Problem \ref{problem} was first studied by \cite{schwartz_sous-espaces_1964}. His setting was more general than ours, and only abstract conditions for PD decomposability were obtained. For instance, it is stated in that reference that $k$ has a PD decomposition if and only if there is a PD kernel $k_+$ such that $k_+-k$ is PD (see {\citet[Proposition 38 \& Proposition 23]{schwartz_sous-espaces_1964}}). However, if one wants to recognize PD decomposable kernels in practice, this condition does not appear to be any easier to work with than Definition \ref{decomposition_def} itself.
\end{remark}
\begin{example} \normalfont
\label{span_ex}
If $k_1,\dots,k_n$ are PD kernels and $a_1,\dots,a_n\in\R$, then the kernel
$$\sum_{i=1}^{n}a_ik_i$$
has the PD decomposition
$$\sum_{i: a_i \geq 0}a_ik_i - \sum_{i: a_i < 0}(-a_i)k_i.$$
In fact, observe that the space of kernels that admit a PD decomposition is exactly the real span of the PD kernels.
\end{example}
\begin{example} \normalfont\label{limit_ex}
    The pointwise limit of PD kernels is PD. It follows that the pointwise limit of PD decomposable kernels is PD decomposable.
\end{example}

\begin{remark} \normalfont
PD kernels form a convex cone that is closed under pointwise convergence. Now, Example \ref{span_ex} and Example \ref{limit_ex} show that PD decomposable kernels are the real linear span of these and thus form a real vector space closed under pointwise convergence.
\end{remark}
\begin{example} \normalfont\label{product_ex}
Similarly, since the pointwise product of PD kernels is PD \cite[Chapter 3 Theorem 1.12]{berg_harmonic_1984}, so it follows that the pointwise product of PD decomposable kernels is PD decomposable.
\end{example}
\begin{example} \normalfont\label{finite_ex}
Suppose $X$ is finite. Then a kernel $k$ can be viewed as a Hermitian matrix $K$. By Sylvester's law of inertia, we can write
$$K = A^\dag EA$$
for some invertible real matrix $A$, where $A^\dag$ denotes the conjugate transpose of $A$, and $E$ is of the form
\begin{equation*}
E =
\begin{bmatrix} 
    I_{N_+}&&   \\
    &-I_{N_-}&   \\
    &&0_M
\end{bmatrix}
\end{equation*}
for some $N_+,N_-,M\in \Z_{\geq 0}$, where $I_{N+}, I_{N-}$ denote the identity matrices with the corresponding sizes, and $0_M$ is the zero matrix of size $M\times M$. Then $E = E_+-E_-$ where
\begin{equation*}
E_+ =
\begin{bmatrix} 
    0_{N_+}&&   \\
    &-I_{N_-}&   \\
    &&0_M
\end{bmatrix}, \;
E_- =
\begin{bmatrix} 
    I_{N_+}&&   \\
    &0_{N_-}&   \\
    &&0_M
\end{bmatrix}.
\end{equation*}
This implies that
$$K = A^\dag E_+A-A^\dag E_-A,$$
and hence $k$ has a PD decomposition. However, when $X$ is infinite, one should not expect all kernels to admit a PD decomposition.
\end{example}

\begin{example} \normalfont
We present an example of a finite-dimensional RKKS associated with a non-finite $X$, namely $X=\Hyp^n$, the $n$-dimensional hyperbolic space. 
To motivate our construction, we recall that when $X = \R^n$, the standard inner product kernel $\langle \cdot, \cdot  \rangle: \R^n\times \R^n \to \R$ is PD and gives rise to the RKHS $\H = \{ \langle x, \cdot \rangle: x\in \R^n\}$, which consists of all linear maps $\R^n \to \R$. So for $f\in \H$, $b \in \mathbb{R} $, the set $\{x\in \R^n : f(x) = b\}$ is a hyperplane in $\R^n$. \par
Taking $X = \Hyp^n$, it is therefore natural to ask if there is a kernel that gives rise to an RKHS of functions $f$ such that for any $b \in \mathbb{R}  $, the sets $\{x\in\Hyp^n: f(x)=b\}$ are geodesic hyperplanes. This has practical relevance, as it would allow us to perform large-margin classification with geodesic decision boundaries, as in \cite{cho_large-margin_2019}. This is done naturally with a PD decomposable kernel giving rise to an RKKS. First consider the Minkowski inner product on $\R^{n+1}$:
$$(x,y) = x_0y_0-x_1y_1-\dots-x_ny_n$$
for all $x =(x_0,\dots,x_n), y= (y_0,\dots, y_n) \in \R^{n+1}$. Then, we can view $\Hyp^n$ as a hypersurface in this Minkowski space:
$$\Hyp^n = \{x\in \R^{n+1}: (x,x) = 1, x_0>0\}.$$
This is the hyperboloid model of $\Hyp^n$. The geodesic hyperplanes in $\Hyp^n$ are given by
$$\{x\in \Hyp^n: (y,x) = b\}$$
for some $y\in\R^{n+1}$, where the above is non-empty if and only if $(y,y) < 0$. These are the intersection of planes through the origin with the hyperboloid model of $\Hyp^n$ (see Figure \ref{hyperbolic_fig}). So the kernel $k: \Hyp^n\times\Hyp^n\to\R$ of interest can be taken to be $k = (\cdot ,\cdot )$. It has a PD decomposition by definition of the Minkowski inner product, and the associated real RKKS is the Minkowski space $\R^{n+1}$ equipped with $(\cdot ,\cdot )$.
\end{example}

\begin{figure}
\centering
\includegraphics[width=0.7\linewidth]{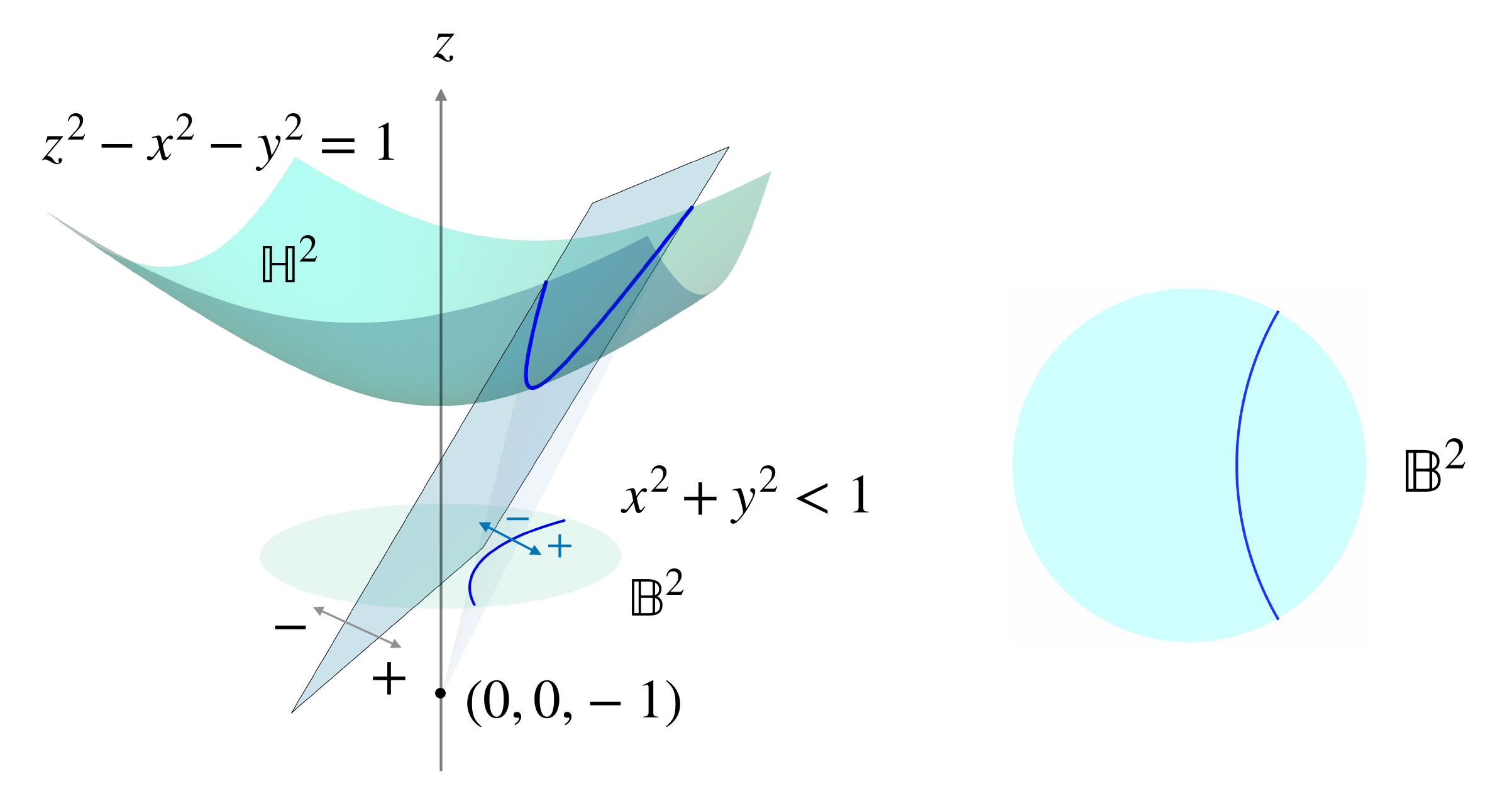}
\caption{A geodesic in the hyperboloid model of the hyperbolic plane $\Hyp^2$, obtained as the intersection of a plane with the hyperboloid. The corresponding geodesic on the Poincare disc model $\mathbb B^2$ of the hyperbolic plane is obtained by stereographic projection from the point $(0,0,-1)$.}
\label{hyperbolic_fig}
\end{figure}

\begin{example} \normalfont
We now present a kernel that does not admit a PD decomposition. Let $B$ be a real reflexive Banach space whose norm does not arise from an inner product (for instance, an $L^p$ space with $1<p<\infty$, $p\neq 2$), $B^\ast $ be its dual. Let $X= B\times B^\ast $, and let the kernel $k: X\times X\to \R$ be given by
$$k(x,\varphi; y,\psi):= \varphi(y)+\psi(x).$$
Then $k$ has no PD decomposition.  This example was first presented in {\citet[Page 243]{schwartz_sous-espaces_1964}}. A self-contained proof can be found in {\citet[Theorem 2.2]{alpay_remarks_1991}}.
\end{example}

\section{Group Actions and Invariant Kernels}\label{sta_ker_sec}
To have at our disposal additional structure to address Problem \ref{problem}, we shall
restrict our attention to kernels that possess certain symmetries, which we shall refer to as {\it invariant kernels}. 
In the rest of the paper, $G$ will be a locally compact (always assumed Hausdorff) topological group, acting on the left on the set $X$.
\begin{definition}
    A kernel $k:X\times X\to \C$ is called invariant with respect to the action of $G$ if
    $$k(g\cdot x, g\cdot y) = k(x,y)$$
    for all $g\in G$ and all $x,y\in X$, $\cdot$ denotes a left action of the elements of $G$ on the elements of $X$.
\end{definition}
{\begin{remark} \normalfont
    For notational convenience, we will, at times, drop the $\cdot$ all together and write $g\cdot x=gx$.
\end{remark}}
PD invariant kernels were first studied by \cite{yaglom_second-order_1961}, and more recently by \cite{azangulov_stationary_2023, azangulov_stationary_2023-1} in the context of Gaussian processes, {and by \cite{da_costa_invariant_2023}.}
\begin{example} \normalfont\label{trans_ker_ex}
Taking $X = G = \R^n$ equipped with addition and acting on itself, an invariant kernel on $\R^n$ is a {\it translation-invariant} or {\it stationary} kernel:
$$k(x+g,y+g) = k(x,y)$$
for all $x,y,g\in \R^n$. Standard invariant theory arguments show that translation-invariant kernels are exactly those that depend only on the difference $x-y$, i.e. kernels $k$ on $\R^n$ for which there is a function $f$ on $\R^n$ such that
$$k(x,y) = f(x-y)$$
for all $x,y\in \R^n$.
\end{example}
In this paper, we shall study a generalization of this example, namely when the action of $G$ is transitive on $X$. In that case, if we pick a distinguished element $o$ of $X$, the orbit-stabilizer theorem gives us the bijection of sets
\begin{equation*}
\begin{aligned}
G/H &\to X \\
gH &\mapsto g\cdot o
\end{aligned}
\end{equation*}
where $H= \Stab(o)$. In what follows, we will write $X$ as $G/H$, where $H$ is a subgroup of $G$. This notation has the advantage of specifying both the group $G$ acting on $X$ and the action (the canonical left action of $G$ on $G/H$), so from here on, we will call an invariant kernel on $G/H$ one that is invariant with respect to that action. We will further make the assumption that $H$ is closed, granting us with a locally compact Hausdorff quotient topology for $X=G/H$, and making the action of $G$ on $X$ proper (see {\citet[Section 2.6]{folland_course_2015}} for the theory of such homogeneous spaces).\par
Invariant kernels under transitive actions are particularly interesting for us because the symmetries they respect allow us to view them as functions of a single variable instead of two, as in Example \ref{trans_ker_ex}. To see this, let us first give the analogues to Definitions \ref{kernel_def} and \ref{decomposition_def} for functions defined on the relevant spaces.

\begin{definition} Write $H\backslash G/H := \{HgH: g\in G\}$ for the double coset space. A complex-valued function $f:H\backslash G/H \to \C$ is said to be positive definite (PD) if for all $N\in\N$ and $g_1,\dots,g_N\in G$, the matrix $$\big(f(Hg_i^{-1}g_jH)\big)_{i,j}$$
is positive semidefinite. $f$ is said to have a positive (PD) decomposition if it can be written as
$$f = f_+-f_-$$
where $f_+$ and $f_-$ are PD functions on $H\backslash G/H$.
\end{definition}
In particular, when $H = \{e\}$, $f:G\to \C$ is PD if the matrices $\big(f(g_i^{-1}g_j)\big)_{i,j}$ are positive semidefinite for all $N$ and all $g_i, g _j \in G$. Functions on $H\backslash G/H$ should be considered as $H$-invariant functions on $G/H$. The relevance of double coset spaces in the kernel context will become clear later in Proposition \ref{correspondence_prop}.

\begin{figure}
\centering
\includegraphics[width=1.0\linewidth]{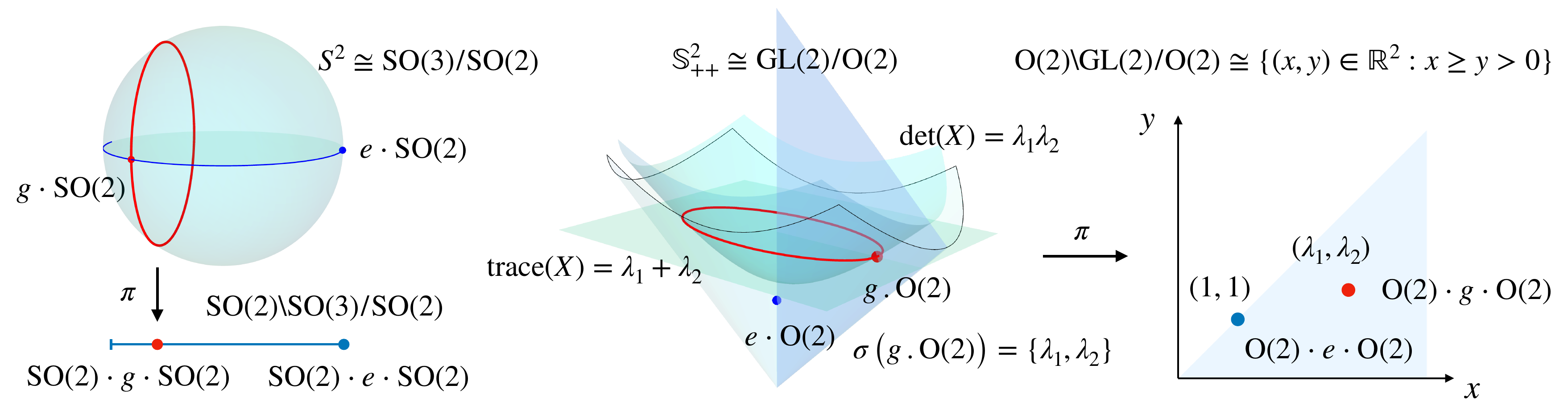}
\caption{A visualisation of the projections to the double coset spaces $\pi: S^2\cong SO(3)/SO(2)\to SO(2)\backslash SO(3)/SO(2)$ and $\pi: \mathbb S_{++}^2\cong GL(2)/O(2)\to O(2)\backslash GL(2)/O(2)$. In each case, we have in red an orbit $HgH$. In the case of $S^2$, it is obtained by rotating $gSO(2)$ around a horizontal axis. In the case of $\mathbb S_{++}^2$, it is the set of matrices with eigenvalues equal to the ones of $gO(2)$, obtained as the intersection of the constant-trace plane and the constant-determinant surface. In each case, we have in dark blue a set transversal to such orbits, and homeomorphic to $H\backslash G/H$.}
\label{double_coset_fig}
\end{figure}

\begin{example} \normalfont\label{doub_coset_sphere_ex}
    The $n$-dimensional sphere $S^n$ may be viewed as the coset space
    $$S^n \cong SO(n+1)/SO(n).$$
    Moreover, viewed as a submanifold $S^n\subset \R^{n+1}$, $SO(n+1)$ acts transitively on $S^n$ by rotations. Picking $o=(1,0,\dots,0)\in S^n$, the stabilizer of $o$ is the subgroup of $SO(n+1)$ leaving the first coordinate untouched, which is isomorphic to $SO(n)$. Now the double coset space $SO(n)\backslash SO(n+1) /SO(n)$ is obtained by quotienting out each orbit (see Figure \ref{double_coset_fig}), so we see
    $$SO(n)\backslash SO(n+1) /SO(n) \cong [-\pi,\pi].$$
    Functions on $SO(n)\backslash SO(n+1) /SO(n)$ are functions of the geodesic distance on $S^n$, or equivalently of the inner product of $\R^{n+1}$ restricted to $S^n$.
\end{example}
\begin{example} \normalfont\label{doub_cose_spd_ex}
    The space of $n\times n$ real symmetric positive definite matrices $\mathbb S_{++}^n$ may be viewed as the coset space
    $$\mathbb S_{++}^n \cong GL(n)/O(n).$$
    Indeed, $GL(n)$ acts transitively on $\mathbb S_{++}^n$ by $G\cdot X := GXG^{\top}$
    for $G\in GL(n)$ and $X\in \mathbb S_{++}^n$. The stabilizer of the the identity $I$ consists of the matrices $G$ such that $GG^{\top}=I$, i.e. $\Stab(I) = O(n)$. Now, for $X\in \mathbb S_{++}^n$, $O(n)XO(n) \in O(n)\backslash GL(n)/O(n)$ consists of $\{HXH^{\top} : H\in O(n)\}$, which is fully determined by the $n$ eigenvalues of $X$. So
    $$O(n)\backslash GL(n) /O(n) \cong \{(\lambda_1,\dots,\lambda_n): \lambda_i>0\; \forall\; i\}/\sim$$ %\cong \{S\subset \R_{>0}: |S|\leq n\}$$
    where $(\lambda_1,\dots,\lambda_n)\sim (\lambda_1',\dots,\lambda_n')$ if and only if $\{\lambda_1,\dots,\lambda_n\}=\{\lambda_1',\dots,\lambda_n'\}$ (see Figure \ref{double_coset_fig}). Functions on $O(n)\backslash GL(n) /O(n)$ are functions of the eigenvalues on $\mathbb S_{++}^n$.
\end{example}
Let us generalize the concept of Hermitian functions to complex-valued functions on double coset spaces:
\begin{definition}
    $f:H\backslash G/H \to \C$ is called Hermitian if
    $$f(Hg^{-1}H)= \overline{f(HgH)}$$
    for all $g\in G$.
\end{definition}
Since $G$ is a topological group, $G/H$ and $H\backslash G/H$ are equipped with natural quotient topologies. In the rest of the paper, we shall make the following continuity assumption: we will consider continuous kernels $k$, and ask whether they admit PD decompositions into continuous PD kernels $k= k_+-k_-$. We call such a decomposition a {\it continuous PD decomposition}.
\begin{remark} \normalfont
    The continuity assumption is non-trivial: see {\citet[Section 3]{schaback_approximation_2003}} for a construction of an everywhere discontinuous PD kernel. Note, however, that at least for functions on groups, a continuous function on $G$ has a continuous PD decomposition if and only if it has a PD decomposition \cite[Corollary 2.2.2]{kaniuth_fourier_2018}.
\end{remark}
The reason for considering functions on the \emph{double} coset space $H\backslash G/H$ as opposed to the regular coset space $G/H$ is made apparent in the following important result.
\begin{proposition}
\label{correspondence_prop}
    There is a bijection between the continuous Hermitian functions on $H\backslash G/H$ and the continuous invariant kernels on $G/H$. This bijection restricts to:
\begin{enumerate}
    \item A bijection between the continuous Hermitian PD functions and the continuous PD invariant kernels.
    \item An injection between the continuous Hermitian PD functions with continuous PD decompositions and the continuous kernels with continuous PD decompositions.
\end{enumerate}
\end{proposition}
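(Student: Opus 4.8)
The plan is to fix the distinguished base point $o = eH \in G/H$ and build the correspondence explicitly around it. Given a continuous invariant kernel $k$ on $G/H$, I would define a function $f$ on $H\backslash G/H$ by $f(HgH) := k(o, g\cdot o)$; to see this is well-defined on double cosets, note that if $g' = h_1 g h_2$ with $h_1,h_2\in H$, then $h_2\cdot o = o$ and invariance under $h_1^{-1}$ (using $h_1^{-1}\cdot o = o$) give $k(o, g'\cdot o) = k(o, g\cdot o)$. Conversely, given a continuous Hermitian $f$ on $H\backslash G/H$, I would lift it to an $H$-invariant $\tilde f$ on $G/H$ and set $k(g_1\cdot o, g_2\cdot o) := \tilde f(g_1^{-1}g_2\cdot o)$; a short computation with coset representatives shows this is independent of the choice of $g_1,g_2$, while invariance of $k$ is immediate from $(gg_1)^{-1}(gg_2) = g_1^{-1}g_2$. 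These assignments are mutually inverse (invariance of $k$ recovers $k$ from $f$), establishing the underlying bijection of sets.

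Next I would check how the Hermitian and continuity conditions transfer. The Hermitian identity $k(x,y)=\overline{k(y,x)}$ translates, via invariance, into $\overline{f(HgH)} = \overline{k(o,g\cdot o)} = k(g\cdot o, o) = k(o, g^{-1}\cdot o) = f(Hg^{-1}H)$, which is exactly the Hermitian condition on $f$; so $k$ is a kernel if and only if $f$ is Hermitian. For continuity, the direction $k\mapsto f$ is easy, since $g\mapsto k(o,g\cdot o)$ is continuous on $G$ and descends to $H\backslash G/H$. The reverse direction is the main obstacle: there is no continuous global lift of points of $G/H$ to $G$, so I cannot read continuity of $(x,y)\mapsto k(x,y)$ directly off the defining formula. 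I would instead consider the continuous map $G\times G\to \C$, $(g_1,g_2)\mapsto \tilde f(g_1^{-1}g_2\cdot o)$, observe it is constant on the fibers of the product quotient $G\times G\to G/H\times G/H$, and use that $G\to G/H$ is open (hence so is its square, making it a quotient map) to conclude that $k$ descends continuously.

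With the bijection in place, part 1 is a direct matching of the two positive-definiteness conditions. If $k$ is PD, then for any $g_1,\dots,g_N\in G$ the matrix $\big(f(Hg_i^{-1}g_jH)\big)_{i,j} = \big(k(g_i\cdot o, g_j\cdot o)\big)_{i,j}$ is a Gram matrix of $k$ and hence positive semidefinite, so $f$ is PD; conversely, given $x_1,\dots,x_N\in G/H$ I would use transitivity to write $x_i = g_i\cdot o$ and run the same identity backwards. This shows the bijection restricts to a bijection between continuous Hermitian PD functions and continuous PD invariant kernels.

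Finally, for part 2 I would use that $f\mapsto k$ is visibly linear in $f$. If $f = f_+ - f_-$ is a continuous PD decomposition, then mapping $f_+$ and $f_-$ through the PD bijection of part 1 yields continuous PD kernels $k_+,k_-$, and linearity gives $k = k_+ - k_-$, a continuous PD decomposition of $k$; injectivity is inherited from the underlying bijection. I would emphasize that only an injection is claimed: starting from a decomposition $k = k_+ - k_-$ of the invariant kernel, there is no reason the pieces $k_\pm$ can be taken invariant, so one cannot in general transport the decomposition back to a decomposition of $f$ into PD functions on the double coset space. Closing that gap is precisely the harder content deferred to Section \ref{non_com_sec}.
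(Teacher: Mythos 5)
Your proposal is correct and follows essentially the same route as the paper's proof: define $f$ from $k$ by evaluating at the base point, invert via $k(g_1H,g_2H)=f(Hg_1^{-1}g_2H)$, match the PD conditions through Gram matrices, and use linearity of $f\mapsto k$ for part 2. You supply slightly more detail than the paper on two points it leaves implicit (well-definedness on double cosets and continuity of the reconstructed $k$ via openness of the quotient map $G\to G/H$), and your closing remark on why only injectivity is claimed in part 2 matches the paper's Remark \ref{stat_rmk}.
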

\begin{proof}
    Given a continuous invariant kernel $k:G/H \times G/H \to \C$, define the function $f:H\backslash G/H \to \C$ by
    $$f(HgH):= k(gH,eH)$$
for all $g\in G$, where $e$ is the identity of $G$. Note that
$$f(Hg^{-1}H) = k(g^{-1}H,e) = k(g^{-1}H,eH) = k(eH,gH) = \overline{k(gH,eH)} = \overline{f(HgH)}$$
i.e. if $k$ is Hermitian then $f$ is necessarily Hermitian. Finally, the continuity of $f$ follows from the continuity of $k$. Conversely, given a Hermitian function $f: H\backslash G/H \to \C$, define the invariant kernel $k: G/H\times G/H \to \C$ by
$$k(gH,g'H) := f(Hg^{-1}g'H)$$
for all $g,g'\in G$ where the continuity of and the Hermitian character of $f$ imply the same properties for $k$. \par
These correspondences are inverses of each other and thus provide a bijection. Moreover, we can see that $k$ is PD if and only if $f$ is PD. Also, this correspondence is linear, which implies that if $f$ admits a continuous PD decomposition, so does $k$.
\end{proof}

\begin{remark} 
\normalfont
    In the case $H = \{e\}$, the correspondence above is simply the bijection between Hermitian functions on $G$ and invariant kernels on $G$. Example \ref{trans_ker_ex} is an instance of this case.
\end{remark}
\begin{remark} \normalfont\label{stat_rmk}
    Note that in the second part of Proposition \ref{correspondence_prop}, we have only proved the injectivity of the correspondence between continuous Hermitian PD functions with continuous PD decompositions and the continuous kernels with continuous PD decompositions. The difficulty in proving surjectivity is that $k$ may not admit a continuous PD decomposition $k= k_+-k_-$ with $k_+$ and $k_-$ invariant. In Appendix \hyperref[averaging_app]{B} we provide a partial solution by showing that the correspondence in part 2 is indeed surjective when $G$ is compact. In any case, the injectivity in this part will be enough for us to provide sufficient conditions for a kernel $k$ to have a continuous PD decomposition.
\end{remark}
Being able to handle invariant kernels as functions will allow us to leverage tools from harmonic analysis and representation theory to address Problem \ref{problem}. \par

In the sequel, we shall use the following spaces in relation with the locally compact group $G$ and the closed subgroup $H$:
$$
\begin{aligned}
    B_+(G) &:= \{\text{continuous PD functions } G\to \C\} \\
    B_\R(G) &:= {\spn_{\R}(B_+(G))} \\
    &= \{\text{continuous functions with continuous PD decompositions } G\to \C\} \\
    B_\C(G) &:= {\spn_{\C}(B_+(G))}\\
\end{aligned}
$$
and, more generally,
$$
\begin{aligned}
    B_+(H\backslash G/H) &:= \{\text{PD functions } H\backslash G/H\to \C\} \\
    B_\R(H\backslash G/H) &:= {\spn_{\R}(B_+(H\backslash G/H))} \\
    &= \{\text{continuous functions with continuous PD decompositions } H\backslash G/H\to \C\} \\
    B_\C(H\backslash G/H) &:= {\spn_{\C}(B_+(H\backslash G/H))}.
\end{aligned}
$$
\begin{remark} \normalfont
    While $B_\R(G)$ and $B_\R(H\backslash G/H)$ are defined as the real span of $B_+(G)$ and $B_+(H\backslash G/H)$, note that the latter contain complex-valued functions that are not real-valued, hence so do the former.
\end{remark}
The space $B_\C(G)$ is called the {\it Fourier-Stieltjes algebra} of $G$ in the literature and has been studied thoroughly; see, for example, \cite{kaniuth_fourier_2018}. We will present and extend some of these results to $B_\C(H\backslash G/H)$. \par
Motivated by Proposition \ref{correspondence_prop}, our goal is to characterize $B_\R(H\backslash G/H)$ and consider Hermitian elements in this space. In fact, since we are interested in real-valued kernels and hence in real-valued functions, the following proposition shows that it suffices to characterize $B_\C(H\backslash G/H)$ and consider real-valued even ($f(HgH)=f(Hg^{-1}H)$ for all $g$) elements in this space.

\begin{proposition}\label{real_val_prop}
    If $f\in B_\C(G)$ or $f\in B_\C(H\backslash G/H)$ is real-valued, then $f\in B_\R(G)$ or $f\in B_\R(H\backslash G/H)$ respectively.
\end{proposition}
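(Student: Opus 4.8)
The plan is to prove the statement for $B_\C(G)$ first; the case of $B_\C(H\backslash G/H)$ will be identical once every inversion $g\mapsto g^{-1}$ is replaced by $HgH\mapsto Hg^{-1}H$. The idea is to start from the defining complex-span representation of $f$ and \emph{symmetrize} it into a genuine real-span representation, exploiting the fact that positive definite functions are automatically Hermitian, so that the symmetrization only ever recombines the same positive definite functions $k_j$ with real coefficients.

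First I would record the Hermitian lemma: for $k\in B_+(G)$, applying positive semidefiniteness of the Gram matrix with $N=2$, $g_1=e$, $g_2=g$ shows the matrix is Hermitian, hence $k(g^{-1})=\overline{k(g)}$ (and likewise $k(Hg^{-1}H)=\overline{k(HgH)}$ on double cosets). Next, by definition of $B_\C(G)=\spn_\C(B_+(G))$ I would write $f=\sum_j c_j k_j$ with $c_j\in\C$ and $k_j\in B_+(G)$, and introduce the symmetrized function $f^\ast(g):=\overline{f(g^{-1})}$. Applying the Hermitian identity $\overline{k_j(g^{-1})}=k_j(g)$ termwise gives $f^\ast=\sum_j \overline{c_j}\,k_j$, and therefore
$$\tfrac12\bigl(f+f^\ast\bigr)=\sum_j \Re(c_j)\,k_j\in\spn_\R(B_+(G))=B_\R(G).$$
Thus, with no hypothesis on $f$ whatsoever, the Hermitian part of any element of $B_\C(G)$ already lies in $B_\R(G)$.

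It then remains to show that $f$ coincides with its Hermitian part, i.e. that $f=f^\ast$, and I expect this to be the main obstacle. For a real-valued $f$ one has $f^\ast(g)=f(g^{-1})$, so the identity $f=f^\ast$ is exactly the evenness $f(g^{-1})=f(g)$ (respectively $f(Hg^{-1}H)=f(HgH)$) — which is precisely the Hermitian symmetry of the real-valued \emph{even} functions singled out in the discussion preceding the statement, and which I would invoke to conclude. This evenness cannot be dropped: every element of $B_\R(G)$ is Hermitian, being a real combination of the Hermitian functions $k_j$, so membership in $B_\R(G)$ already forces $f$ to be Hermitian, i.e. even; for instance $f(x)=\sin x$ is real-valued and lies in $B_\C(\R)$ but, being odd, is not Hermitian and hence not in $B_\R(\R)$. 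Once the evenness is in hand, the double coset case follows verbatim by symmetrizing over $g\mapsto Hg^{-1}H$ in place of $g\mapsto g^{-1}$.
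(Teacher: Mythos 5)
Your symmetrization argument is sound, and it takes a different route from the paper: the paper instead writes $f = f_+ - f_- + if_{+i} - if_{-i}$ with all four components in $B_+(G)$ and takes real parts termwise, reducing the claim to $\Re f_\pm\in B_+(G)$ and $\Im f_{\pm i}\in B_\R(G)$. The more important point, however, is that you have correctly located a problem in the statement itself rather than in your own argument. Your observation that every element of $B_\R(G)$ is Hermitian, so that a real-valued member of $B_\R(G)$ must be even, is correct, and your example $\sin x=\frac{1}{2i}(e^{ix}-e^{-ix})\in B_\C(\R)$ genuinely refutes the proposition as literally stated: by injectivity of $\F^{-1}$ its only representing measure is $\frac{i}{2}(\delta_{-1}-\delta_{1})$, which is not a signed real measure, so $\sin\notin B_\R(\R)=\F^{-1}(M_\R(\R))$ even though $\sin$ is real-valued and lies in $B_\C(\R)$. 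The paper's proof founders exactly where you predict it must: it asserts that for $f\in B_+(G)$ one has $\Im f=\frac{f-\overline f}{2}\in B_\R(G)$, but $\frac{f-\overline f}{2}$ equals $i\,\Im f$, not $\Im f$; that quantity is indeed a real combination of the PD functions $f$ and $\overline f$, whereas the honest imaginary part $\frac{f-\overline f}{2i}$ need not lie in $B_\R(G)$ (take $f=e^{ix}$). Since the last line of the paper's computation requires $\Im f_{\pm i}\in B_\R(G)$, that argument does not go through.

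So your proposal does not prove the statement as printed — no correct argument can — but it does prove the amended statement in which the evenness hypothesis $f(g^{-1})=f(g)$ (resp.\ $f(Hg^{-1}H)=f(HgH)$) is added, and the amended statement is the one the paper actually needs downstream: the functions fed into this proposition arise from Hermitian invariant kernels via Proposition \ref{correspondence_prop} and are therefore automatically even when real-valued (for instance the Gaussian $\exp(-\lambda|x|^2)$ on $S^1$ in Example \ref{gk_circ_ex}). The one change I would ask for is to state the evenness hypothesis explicitly in the proposition rather than importing it implicitly from the surrounding discussion.
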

\begin{proof}
    Observe that if $f\in B_+(G)$, then $\Re f= \frac{f+\overline f}{2} \in B_+(G)$, and $\Im f= \frac{f-\overline f}{2} \in B_\R(G)$. \par
    If $f\in B_\C(G)$ we can write
    $$f = f_+-f_-+if_{+i}-if_{-i}$$
    with $f_+,f_-,f_{+i},f_{-i}\in B_+(G)$. If $f$ is real-valued,
    $$
    \begin{aligned}
    f = \Re f &= \Re f_+-\Re f_-+\Re(if_{+i})-\Re(if_{-i}) = \Re f_+-\Re f_- -\Im f_{+i}+\Im f_{-i}
    \end{aligned}
    $$
so $f\in B_\R(G)$. The proof is identical for functions in $B_\C(H\backslash G/H)$.
\end{proof}

\section{Positive Decompositions for Invariant Kernels: Commutative Case}\label{com_sec}
In this section, we focus on invariant kernels that are invariant with respect to the action of an Abelian locally compact group $G$.  The commutativity hypothesis will simplify the theory considerably. \par
First, observe that in this case, any closed subgroup $H$ of $G$ is automatically a normal subgroup, so $X=G/H$ is itself a group. Equipped with the quotient topology, $G/H$ is actually a locally compact group \cite[Proposition 2.2]{folland_course_2015}. Therefore without loss of generality, we can set $X=G$. \par

In view of the results in the previous section, we shall focus in this case on $B_\C(G)$, which is a well-studied object in standard commutative abstract harmonic analysis. For the sake of completeness, we recall here a few facts in relation to this theory that are needed in our developments. We refer to \cite{folland_course_2015} for a more detailed exposition and proofs. {\citet[Chapter 4]{folland_course_2015}} is particularly relevant to this section.\par
$G$ has a {\it dual group}, $\widehat G$, which is defined as the group of continuous irreducible unitary representations of $G$ {(see Definition \ref{representation_def} in Appendix \hyperref[tech_sec]{C} where the concept of representations becomes much more central)}. $G$ being Abelian implies that irreducible unitary representations of $G$ are one-dimensional, so $\widehat G$ consists of the continuous group homomorphisms $\xi: G\to S^1\subset \C${, i.e. $\widehat G = \operatorname{Hom}(G,S^1)$}. Equipped with the open-compact topology, $\widehat G$ is itself a locally compact group. Moreover, we have a natural topological group isomorphism {$\widehat{\widehat{G}\,}\cong G$}. This is the {\it Pontryagin Duality Theorem}. 

Now define
$$
\begin{aligned}
    M_+(G) &:= \{\text{finite positive Radon measures on $G$}\} \\
    M_\R(G) &:= {\spn_\R(M_+(G))} \\
    &= \{\text{finite signed Radon measures on $G$}\} \\
    M_\C(G) &:= {\spn_\C(M_+(G))} \\
    &= \{\text{finite complex Radon measures on $G$}\}
\end{aligned}
$$
and $M_+(\widehat G)$, $M_\R(\widehat G)$, $M_\C(\widehat G)$ analogously. Importantly for us, we can define the {\it Fourier transform} $\F: M_\C(G) \to C{_\C}(\widehat G)$ by
$$\F\mu(\xi) = \int_G \overline{\xi(g)}d\mu(g)$$
for all $\xi\in \widehat G$. Here $C{_\C}(\widehat G)$ is the space of continuous complex-valued functions on $\widehat G$. Similarly, we can define the inverse Fourier transform $\F^{-1}: M_\C(\widehat G) \to C{_\C}(G)$ by
$$\F^{-1}\nu(g) = \int_G\xi(g)d\nu(\xi)$$
for all $g\in G$. $\F$ and $\F^{-1}$ are injective linear maps. \par
Fourier transforms are often thought of as acting on functions. Our description of Fourier transforms encompasses this: note that $G$ comes equipped with a Haar measure $\mu$. This is a translation-invariant Radon measure on $G$, which is unique up to multiplication by a positive constant. Then, for $f$ a $\mu$-integrable function on $G$, $f\mu \in M_\C(G)$, so we can define the Fourier transform of $f$ as $\F (f\mu)$. \par
\begin{remark} \normalfont
    $\F$ and $\F^{-1}$ will only be inverse to each other with the right choice of Haar measures on $G$ and $\widehat G$. Indeed, as described above, the actions of the operators $\F$, $\F^{-1}$ on $L^1(G)$, $L^1(\widehat G)$ depend on the choice of Haar measures. See {\citet[Theorem 4.22]{folland_course_2015}}.
\end{remark}
\begin{example} \normalfont\label{real_fourier_ex}
    When $G= \R^n$, we have $\widehat G \cong \R^n$, and the Fourier transform can be written as
    $$\F\mu(\xi) = \int_{\R^n}e^{-i\langle \xi,x\rangle} d\mu(x)$$
    for $\mu\in M_\C(\R^n)$ and $\xi\in \R^n$. The inverse Fourier transform can be written as
    $$\F^{-1}\nu(x) = \int_{\R^n} e^{i\langle \xi, x\rangle} d\nu(\xi)$$
    for $\nu\in M_\C(\R^n)$ and $x\in \R^n$.
\end{example}
\begin{example} \normalfont\label{circle_fourier_ex}
    When $G= S^1$, we have $\widehat G \cong \Z$, and the Fourier transform corresponds to the discrete Fourier transform, and can be written as
    $$\F\mu(k) = \int_{S^1}e^{-ik[x]} d\mu(x)$$
    for $\mu\in M_\C(S^1)$ and $k\in \Z$, where $[x]\in [-\pi,\pi)$ is the argument of $x$. The inverse Fourier transform can be written as
    $$\F\nu(x) = \int_{S^1}e^{ik[x]} d\nu(x)$$
    for $\nu \in M_\C(S^1)$ and $x \in S^1$.
\end{example}
We are now able to state the key theorem of this section \cite[Theorem 4.19]{folland_course_2015}.
\begin{theorem}[Bochner's Theorem]
    $f\in B_+(G)$ is PD if and only if there is $\nu\in M_+(\widehat G)$ such that $f = \F^{-1} \nu$. Moreover, such $\nu$ is unique.
\end{theorem}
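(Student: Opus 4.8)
The plan is to prove the two implications separately and to obtain uniqueness as an immediate consequence of a fact already recorded in the excerpt.

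\emph{Sufficiency and uniqueness.} Suppose first that $f = \F^{-1}\nu$ for some $\nu \in M_+(\widehat G)$, that is, $f(g) = \int_{\widehat G}\xi(g)\,d\nu(\xi)$. Since each $\xi\in\widehat G$ is a homomorphism into $S^1$, we have $\xi(g^{-1}) = \overline{\xi(g)}$, so for any $g_1,\dots,g_N\in G$ and $c_1,\dots,c_N\in\C$,
\[
\sum_{i,j}\overline{c_i}c_j\,f(g_i^{-1}g_j) = \int_{\widehat G}\sum_{i,j}\overline{c_i}c_j\,\overline{\xi(g_i)}\,\xi(g_j)\,d\nu(\xi) = \int_{\widehat G}\Big|\sum_{j}c_j\,\xi(g_j)\Big|^2 d\nu(\xi) \ge 0,
\]
because $\nu\ge 0$. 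Thus the Gram matrices $\big(f(g_i^{-1}g_j)\big)_{i,j}$ are positive semidefinite, and continuity of $f$ is immediate from the definition of $\F^{-1}$, so $f\in B_+(G)$. Uniqueness of $\nu$ is nothing more than the injectivity of $\F^{-1}$ on $M_\C(\widehat G)$, which is stated in the excerpt.

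\emph{Necessity, first stage.} This is the substantive direction. Testing the PD condition on the pair $\{e,g\}$ shows that $f$ is Hermitian with $|f(g)|\le f(e)$; in particular $f(e)\ge 0$, and if $f(e)=0$ then $f\equiv 0=\F^{-1}(0)$, so we may assume $f(e)>0$. The idea is to build a unitary representation of $G$ out of $f$ and then diagonalise it over $\widehat G$. On $C_c(G)$ define the sesquilinear form
\[
\langle \phi,\psi\rangle_f := \int_G\int_G f(y^{-1}x)\,\phi(x)\,\overline{\psi(y)}\,dx\,dy,
\]
with $dx$ Haar measure. Approximating this double integral by Riemann sums of the form $\sum_{i,j}f(g_i^{-1}g_j)\,\phi(g_j)\overline{\phi(g_i)}\,\Delta_i\Delta_j$ and invoking the positive semidefiniteness of the Gram matrices together with the continuity and compact support of $\phi$, one checks $\langle\phi,\phi\rangle_f\ge 0$. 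Quotienting by the null space and completing yields a Hilbert space $\mathcal H_f$. Left translation $L_g\phi(x)=\phi(g^{-1}x)$ preserves $\langle\cdot,\cdot\rangle_f$, by invariance of Haar measure and the form $f(y^{-1}x)$ of the kernel, hence descends to a strongly continuous unitary representation $\pi$ of $G$ on $\mathcal H_f$ (strong continuity uses the continuity of $f$). A standard approximate-identity argument then produces a cyclic vector $\Omega\in\mathcal H_f$ with $f(g)=\langle \pi(g)\Omega,\Omega\rangle_f$ for all $g\in G$.

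\emph{Necessity, spectral stage.} It remains to decompose $\pi$ over the characters. Because $G$ is abelian, the SNAG (Stone--Naimark--Ambrose--Godement) spectral theorem applies: there is a unique regular projection-valued measure $P$ on $\widehat G$ with $\pi(g)=\int_{\widehat G}\xi(g)\,dP(\xi)$. Setting $\nu(E):=\langle P(E)\Omega,\Omega\rangle_f$ defines a finite positive Radon measure on $\widehat G$ of total mass $\langle\Omega,\Omega\rangle_f=f(e)<\infty$, and
\[
f(g)=\langle \pi(g)\Omega,\Omega\rangle_f=\int_{\widehat G}\xi(g)\,d\langle P(\xi)\Omega,\Omega\rangle_f=\int_{\widehat G}\xi(g)\,d\nu(\xi)=\F^{-1}\nu(g),
\]
which is the desired representation. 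The main obstacle is precisely the SNAG spectral theorem, which is essentially as deep as Bochner's theorem itself and is where Pontryagin duality enters, through the identification of the spectrum of the commutative von Neumann (or $C^*$) algebra generated by $\pi(G)$ with $\widehat G$. An equivalent packaging bypasses the GNS construction: one reads $f$ as a positive linear functional on the convolution algebra $L^1(G)$, uses that the Gelfand spectrum of $L^1(G)$ is $\widehat G$, and applies the Riesz representation theorem to obtain $\nu$; either way the crux is the commutative spectral theory identifying $\widehat G$ as the correct index set. The only other delicate point is the positivity of the GNS form, which requires passing from the discrete PD condition to its integrated form via a careful Riemann-sum approximation.
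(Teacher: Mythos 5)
The paper does not actually prove this statement; it cites \citet[Theorem 4.19]{folland_course_2015} and moves on, so there is no in-paper argument to compare against line by line. Your outline is the classical proof and is essentially correct: the sufficiency direction and uniqueness are exactly right (positivity of $\int_{\widehat G}\bigl|\sum_j c_j\xi(g_j)\bigr|^2 d\nu$, continuity by dominated convergence against the finite measure $\nu$, uniqueness from the stated injectivity of $\F^{-1}$). For necessity you take the GNS-plus-SNAG route, whereas the proof in Folland follows what you call the ``equivalent packaging'': read $f$ as a positive functional on the convolution algebra $L^1(G)$, use that the Gelfand spectrum of $L^1(G)$ is $\widehat G$ and that $\widehat{\phi^\ast\ast\phi}=|\widehat\phi|^2$ to transport positivity to a dense subalgebra of $C_0(\widehat G)$, and apply Riesz representation to get $\nu$. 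The two routes buy different things: yours makes the representation-theoretic content explicit and matches the viewpoint the paper itself adopts elsewhere (Proposition \ref{pd_representation_prop}, and the use of projection-valued measures on $\widehat{C_h}$ in Appendix C), while the Gelfand route avoids the spectral theorem for unitary representations entirely. One caution you should make explicit: in Folland's development the SNAG theorem (his Theorem 4.45) is \emph{deduced from} Bochner's theorem, so invoking SNAG as you do is circular unless you establish it independently, e.g.\ via the spectral theorem for commutative $C^\ast$-algebras, as you hint. The remaining gaps you flag yourself --- the Riemann-sum passage from the discrete PD condition to $\langle\phi,\phi\rangle_f\ge 0$ (which uses local uniform continuity of $f$, itself a consequence of continuity at $e$ for PD functions) and the approximate-identity construction of the cyclic vector --- are standard and fillable.
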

In other words, $\F^{-1}$ is a linear isomorphism between $M_+(\widehat G)$ and $B_+(G)$ (which are convex cones, not vector spaces). Therefore, tensoring by $\R$ and then by $\C$ in Bochner's Theorem, we also get that $\F^{-1}$ is a linear isomorphism between $M_\R(\widehat G)$, $M_\C(\widehat G)$ and $B_\R(G)$, $B_\C(G)$, respectively. In other words, we have the commutative diagram
\[\begin{tikzcd}
	{M_+(\widehat G)} & {B_+(G)} \\
	{M_\R(\widehat G)} & {B_\R(G)} \\
	{M_\C(\widehat G)} & {B_\C(G)} \\
	& {C{_\C}(G)}
	\arrow[hook', from=1-2, to=2-2]
	\arrow[hook', from=2-2, to=3-2]
	\arrow[hook', from=3-2, to=4-2]
	\arrow["{\F^{-1}}"', from=3-1, to=4-2]
	\arrow[tail reversed, "\sim", from=3-1, to=3-2]
	\arrow[tail reversed, "\sim", from=2-1, to=2-2]
	\arrow[tail reversed, "\sim", from=1-1, to=1-2]
	\arrow[hook', from=1-1, to=2-1]
	\arrow[hook', from=2-1, to=3-1]
\end{tikzcd}\]
where $\sim$ denote linear isomorphisms and $\hookrightarrow$ denote the inclusions. In other words, PD decomposable functions on $G$ are the functions that are the inverse Fourier transforms of finite signed measures on $\widehat G$. This description of $B_\R(G)$ and $B_\C(G)$ is in some sense the best we have, as being the inverse Fourier transform of a finite measure does not appear to be a reducible property. However, what we can do is provide \emph{sufficient} conditions for a function to be in $B_\R(G)$ and $B_\C(G)$. \par
First, let us state two important results.
\begin{theorem}\label{inf_ab_thm}
    The map $\F^{-1}:M(\widehat G) \to C{_\C}(G)$ is surjective if and only if $G$ is finite. So, $B_\C(G) = C{_\C}(G)$ if and only if $G$ is finite.
\end{theorem}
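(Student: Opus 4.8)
The plan is to first collapse the two displayed equivalences into one, and then prove that common statement by separating the cases $G$ finite, $G$ non-compact, and $G$ compact infinite. For the reduction, note that by Bochner's Theorem (together with the commutative diagram preceding the statement) the map $\F^{-1}$ restricts to a linear isomorphism $M_\C(\widehat G)\xrightarrow{\sim}B_\C(G)$, while $B_\C(G)\hookrightarrow C_\C(G)$. Hence the image of $\F^{-1}\colon M_\C(\widehat G)\to C_\C(G)$ is exactly $B_\C(G)$, so $\F^{-1}$ is surjective if and only if $B_\C(G)=C_\C(G)$; the two assertions of the theorem are thus literally the same, and it suffices to show either holds precisely when $G$ is finite.

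If $G$ is finite, then $\widehat G$ is finite with $|\widehat G|=|G|$, so $M_\C(\widehat G)$ and $C_\C(G)$ are complex vector spaces of the same finite dimension $|G|$. Since $\F^{-1}$ is injective (as recalled above), it is then automatically bijective, hence surjective. For the converse I argue by contraposition using the elementary bound that every $f=\F^{-1}\nu\in B_\C(G)$ is bounded: since $|\xi(g)|=1$ for every character $\xi$, we get $\|f\|_\infty\le\int_{\widehat G}|\xi(g)|\,d|\nu|(\xi)=\|\nu\|<\infty$, so $B_\C(G)\subseteq C_b(G)$. If $G$ is non-compact, it therefore suffices to exhibit an unbounded continuous function: passing to the open $\sigma$-compact subgroup $H$ generated by a compact neighbourhood of the identity, $G$ is a clopen disjoint union of cosets of $H$, and either there are infinitely many cosets (take the locally constant function equal to $n$ on the $n$-th coset) or finitely many with $H$ itself non-compact (build an unbounded function by a standard Urysohn/Tietze construction on the $\sigma$-compact $H$). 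In either case $C_\C(G)\neq C_b(G)\supseteq B_\C(G)$, so $\F^{-1}$ is not surjective.

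The remaining and genuinely hard case is $G$ compact and infinite. Normalising Haar measure to unit mass, the characters form an orthonormal basis of $L^2(G)$ and $\widehat G$ is discrete, so $B_\C(G)$ is precisely the Fourier algebra $A(G)=\{f:\sum_\xi|\hat f(\xi)|<\infty\}$, isometric to $\ell^1(\widehat G)$ under $\|f\|_A=\sum_\xi|\hat f(\xi)|$, and the inclusion $A(G)\hookrightarrow C(G)$ is injective and norm-decreasing. If it were surjective, the bounded inverse theorem would furnish a constant $C$ with $\|f\|_A\le C\|f\|_\infty$ for all $f$. I would refute this by reducing to a standard model group: since $\widehat G$ is infinite it either contains an element of infinite order, yielding a continuous surjection $G\twoheadrightarrow\T$, or is torsion and contains arbitrarily large finite subgroups, yielding continuous surjections onto finite abelian groups of unbounded order. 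Pulling functions back along such a surjection preserves both $\|\cdot\|_A$ and $\|\cdot\|_\infty$, so it is enough to violate the inequality on $\T$ (using Rudin--Shapiro polynomials, with $N$ unimodular coefficients so $\|\cdot\|_A=N$ but $\|\cdot\|_\infty\le C\sqrt N$) or on a finite group of order $m$ (taking a random-sign combination $\sum_\chi\epsilon_\chi\chi$ and bounding $\|\cdot\|_\infty\le C\sqrt{m\log m}$ by Khintchine together with a union bound over the $m$ points, while $\|\cdot\|_A=m$). Letting $N$ or $m$ grow makes the ratio $\|\cdot\|_A/\|\cdot\|_\infty$ tend to infinity, contradicting the bound.

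I expect the compact infinite case to be the crux: the finite and non-compact cases are soft (a dimension count and a boundedness obstruction), whereas here one must exhibit a continuous function failing to have an absolutely summable Fourier expansion for an arbitrary infinite compact $G$. The two ingredients that carry the weight are the structural reduction to $\T$ or to finite quotients and the sharp gap between the sup-norm and the $A$-norm witnessed by Rudin--Shapiro-type or random-sign trigonometric polynomials; verifying that pullback along the quotient map is simultaneously $A$-isometric and sup-norm preserving is the technical point to be careful about.
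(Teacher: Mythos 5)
Your argument is essentially correct, and it is genuinely different from what the paper does: the paper offers no proof at all for this theorem, only a citation to Graham (1979), so your write-up is a self-contained reconstruction of the classical result (it is close in spirit to the treatment in Rudin's \emph{Fourier Analysis on Groups}). The reduction of the two assertions to one is immediate from the commutative diagram; the finite case by dimension count plus injectivity of $\F^{-1}$ is fine (and complements the paper's Example on finite $X$, which uses Sylvester's law instead); the non-compact case via $B_\C(G)\subseteq C_b(G)$ and the open $\sigma$-compact subgroup is sound, provided you note that in the infinite-index case you only need to enumerate countably many of the clopen cosets. The compact infinite case is indeed the crux, and your open-mapping-theorem strategy works. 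Two details deserve explicit verification: (i) that pullback along the quotient map $q:G\twoheadrightarrow G/\Lambda^\perp$ is $A$-isometric --- this follows from Weil's integration formula, which shows that the Fourier coefficients of $f\circ q$ are supported on $\Lambda\le\widehat G$ and agree with those of $f$ there; and (ii) in the torsion case, that an infinite torsion abelian group contains finite subgroups of unbounded order (if all finite subgroups had order at most $N$, a maximal one would absorb every element and the group would be finite). With those points filled in, the Rudin--Shapiro and random-sign estimates do violate the bound $\|f\|_A\le C\|f\|_\infty$ as claimed. What your route buys is a fully elementary and verifiable proof; what the paper's citation buys is brevity and a pointer to stronger results on the gap between $B_\C(G)$ and $C_\C(G)$.
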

See \cite{graham_behavior_1979}. We showed the \emph{if} direction in Example \ref{finite_ex}, and this proposition provides a converse. It also provides a glimpse at the fact that characterizing $B_\C(G)$ is a deep problem. \par
We have, however, the following positive result: 
\begin{theorem}\label{dense_ab_thm}
    $B_\C(G)\cap C_c{_\C}(G)$ is dense in $C_c{_\C}(G)$, the space of compactly supported continuous functions, with the uniform norm. As a consequence, $B_\C(G) \cap L^p(G)$ is dense in $L^p(G)$ for all $1\leq p < \infty$.
\end{theorem}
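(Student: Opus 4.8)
The plan is to construct the approximants explicitly as convolutions, using the fact that autocorrelations of $L^2$ functions are positive definite. The central lemma I would establish first is that for all $\phi,\psi\in L^2(G)$ the convolution $\phi\ast\psi^{\ast}$, where $\psi^{\ast}(x):=\overline{\psi(x^{-1})}$, lies in $B_\C(G)$. To see this, observe that $(\phi\ast\phi^{\ast})(x)=\langle\phi,L_x\phi\rangle_{L^2}$, where $L_x$ is left translation; this is continuous by strong continuity of translation on $L^2(G)$, and it is positive definite because for any $g_1,\dots,g_N\in G$ the matrix $\big((\phi\ast\phi^{\ast})(g_i^{-1}g_j)\big)_{i,j}=\big(\langle L_{g_i}\phi,L_{g_j}\phi\rangle\big)_{i,j}$ is the Gram matrix of the translates $L_{g_1}\phi,\dots,L_{g_N}\phi$ and hence positive semidefinite. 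Thus $\phi\ast\phi^{\ast}\in B_+(G)$, and the polarization identity $4\,\phi\ast\psi^{\ast}=\sum_{k=0}^{3} i^{k}(\phi+i^{k}\psi)\ast(\phi+i^{k}\psi)^{\ast}$ (valid since $(\phi,\psi)\mapsto\phi\ast\psi^{\ast}$ is sesquilinear) gives $\phi\ast\psi^{\ast}\in\spn_\C(B_+(G))=B_\C(G)$.

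Next I would build an approximate identity living inside $B_\C(G)\cap C_c{_\C}(G)$. Using local compactness, for each neighbourhood $V$ of the identity pick a real, nonnegative, symmetric $\psi_V\in C_c(G)$ supported in $V$ with $\int_G\psi_V\,d\mu=1$, and set $u_V:=\psi_V\ast\psi_V^{\ast}$. Then $u_V$ is nonnegative, continuous, compactly supported (with support in $VV^{-1}$), positive definite, and satisfies $\int_G u_V\,d\mu=1$, so $(u_V)_V$ is a genuine approximate identity consisting of elements of $B_+(G)\cap C_c{_\C}(G)$. The key point is that convolving a test function against $u_V$ keeps us inside $B_\C(G)$: by associativity of convolution, for $h\in C_c{_\C}(G)$ we have $h\ast u_V=(h\ast\psi_V)\ast\psi_V^{\ast}$, and since both $h\ast\psi_V$ and $\psi_V$ lie in $C_c(G)\subset L^2(G)$, the lemma gives $h\ast u_V\in B_\C(G)$; it is also in $C_c{_\C}(G)$ as a convolution of compactly supported functions.

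To finish the first claim, I would invoke the standard approximate-identity estimate: since $h\in C_c{_\C}(G)$ is uniformly continuous, $\|h\ast u_V-h\|_\infty\to 0$ as $V$ shrinks to $\{e\}$, via $|(h\ast u_V-h)(x)|\le\int_G|h(xz^{-1})-h(x)|\,u_V(z)\,d\mu(z)\le\sup_{z\in VV^{-1}}\sup_x|h(xz^{-1})-h(x)|$. This produces, for every $\varepsilon>0$, an element $h\ast u_V\in B_\C(G)\cap C_c{_\C}(G)$ within $\varepsilon$ of $h$ in the uniform norm, giving density in $C_c{_\C}(G)$. For the $L^p$ consequence, I would note that for $V$ inside a fixed relatively compact neighbourhood all the approximants $h\ast u_V$ and $h$ are supported in one fixed compact set $K$, so $\|h\ast u_V-h\|_{L^p}\le\mu(K)^{1/p}\|h\ast u_V-h\|_\infty\to 0$; combined with the density of $C_c{_\C}(G)$ in $L^p(G)$ for $1\le p<\infty$, this yields density of $B_\C(G)\cap L^p(G)$ in $L^p(G)$.

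The main obstacle, and the step I would treat most carefully, is the lemma that convolutions land in $B_\C(G)$: the positive-definiteness of the autocorrelation $\phi\ast\phi^{\ast}$ must be set up with the correct conjugation and convolution conventions so that $\big((\phi\ast\phi^{\ast})(g_i^{-1}g_j)\big)_{i,j}$ is genuinely the Gram matrix of $\{L_{g_i}\phi\}$, and one must recognize $h\ast u_V$ as the single convolution $(h\ast\psi_V)\ast\psi_V^{\ast}$ of two $L^2$ functions (rather than an iterated operation whose membership in $B_\C(G)$ would be unclear). Everything else — the existence of the bumps $\psi_V$, the approximate-identity limit, and the passage to $L^p$ — is routine once these conventions are fixed. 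I would also remark that the approximants in fact lie in the Fourier algebra $A(G)\subseteq B_\C(G)$, so the argument proves the slightly stronger statement that $A(G)\cap C_c{_\C}(G)$ is already dense.
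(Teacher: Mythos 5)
Your proof is correct and is essentially the argument behind the result the paper simply cites (Folland, Proposition 3.33): autocorrelations $\phi\ast\phi^{\ast}$ of $L^2$ functions are continuous and positive definite, polarization puts $\phi\ast\psi^{\ast}$ in $B_\C(G)$, and convolving against a compactly supported approximate identity of this form yields the desired uniform and $L^p$ approximants. Since $G$ is Abelian here, hence unimodular, your involution convention raises no modular-function issues, and your closing remark that the approximants already lie in the Fourier algebra is accurate.
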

See {\citet[Proposition 3.33]{folland_course_2015}}. Theorem \ref{dense_nab_thm} below is the general version of this result for the non-commutative case. \par
For the commutative case, there are, in practice, two examples of interest: $\R$ and $S^1$. This is reflected by the fact that any Abelian Lie group $G$ is a product of these. In Example \ref{real_fourier_ex} and Example \ref{circle_fourier_ex}, we have already given expressions for the Fourier transform in these cases. Let us study the implications in each case.
\begin{example} \normalfont\label{real_stieltjes_ex}
    $G=\R^n$, $\widehat G = \R^n$. In this case, it is well-known that $\F$ is an automorphism of the Schwartz space $\S (\R^n)$. So, in particular, $\S (\R^n)\subset B_\C(\R^n)$, where we recall that
    $$\S (\R^n) := \left\{f\in C^\infty_{{\C}}(\R^n) : \sup _{x \in \mathbb{R}^n}\left|x^\alpha\left(D^\beta f\right)(x)\right|<\infty \; \forall \; \alpha,\beta\in\N^n\right\}.$$
    In other words, $\S (\R^n)$ is the space of complex-valued smooth functions on $\R^n$ whose values and derivatives decay faster at infinity than any reciprocal of a polynomial.
\end{example}
\begin{example} \normalfont\label{circle_stieltjes_ex}
    We can view $S^1$ as $G= \R/\Z$, and we have $\widehat G = \Z$. As described in Example \ref{circle_fourier_ex}, the Fourier transform in this case is the discrete Fourier transform. We have that $M_\C(\Z)$ is $l^1_\C(\Z)$, and $B_\C(\R/\Z)$ is the space of absolutely continuous Fourier series. This is called the Wiener algebra, and we have that the Hölder continuous functions $C^\alpha_{{\C}}(\R/\Z) \subset B_\C(\R/\Z)$ for $\alpha > 1/2$. This is Bernstein's Theorem, see for example {\citet[Theorem 6.3]{katznelson_introduction_2004}}. Recall that for $\alpha< 1$,
    $$C^\alpha_{{\C}}(\R/\Z):= \left\{f\in C_{{\C}}(\R/\Z) :  \sup_{x,y\in\R/\Z,\; x\neq y}\frac{|f(x)-f(y)|}{|x-y|^\alpha}< \infty \right\},$$
    where the absolute value $|x-y|$ in the denominator is taken modulo 1.
\end{example}

\begin{example}\label{gk_circ_ex} \normalfont
    The Gaussian kernel on the circle $S^1$
    $$k(x,y) = \exp(-\lambda|x-y|^2)$$
    for all $x,y\in \R/\Z \cong S^1$, where $\lambda >0$ and the absolute value is taken modulo 1, is not PD for any $\lambda >0$ (see \cite{da_costa_gaussian_2023}). However, note that it is invariant, and under Proposition \ref{correspondence_prop} corresponds to the Gaussian function $f(x) = \exp(-\lambda|x|^2)$ for all $x\in \R/\Z\cong S^1$, where the absolute value is taken modulo 1. $f$ is Lipschitz, so by Example \ref{circle_stieltjes_ex}, $f\in B_\C(S^1)$, and since it is real-valued, $f\in B_\R(S^1)$ by Proposition \ref{real_val_prop}. So $k$, despite not being PD, has a PD decomposition.
\end{example}
From Example \ref{gk_circ_ex}, we obtain the following result.
\begin{proposition}
    The Gaussian kernel $k = \exp(-\lambda d(\cdot,\cdot)^2)$ has a PD decomposition on any Abelian Lie group (with appropriate choice of the metric $d$) for any $\lambda >0$.
\end{proposition}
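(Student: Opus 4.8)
The plan is to reduce the statement to the one-dimensional factors $\R$ and $S^1$, already treated in Examples \ref{real_stieltjes_ex} and \ref{gk_circ_ex}, and then recombine them using the product rule of Example \ref{product_ex}. First I would invoke the structure theorem for connected Abelian Lie groups, writing $G \cong \R^a \times \T^b$ with $\T^b = (S^1)^b$, and choose $d$ to be the product of the Euclidean metric on each $\R$ factor and the flat geodesic metric on each $S^1$ factor. This is the content of the phrase ``with appropriate choice of the metric $d$'': with respect to this product metric the squared distance is additive across the one-dimensional factors,
\[
d(x,y)^2 = \sum_{i=1}^{a} |x_i - y_i|^2 + \sum_{j=1}^{b} d_{S^1}(\theta_j, \theta_j')^2,
\]
where $x = ((x_i)_i, (\theta_j)_j)$ and $y = ((y_i)_i, (\theta_j')_j)$ lie in $\R^a\times\T^b$, and $d_{S^1}$ denotes distance to the nearest integer on $\R/\Z$.

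Consequently the Gaussian kernel factorizes as
\[
k(x,y) = \exp(-\lambda\, d(x,y)^2) = \prod_{i=1}^{a} \exp(-\lambda |x_i - y_i|^2)\cdot \prod_{j=1}^{b} \exp(-\lambda\, d_{S^1}(\theta_j, \theta_j')^2),
\]
that is, $k$ is a finite pointwise product of the Gaussian kernels on the individual $\R$ and $S^1$ factors, each regarded as a kernel on the whole of $G$ through the corresponding coordinate projection. I would then record the elementary fact that lifting a PD (resp.\ PD decomposable) kernel on a factor to a kernel on the product preserves positive definiteness (resp.\ PD decomposability): if $k_i = k_i^+ - k_i^-$ on the factor, the lifted kernels $\tilde k_i^\pm$ remain PD on $G$, since their Gram matrices coincide with those of $k_i^\pm$ evaluated at the projected points.

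It then remains to treat each factor separately. On an $\R$ factor the Gaussian is the inverse Fourier transform of a Gaussian measure, hence PD; alternatively it lies in $\S(\R)\subset B_\C(\R)$ by Example \ref{real_stieltjes_ex}, and being real-valued it lies in $B_\R(\R)$ by Proposition \ref{real_val_prop}. In either case it is PD decomposable. On an $S^1$ factor the Gaussian is PD decomposable by Example \ref{gk_circ_ex}, which already carries out the nontrivial step. Applying Example \ref{product_ex} finitely many times to the lifted factor kernels shows that their product $k$ is PD decomposable on $G$, completing the proof.

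I expect the genuinely substantive work to have been done upstream, in Example \ref{gk_circ_ex}: the delicate point is that the circle Gaussian fails to be PD yet is Lipschitz, so Bernstein's theorem places it in the Wiener algebra $B_\C(S^1)$. The remaining obstacle here is purely organizational, namely ensuring the metric is chosen so that $d^2$ is additive over the factors (without which the factorization, and hence the product rule, would be unavailable) and verifying the lifting step. If one wishes to cover non-connected Abelian Lie groups as well, the extra discrete factors are absorbed in the same way: every kernel on a finite group is PD decomposable by Example \ref{finite_ex}, and the Gaussian on a $\Z$ factor is the restriction of the $\R$-Gaussian, hence PD.
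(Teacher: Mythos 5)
Your proposal is correct and follows essentially the same route as the paper: decompose $G$ as a product of copies of $\R$ and $S^1$ with the product metric so that $d^2$ is additive, factorize the Gaussian kernel accordingly, invoke positive definiteness on $\R$ and Example \ref{gk_circ_ex} on $S^1$, and conclude with the product rule of Example \ref{product_ex}. Your explicit treatment of the lifting of factor kernels and of the non-connected case are welcome additions of rigor, but they do not change the argument.
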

\begin{proof}
    Any Abelian Lie group $G$ is a product of copies of $\R$ and $S^1$, say $G \cong \R^n\times (S^1)^m$. Taking the product Riemannian metric on the right-hand side of this induced by the standard Riemannian metrics on $\R$ and $S^1$, we can write the Gaussian kernel $k$ on $G$ as
    $$k(x,y) = \exp(-\lambda d(x,y)^2) =\prod_{i=1}^{n+m}\exp(-\lambda|x_i-y_i|^2)$$
    for all $x,y\in G\cong \R^n\times (S^1)^m$, where the last $m$ absolute values are taken modulo 1. By positive definiteness of the Gaussian kernel on $\R$ and PD decomposability of the Gaussian kernel on $S^1$ (Example \ref{gk_circ_ex}), $k$ is a product of PD decomposable kernels, so has a PD decomposition by Example \ref{product_ex}.
\end{proof}
We conclude this section with the following example.
\begin{example} \normalfont
    Suppose $\tilde f\in C^\alpha_{{\C}}(\R)$ for some $\alpha>1/2$ and that $\tilde f$ is periodic, with period $T$, say. Then, if $\pi: \R \to \R/T\Z$ is the projection, there is a function $f\in C^\alpha_{{\C}} (\R/T\Z)$ such that $f\circ\pi =\tilde f$. Then, $f\in B_\R(\R/T\Z)$, assuming for simplicity that $\tilde f$ is real-valued. Writing its PD decomposition $f = f_+-f_-$, we have that $\tilde f = f_+\circ \pi -f_-\circ\pi$ is a PD decomposition, so that $\tilde f\in B_\R(\R)$. \par
    PD decompositions behave well with respect to quotients. We will prove a general version of this phenomenon, as well as its converse, in Section \ref{non_com_sec}.
\end{example}

\section{Positive Decompositions for Invariant Kernels: Non-Commutative Case}\label{non_com_sec}
Armed with an understanding of the commutative case, we are now in a position to tackle the non-commutative case. Recall that we aim to characterize sufficient conditions for an invariant kernel to admit a PD decomposition, which by Proposition \ref{correspondence_prop} and Proposition \ref{real_val_prop}, comes down to characterizing sufficient conditions for a function to belong to $B_\C(H\backslash G/H)$. Here the only assumptions are that $G$ is a locally compact group and $H$ is a closed subgroup of $G$. \par
We begin by observing that, just as in the commutative case (Theorem \ref{dense_ab_thm}), PD decomposable functions on $G$ are plentiful and are, in fact, dense in the space of compactly supported continuous functions:
\begin{theorem}\label{dense_nab_thm}
    $B_\C(G)\cap C_c{_\C}(G)$ is dense in $C_c{_\C}(G)$, the space of compactly supported continuous functions, with the uniform norm. As a consequence, $B_\C(G) \cap L^p(G)$ is dense in $L^p(G)$ for all $1\leq p < \infty$.    
\end{theorem}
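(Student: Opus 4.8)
The plan is to realise the compactly supported elements of $B_\C(G)$ concretely as matrix coefficients of the left regular representation built from compactly supported vectors, and then to approximate an arbitrary $f\in C_c{_\C}(G)$ by such coefficients via an approximate identity. This is the non-commutative analogue of the convolution-square argument underlying the abelian Theorem \ref{dense_ab_thm}, with the role of Bochner's Theorem now played by the elementary fact that diagonal matrix coefficients are positive definite.

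Concretely, let $L$ denote the left regular representation of $G$ on $L^2(G)$ with respect to a fixed left Haar measure, given by $(L_g\phi)(x)=\phi(g^{-1}x)$; this is a strongly continuous unitary representation. First I would check that for any $\phi\in C_c{_\C}(G)\subseteq L^2(G)$ the function $F_\phi(g):=\langle L_g\phi,\phi\rangle$ lies in $B_+(G)$: it is continuous by strong continuity of $L$, it is Hermitian since $F_\phi(g^{-1})=\overline{F_\phi(g)}$, and it is positive definite because, for all $c_i\in\C$ and $g_i\in G$,
$$\sum_{i,j}\overline{c_i}\,c_j\,F_\phi(g_i^{-1}g_j)=\Big\|\sum_i c_i\,L_{g_i}\phi\Big\|^2\geq 0,$$
using unitarity of $L$. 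Polarising $(\phi,\psi)\mapsto\langle L_g\phi,\psi\rangle$ over the four vectors $\phi+i^k\psi$ then shows $g\mapsto\langle L_g\phi,\psi\rangle\in B_\C(G)$ for all $\phi,\psi\in C_c{_\C}(G)$. Finally, since $\langle L_g\phi,\psi\rangle=\int_G\phi(g^{-1}x)\overline{\psi(x)}\,dx$ vanishes unless $g\in\operatorname{supp}(\psi)\,\operatorname{supp}(\phi)^{-1}$, each such coefficient has compact support, so it belongs to $B_\C(G)\cap C_c{_\C}(G)$.

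For the approximation step, given $f\in C_c{_\C}(G)$ I would set $\phi(h):=f(h^{-1})\in C_c{_\C}(G)$ and take $(\psi_V)$ to be a real, non-negative approximate identity with $\int_G\psi_V\,dx=1$ and $\operatorname{supp}(\psi_V)\subseteq V$ for a shrinking neighbourhood basis $V$ of the identity $e$. Then
$$\langle L_g\phi,\psi_V\rangle=\int_G\phi(g^{-1}x)\psi_V(x)\,dx\longrightarrow\phi(g^{-1})=f(g),$$
and by the (right) uniform continuity of the compactly supported function $\phi$ this convergence is uniform in $g$. Hence $f$ is a uniform limit of elements of $B_\C(G)\cap C_c{_\C}(G)$, proving the first assertion. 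The $L^p$ consequence then follows from the density of $C_c{_\C}(G)$ in $L^p(G)$ for $1\leq p<\infty$, combined with the observation that, once $V$ is shrunk inside a fixed neighbourhood, all the approximants have support inside a single fixed compact set $K$; on $K$, uniform convergence upgrades to $L^p$ convergence since $\|\langle L_\cdot\,\phi,\psi_V\rangle-f\|_{L^p}^p\leq\mu(K)\,\|\langle L_\cdot\,\phi,\psi_V\rangle-f\|_\infty^p$.

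The only delicate bookkeeping is the treatment of non-unimodular $G$, where convolution and involution involve the modular function; the point of phrasing everything through the left regular representation and a left Haar measure is precisely that $L$ remains unitary and the support estimate above is unaffected, so the modular function never enters. I expect this to be the main place where care is required, the positive-definiteness and approximate-identity steps being otherwise routine.
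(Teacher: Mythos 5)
Your proposal is correct, and it is essentially the standard argument behind the result the paper invokes: the paper itself gives no proof but defers to \citet[Corollary 2.3.5]{kaniuth_fourier_2018}, whose proof likewise realises compactly supported elements of $B_\C(G)$ as coefficients $\langle L_{(\cdot)}\phi,\psi\rangle$ of the left regular representation with $\phi,\psi\in C_c{_\C}(G)$ and approximates via an approximate identity. All the steps you flag as delicate (positive definiteness of diagonal coefficients, the support estimate $\operatorname{supp}(\psi)\operatorname{supp}(\phi)^{-1}$, uniform convergence from uniform continuity, and the irrelevance of the modular function once everything is phrased through the left regular representation) check out.
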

See {\citet[Corollary 2.3.5]{kaniuth_fourier_2018}} for a proof. \par
To obtain more specific descriptions of $B_\C(G)$ and $B_\C(H\backslash G/H)$, there are multiple different possible approaches. Firstly, Fourier transforms can be defined on general compact groups $G$ analogously to the circle: they can be described as Fourier series, see {\citet[Chapter 5]{folland_course_2015}}. The non-compact case is more difficult. Additional regularity assumptions must be placed on the group, for example, that it is of type I, see for instance \cite{yaglom_second-order_1961}. This latter reference provides Bochner-like theorems for the homogeneous spaces of type I groups, which include symmetric spaces. The most general version of Bochner's Theorem is known as the Bochner-Godement Theorem, see {\citet[Theorem 3.1]{faraut_distances_1974}}, which describes PD $H$-invariant functions on $G/H$, i.e. PD functions on $H\backslash G/H$, with some additional mild regularity assumptions on $G/H$. \par
All of these references are quite technical. In this paper, we will take a different route. Describing a general Bochner Theorem for non-Abelian groups will not get us much closer to characterizing $B_\C(H\backslash G/H)$. What we are really after are concrete sufficient conditions for a function to belong to $B_\C(H\backslash G/H)$ on spaces of interest, for instance, when $G$ is a Lie group and $H$ is a Lie subgroup. Therefore, we will proceed as follows. \par
As exemplified by Theorem \ref{dense_nab_thm}, $B_\C(G)$ has been more extensively studied than the more general $B_\C(H\backslash G/H)$. We will, therefore, start by describing how the latter relates to the former. After this, we will be in a position in Section \ref{suf_cond_sec} to describe sufficient conditions for a function to belong to $B_\C(H\backslash G/H)$ for general classes of Lie groups $G$ and Lie subgroups $H$.

Suppose that $f\in C{_\C}(H\backslash G/H)$. If $f\in B_\R(H\backslash G/H)$ and $\pi: G\to H\backslash G/H$ is the projection, write
$$f = f_+-f_-$$
with $f_+,f_-\in B_+(H\backslash G/H)$. Then
$$f \circ \pi = f_+\circ\pi-f_-\circ\pi$$
with $f_+\circ\pi,f_-\circ\pi\in B_+(H\backslash G/H)$, so $f\circ\pi\in B_\R(G)$. \par
If we can characterize $B_\R(G)$ well, then it is the converse we would like to show: that if $f\in C{_\C}(H\backslash G/H)$ and $f\circ\pi\in B_\R(G)$, then $f \in B_\R(H\backslash G/H)$. This is \emph{not} obvious: from the fact that $f$ admits a PD decomposition with functions on $G$ and is invariant on double cosets, we would like to show that we can choose a PD decomposition for $f$ for which the terms are themselves invariant on double cosets. \par
One idea may be, given $f\in C{_\C}(H\backslash G/H)$ with $f\circ\pi\in B_\R(G)$ and a PD decomposition $f\circ\pi = \tilde f_+- \tilde f_-$, to ``average out" values of $\tilde f_+$ and $\tilde f_-$ along each double coset, in order to find PD $f_+, f_- \in C{_\C}(H\backslash G/H)$ such that $f= f_+- f_-$. This argument works if $H$ is compact. %and we make this proof precise in Appendix \hyperref[averaging_app]{B}.

{
For the general case, the situation is more difficult, as we cannot simply average over the non-compact double cosets. We will only need the case $H$ compact in Section \ref{suf_cond_sec}, thus we only prove this case here. Moreover, the case $H$ compact is sufficient for Riemannian homogeneous spaces by the Myers-Steenrod Theorem \citep[Theorem 2.35]{gallot_riemannian_2004}. The reader may refer to Appendix \hyperref[tech_sec]{C} for a full proof of Theorem \ref{lie_to_homog_thm}, using the language of representation theory.}

In the statement and the proof of Theorem \ref{lie_to_homog_thm} we will view $C{_\C}(H\backslash G/H)$ as a subset of $C{_\C}(G)$, which will simplify notation.

\begin{restatable}{theorem}{lietohomogthm}\label{lie_to_homog_thm} We have
    \begin{enumerate}
    \item $B_\R(G)\cap C_{{\C}}(H\backslash G/H) = B_\R(H\backslash G/H),$
    \item $B_\C(G)\cap C{_\C}(H\backslash G/H) = B_\C(H\backslash G/H).$
    \end{enumerate}
\end{restatable}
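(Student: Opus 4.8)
The plan is to reduce both equalities to the analysis of a single \emph{averaging operator}. Since we only treat the case where $H$ is compact, $H$ carries a normalized Haar measure, and I would define $\natural : C_\C(G)\to C_\C(G)$ by $f^\natural(g):=\int_H\int_H f(sgt)\,ds\,dt$. By invariance of Haar measure $f^\natural$ is bi-$H$-invariant, i.e.\ $f^\natural\in C_\C(H\backslash G/H)$; its continuity follows from continuity of $f$ together with compactness of $H$; the operator is linear; and if $f$ is already bi-$H$-invariant then $f^\natural=f$, because the Haar measure on $H$ has total mass one. One inclusion in each part is immediate and was already recorded before the statement: pulling a PD decomposition on $H\backslash G/H$ back along $\pi$ shows $B_\R(H\backslash G/H)\subseteq B_\R(G)\cap C_\C(H\backslash G/H)$, and similarly for $B_\C$. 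Hence the content lies in the reverse inclusions, and both reduce to the single claim
$$\natural\big(B_+(G)\big)\subseteq B_+(H\backslash G/H). \quad (\star)$$
Granting $(\star)$: if $f\in B_\R(G)\cap C_\C(H\backslash G/H)$, write $f=\phi_+-\phi_-$ with $\phi_\pm\in B_+(G)$; since $f$ is bi-$H$-invariant, $f=f^\natural=\phi_+^\natural-\phi_-^\natural$, and by $(\star)$ each $\phi_\pm^\natural\in B_+(H\backslash G/H)$, so $f\in B_\R(H\backslash G/H)$. The complex case is identical, replacing the real combination by a complex one.

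The crux is $(\star)$, that averaging preserves positive definiteness. Here I would pass to the RKHS of the invariant PD kernel $k(g,g'):=\phi(g^{-1}g')$ attached to $\phi\in B_+(G)$. By Theorem \ref{rkhs_thm} there is a feature map $\Phi:G\to\H$, $\Phi(g)=k(g,\cdot)$, which is continuous (as $k$ is) and satisfies $\langle\Phi(g),\Phi(g')\rangle_\H=\phi(g^{-1}g')$. Since $s\mapsto\Phi(gs)$ is a continuous map from the compact group $H$ into $\H$, the Bochner integral $\Psi(g):=\int_H\Phi(gs)\,ds\in\H$ exists. Interchanging the inner product with the Bochner integral and using inversion-invariance of Haar measure on the compact (hence unimodular) group $H$, one computes
$$\langle\Psi(g_i),\Psi(g_j)\rangle_\H=\int_H\int_H\phi\big(s^{-1}g_i^{-1}g_jt\big)\,ds\,dt=\int_H\int_H\phi\big(s\,g_i^{-1}g_j\,t\big)\,ds\,dt=\phi^\natural(g_i^{-1}g_j).$$
Consequently, for any $c_1,\dots,c_N\in\C$ and $g_1,\dots,g_N\in G$,
$$\sum_{i,j}\overline{c_i}c_j\,\phi^\natural(g_i^{-1}g_j)=\big\|\textstyle\sum_i c_i\,\Psi(g_i)\big\|_\H^2\geq 0,$$
so $\phi^\natural$ is PD on $G$; being also bi-$H$-invariant, $\phi^\natural\in B_+(H\backslash G/H)$, which is exactly $(\star)$.

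I expect the only genuine difficulty to be $(\star)$, and inside it the precise point where compactness of $H$ is indispensable: it is what guarantees a finite, normalizable, inversion-invariant Haar measure on $H$ and the Bochner-integrability of $s\mapsto\Phi(gs)$, so that $\Psi$ is well-defined and exhibits $\phi^\natural$ as a Gram function. The remaining verifications—continuity of $\Phi$, the interchange of the inner product with the integral, and inversion-invariance of Haar measure on $H$—are routine. For non-compact $H$ this averaging has no finite analogue, which is exactly why the general case is deferred and only the compact case is proved here.
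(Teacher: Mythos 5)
Your proof is correct and follows essentially the same route as the paper's main-text argument for compact $H$: both average a PD decomposition of $f$ over $H\times H$ against the normalized Haar measure and observe that the resulting terms are PD, continuous, and bi-$H$-invariant. You actually go one step further than the paper on the key point $(\star)$ — that averaging preserves positive definiteness — by exhibiting the averaged function as the Gram function of the Bochner integral $\Psi(g)=\int_H\Phi(gs)\,ds$ in the RKHS, whereas the paper simply asserts this (it also follows from the fact that PD functions form a cone closed under pointwise limits, the integral being a limit of convex combinations). Be aware, though, that the theorem is stated for an arbitrary closed subgroup $H$ of the locally compact group $G$; the paper proves that general case in its Appendix C by an entirely different representation-theoretic argument (writing elements of $B_\C(G)$ as matrix coefficients $\langle\pi(\cdot)v,w\rangle$ and projecting $v$ and $w$ onto the subspace of $H$-fixed vectors, via a Zorn's-lemma argument), which your averaging strategy cannot reach — as you correctly note at the end.
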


\noindent\textbf{Proof when $H$ compact\ } $\supseteq\,:$ For part 1, if $f\in B_\R(H\backslash G/H)$ then $f\in C{_\C}(H\backslash G/H)$ by definition, and $f$ has a PD decomposition, so $f\in B_\R(G)$ (or see the argument at the beginning of this section). Part 2 is similar.

$\subseteq\,:$ For $g\in G$, define $L_g$ and $R_g$ to be the operators acting on functions $f:G\to\C$ by
$$
    L_gf(a) =f(ga)
$$
and
$$
    R_gf(a) =f(ag)
$$
for all $a\in G$. Suppose $f\in B_\R(G)\cap C{_\C}(H\backslash G/H)$. Then, by invariance of $f$ on double cosets,
$$f(g) = \frac{1}{\nu(H)^2}\int_H\int_H R_{h'}L_hf(g) d\nu(h)d\nu(h')$$
for $g\in G$, where $\nu$ is a translation bi-invariant Haar measure on $H$, which exists and is finite by compactness of $H$. So writing the PD decomposition of $f$ as $f = f_+-f_-$, we get
$$f = \frac{1}{\nu(H)^2}\int_H\int_H R_{h'}L_hf_+ d\nu(h)d\nu(h')- \frac{1}{\nu(H)^2}\int_H\int_H R_{h'}L_hf_- d\nu(h)d\nu(h').$$
Both the first term and minus the second term are PD, and they are invariant on double cosets by bi-invariance of the Haar measures. Continuity of the terms is easy to check, and the proof for $B_\C(G)$ and $B_\C(H\backslash G/H)$ is similar. \hfill\BlackBox \\

\subsection{Geometry and sufficient conditions for positive decomposability}\label{suf_cond_sec}
Assuming some geometry on $G$ and $H$, we are now in a position to describe sufficient conditions for a function to be in $B_\C(H\backslash G/H)$. Specifically, we assume in this section that $G$ is a connected Lie group. Furthermore, we assume that $G$ is unimodular, meaning it possesses a Haar measure that is bi-invariant. This includes all Abelian and all compact Lie groups, but also many non-compact Lie groups of interest, such as $GL_\R(n)$ and $GL_\C(n)$. \par
Then, we have the following powerful characterization.
\begin{theorem}\label{bernstein_thm}
Fix a left-invariant Riemannian metric on the connected unimodular Lie group $G$, and let $\Delta$ be the Laplace-Beltrami operator with respect to this metric. Suppose $f\in C^\infty_{{\C}}(G)$ is such that 
$$\Delta^kf = \underbrace{\Delta\dots\cdot\Delta}_{\text{k times}}f$$
is $L^2$-integrable with respect to the Haar measure on $G$ for any integer $k\geq 0$. Then $f\in B_\C(G)$.
\end{theorem}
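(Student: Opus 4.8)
The plan is to prove the stronger statement that $f$ lies in the \emph{Fourier algebra} $A(G)$, the space of coefficient functions $x\mapsto\langle\lambda(x)\xi,\eta\rangle$ of the left regular representation $\lambda$ on $L^2(G)$. Each such coefficient is a coefficient of a unitary representation, and coefficients of unitary representations are linear combinations of continuous PD functions by polarization, so $A(G)\subseteq B_\C(G)$; it therefore suffices to place $f$ in $A(G)$. Moreover, because $G$ is unimodular, $A(G)$ contains every convolution product $\phi\ast\psi$ of two functions $\phi,\psi\in L^2(G)$: indeed $A(G)=\{\xi\ast\tilde\eta:\xi,\eta\in L^2(G)\}$ with $\tilde\eta(x)=\overline{\eta(x^{-1})}$, and setting $\psi=\check{\bar\eta}$ the map $\eta\mapsto\tilde\eta$ preserves $L^2(G)$ \emph{precisely} because the modular function is trivial. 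Thus the whole theorem reduces to writing $f$ as a convolution of two $L^2$ functions.

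To produce such a factorization I would use a Bessel-type potential built from $\Delta$. Since $\Delta\le 0$, the operator $I-\Delta$ is self-adjoint and bounded below by $I$ on $L^2(G)$ (using essential self-adjointness of $\Delta$ on $C^\infty_\C(G)$ with compact support, valid because the left-invariant metric is complete), so $(I-\Delta)^{-N}$ is bounded for every $N$. Set $h:=(I-\Delta)^N f=\sum_{k=0}^N\binom{N}{k}(-1)^k\Delta^k f$; by hypothesis each $\Delta^k f\in L^2(G)$, hence $h\in L^2(G)$, and the standard domain characterization for powers of the Laplacian on a complete manifold gives $f\in\operatorname{Dom}\big((I-\Delta)^N\big)$ and $f=(I-\Delta)^{-N}h$. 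Because the metric is left-invariant, $\Delta$ commutes with left translations, so $(I-\Delta)^{-N}$ acts as right convolution by a kernel $B_N$; therefore $f=h\ast B_N$, a convolution of $h\in L^2(G)$ with $B_N$.

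The crux, and the step I expect to be the main obstacle, is showing $B_N\in L^2(G)$ for $N$ large. I would obtain $B_N$ by subordinating the heat semigroup, $(I-\Delta)^{-N}=\tfrac{1}{\Gamma(N)}\int_0^\infty t^{N-1}e^{-t}e^{t\Delta}\,dt$, and since $e^{t\Delta}$ is right convolution by the heat kernel $p_t$, we get $B_N=\tfrac{1}{\Gamma(N)}\int_0^\infty t^{N-1}e^{-t}p_t\,dt$. On a unimodular group with self-adjoint $\Delta$ the heat kernel is symmetric, so $\|p_t\|_{L^2}^2=(p_t\ast p_t)(e)=p_{2t}(e)$; the classical small-time on-diagonal bound $p_{2t}(e)\le C\,t^{-d/2}$ (with $d=\dim G$) and boundedness for $t\ge 1$ then give $\|p_t\|_{L^2}\le C' t^{-d/4}$ near $0$ and $\|p_t\|_{L^2}\le C''$ at infinity. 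Minkowski's integral inequality yields $\|B_N\|_{L^2}\le\tfrac{1}{\Gamma(N)}\int_0^\infty t^{N-1}e^{-t}\|p_t\|_{L^2}\,dt$, and this integral converges as soon as $N-1-d/4>-1$, i.e. $N>d/4$. Fixing any such $N$ completes the argument. The real work is in the heat-kernel estimates: one must invoke the small-time on-diagonal bounds for $\Delta$, which follow from the bounded (indeed homogeneous) geometry of a left-invariant metric, whereas the large-time behaviour is controlled for free by the factor $e^{-t}$ coming from the shift $I-\Delta$, which is exactly what rescues the argument on non-compact $G$ with no spectral gap.
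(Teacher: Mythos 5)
Your argument is correct in outline, but it takes a genuinely different route from the paper: the paper's proof of Theorem \ref{bernstein_thm} is essentially a citation --- it invokes Sugiura/Gaudry for compact $G$ and Theorem 1 of the 1990 Gaudry heat-kernel paper (an embedding of Besov-type spaces into the Fourier algebra $A_2(G)$) for non-compact $G$, together with $A_2(G)\subseteq B_\C(G)$, and reserves its own effort for Appendix D, where it checks that the Laplace--Beltrami operator of a left-invariant metric on a unimodular group is the sum of squares of left-invariant vector fields so that the cited results apply. What you have written is instead a self-contained proof of the needed embedding: reduce to $f\in A(G)$ via the $L^2$-factorization $A(G)=\{\xi*\tilde\eta\}$ (valid as stated because $G$ is unimodular), write $f=(I-\Delta)^{-N}h$ with $h=\sum_{k\le N}\binom{N}{k}(-1)^k\Delta^k f\in L^2$, and show the Bessel kernel $B_N$ is in $L^2$ for $N>d/4$ by subordination and the on-diagonal bound $p_{2t}(e)\le Ct^{-d/2}$. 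This is in substance the mechanism inside the reference the paper cites, so what your version buys is transparency, plus the explicit observation (which the paper only makes in a remark) that finitely many powers of $\Delta$ suffice. The steps you should not leave implicit are: (i) the identification of the pointwise $\Delta^k f$ with the operator-theoretic $(\bar\Delta)^k f$, which needs completeness of the left-invariant metric and the Gaffney/Strichartz domain characterization on complete manifolds; (ii) the small-time on-diagonal heat-kernel bound, which holds here by the bounded geometry of a left-invariant metric but is a real input; and (iii) the role of unimodularity, which you use in three places (the $L^2$-factorization of $A(G)$ without a modular-function correction, the symmetry $p_t(x)=p_t(x^{-1})$ giving $\|p_t\|_2^2=p_{2t}(e)$, and the agreement of the Riemannian volume with Haar measure) --- this is the same hypothesis the paper spends Appendix D on, just consumed differently.
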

\begin{proof}
When $G$ is compact, the assumption can be reduced to $f\in C^\infty_{{\C}}(G)$, and the result follows from \cite{sugiura_fourier_1971}, or from \cite{gaudry_bernsteins_1986}. For non-compact $G$, the result is a special case of {\citet[Theorem 1]{gaudry_heat_1990}}, which proves the inclusion of certain Besov spaces in Fourier algebras. See also the remark after {\citet[Equation 12]{gaudry_heat_1990}}, as well as the containment of the Fourier algebra $A_2(G)$ in the Fourier-Stieltjes algebra, for example in {\citet[Proposition 2.3.3]{kaniuth_fourier_2018}}. We should note that the description of the Laplacian in \cite{gaudry_heat_1990} as the sum of the squares of vector fields generating the Lie algebra $\g$ of $G$ is non-standard, but we prove in Appendix \hyperref[laplace_app]{D}, that the Laplace-Beltrami operator arising from a left-invariant metric on a unimodular Lie group can be formulated in these terms.
\end{proof}
\begin{remark} \normalfont
    For simplicity, Theorem \ref{bernstein_thm} relaxes the results from the works referred to in its proof. For instance, by \cite{gaudry_bernsteins_1986} we can show that when $G$ is compact, $C^k_{{\C}}(G)\subset B_\C(G)$ for $k>n/2$, where $n = \dim G$. This still does not capture the full strength of the result in this reference, which is a generalization of Bernstein's Theorem (see Example \ref{circle_stieltjes_ex}). On the other hand, when $G$ is non-compact, \cite{gaudry_heat_1990} shows, in particular, that it suffices that $\Delta^k f$ is weakly in $L^2$ for $k \leq (n^2+n)/4$ (see also {\citet[Equation 2.6]{varopoulos_analysis_1988}}).
\end{remark}
\begin{remark} \normalfont
    Compare this with Example \ref{circle_stieltjes_ex}. In both cases, for a function $f$ to belong to $B_\C(G)$ it suffices that
    \begin{enumerate}
        \item $f$ is smooth enough,
        \item $f$ and its derivatives decay fast enough at infinity.
    \end{enumerate}
\end{remark}
We would now like to generalize Theorem \ref{bernstein_thm} from $B_\C(G)$ to $B_\C(H\backslash G/H)$ with the help of Theorem \ref{lie_to_homog_thm}. For this, we assume $H$ is a compact Lie subgroup of $G$. Then, $G/H$ is a homogeneous space, which inherits the structure of a manifold from the one on $G$. Moreover, observe that if the left-invariant Riemannian metric on $G$ is also right-$H$-invariant, then it gives rise to a natural Riemannian metric on the homogeneous space $G/H$, making the projection map $\pi_R:G\to G/H$ into a Riemannian submersion (see {\citet[Chapter 9]{besse_einstein_1987}}). The `$R$' in $\pi_R$ stands for `right', and we will write $\pi_L:G/H \to H\backslash G/H$ for this other projection map, and $\pi = \pi_L\circ\pi_R: G \to H\backslash G/H$.
\begin{remark} \normalfont
While $G/H$ inherits the structure of a manifold from the one on $G$, $H\backslash G/H$ does not. See Example \ref{doub_coset_sphere_ex} and Example \ref{doub_cose_spd_ex}.
\end{remark}
\begin{theorem}\label{bernstein_cor}
Suppose that the connected unimodular Lie group $G$ is equipped with a left-invariant right-$H$-invariant Riemannian metric, where $H$ is a compact Lie subgroup of $G$. Equip $G/H$ with the Riemannian metric induced by the one on $G$. Suppose $\tilde f\in C^\infty_{{\C}} (G/H)$ is such that there is $f\in C{_\C}(H\backslash G/H)$ with $\tilde f = f\circ\pi_L$, i.e. $\tilde f$ is $H$-invariant. If $\Delta^k\tilde f$ is $L^2$-integrable with respect to the Haar measure on $G/H$ for any integer $k\geq 0$, then $f\in B_\C(H\backslash G/H)$.
\end{theorem}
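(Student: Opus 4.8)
The plan is to lift $f$ to a bi-$H$-invariant function on $G$, verify that it satisfies the hypotheses of Theorem \ref{bernstein_thm}, and then descend via Theorem \ref{lie_to_homog_thm}. Concretely, set $F := \tilde f\circ\pi_R = f\circ\pi \in C^\infty_\C(G)$. Since $f$ is a function on the double coset space, $F$ is invariant on double cosets, i.e. $F\in C_\C(H\backslash G/H)$ viewed inside $C_\C(G)$. If I can show $F\in B_\C(G)$, then part 2 of Theorem \ref{lie_to_homog_thm} gives $F\in B_\C(G)\cap C_\C(H\backslash G/H) = B_\C(H\backslash G/H)$, which is exactly the assertion $f\in B_\C(H\backslash G/H)$. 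By Theorem \ref{bernstein_thm}, to prove $F\in B_\C(G)$ it suffices to check that $\Delta_G^k F\in L^2(G)$ for every integer $k\geq 0$, where $\Delta_G$ denotes the Laplace--Beltrami operator of the left-invariant metric on $G$, and I write $\Delta := \Delta_{G/H}$ for the one on $G/H$.

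The key step is to relate $\Delta_G$ acting on pullbacks to $\Delta$. Since the metric on $G$ is left-invariant and right-$H$-invariant, $\pi_R:G\to G/H$ is a Riemannian submersion, and I claim that its fibers (the cosets $gH$) are minimal submanifolds. As left translations are isometries permuting the fibers and right translations by $H$ are isometries preserving $H=\pi_R^{-1}(eH)$, it suffices to verify minimality of the fiber $H$ at the identity $e$. Write $\g = \g_H\oplus\g_H^\perp$ for the orthogonal decomposition of the Lie algebra, where $\g_H$ is the Lie algebra of $H$; right-$H$-invariance makes $\mathrm{Ad}(H)$ act orthogonally on $\g$ and preserve $\g_H$, hence preserve $\g_H^\perp$, which upon differentiation gives the reductivity relation $[\g_H,\g_H^\perp]\subseteq\g_H^\perp$. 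The Koszul formula for left-invariant vector fields then yields, for an orthonormal basis $\{e_i\}$ of $\g_H$ and any $Z\in\g_H^\perp$,
\begin{equation*}
\langle\nabla_{e_i}e_i,Z\rangle = -\langle[e_i,Z],e_i\rangle .
\end{equation*}
Since $[e_i,Z]\in[\g_H,\g_H^\perp]\subseteq\g_H^\perp$ is orthogonal to $e_i\in\g_H$, every term vanishes, so the mean curvature vector is zero. For a Riemannian submersion with minimal fibers one has the intertwining $\Delta_G(u\circ\pi_R) = (\Delta u)\circ\pi_R$ for all $u\in C^\infty(G/H)$ (see \citet[Chapter 9]{besse_einstein_1987}); iterating it gives $\Delta_G^k F = (\Delta^k\tilde f)\circ\pi_R$ for all $k$.

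It then remains to pass between the $L^2$ norms. Because $G$ is unimodular and $H$ is compact, the Riemannian volume of $G$ is a bi-invariant Haar measure, the induced volume on $G/H$ is the $G$-invariant measure, and all fibers share the common finite volume $\mathrm{vol}(H)$; the coarea formula (equivalently Weil's quotient integral formula) therefore gives
\begin{equation*}
\|\Delta_G^k F\|_{L^2(G)}^2 = \int_G \big|(\Delta^k\tilde f)\circ\pi_R\big|^2\,d\mathrm{vol}_G = \mathrm{vol}(H)\,\|\Delta^k\tilde f\|_{L^2(G/H)}^2 ,
\end{equation*}
which is finite by hypothesis for every $k$. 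Hence $\Delta_G^k F\in L^2(G)$ for all $k$, Theorem \ref{bernstein_thm} gives $F\in B_\C(G)$, and the descent via Theorem \ref{lie_to_homog_thm} described above completes the proof.

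I expect the main obstacle to be the minimality-of-fibers computation and the resulting Laplacian intertwining: this is where the geometric hypotheses (left-invariance, right-$H$-invariance, and the induced reductivity $[\g_H,\g_H^\perp]\subseteq\g_H^\perp$) are genuinely used, and one must be careful that minimality --- rather than the stronger totally geodesic property --- is precisely what makes the mean-curvature correction term in the submersion Laplacian formula vanish. The measure-theoretic bookkeeping and the final invocations of Theorems \ref{bernstein_thm} and \ref{lie_to_homog_thm} are then routine.
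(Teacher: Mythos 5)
Your proposal is correct and follows essentially the same route as the paper: reduce to showing $\tilde f\circ\pi_R\in B_\C(G)$ via Theorem \ref{lie_to_homog_thm}, establish the intertwining $\Delta^k(\tilde f\circ\pi_R)=(\Delta^k\tilde f)\circ\pi_R$ together with the $L^2$ comparison between $G$ and $G/H$ (Weil's quotient formula, using unimodularity of $G$ and compactness of $H$), and then apply Theorem \ref{bernstein_thm}. The only difference is that you prove the intertwining by a direct Koszul-formula computation showing the fibers are minimal, whereas the paper's Appendix D invokes \citet[Theorem 9.80]{besse_einstein_1987} for the (stronger) totally geodesic property of the fibers; both suffice for pullbacks of functions from the base.
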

\begin{proof}
$f\in C{_\C}(H\backslash G/H)$, so by Theorem \ref{lie_to_homog_thm}, to show that $f\in B_\C(H\backslash G/H)$ we only need to prove that $f\circ\pi =\tilde f\circ\pi_R\in B_\C(G)$. Noting that compact groups are unimodular, we see by {\citet[Equation 2.52]{folland_course_2015}} that $\Delta^k\tilde f$ is $L^2$-integrable with respect to the Haar measure on $G/H$ if and only $\Delta^k\tilde f\circ \pi_R$ is $L^2$-integrable with respect to the Haar measure on $G$. Now, we show in Appendix \hyperref[laplace_app]{D} that $(\Delta^k\tilde f)\circ\pi_R = \Delta^k(\tilde f\circ \pi_R)$, so Theorem \ref{bernstein_thm} tells us that $\tilde f\circ \pi_R\in B_\C(G)$.
\end{proof}
\begin{example} \normalfont
    By Corollary \ref{bernstein_cor}, any kernel $k= f(\langle\cdot ,\cdot \rangle)$ that is a smooth function of the inner product has a PD decomposition on $S^n$ (see Example \ref{doub_coset_sphere_ex} for the description of functions on the double coset space of the sphere $S^n\cong SO(n+1)/SO(n)$, i.e. of invariant kernels on $S^n\cong SO(n+1)/SO(n)$). For instance, the hyperbolic tangent kernel $k = \tanh(a\langle \cdot ,\cdot \rangle +b)$ on a sphere $S^n$, for $a,b\in\R$, has a PD decomposition, despite the fact that it is not PD in general (see {\citet[Example 5]{smola_regularization_2000}}).
\end{example}
We now consider the case where $M = G/H$ is a symmetric space of the non-compact type. We may choose $G$ and $H$ such that $G$ is connected, semisimple, has a finite center, and is non-compact, and $H$ is compact. The Riemannian metric on such a non-compact symmetric space is $G$-invariant and always arises from a left-invariant right-$H$-invariant metric on $G$ {\citep{helgason1962differential}}. In Corollary \ref{symmetric_cor} and its proof below, we will use a shorthand and say a function ``decays like" or ``grows like" another function if it decays at least as fast or grows at most as fast as another function.
\begin{corollary}\label{symmetric_cor} Suppose $G/H$ is a non-compact semisimple symmetric space, with $G$ and $H$ chosen as above. Suppose $f\in C{_\C}(H\backslash G/H)$ is a function of the Riemannian distance squared, i.e. $$f \circ \pi_L = \tilde f = g(d(eH,\cdot )^2).$$ If $g\in C^\infty_{{\C}}([0,\infty))$ is such that it and its derivatives decay exponentially at infinity, i.e.
$$g^{(k)}(x) = o(e^{-c_kx}) \text{ as } x\to \infty$$
for all $k\geq 0$ and some $c_k>0$ possibly depending on $k$, then $f\in B_\C(H\backslash G/H)$. In particular, the Gaussian function $\exp(-\lambda d(eH,\cdot )^2)\in B_\R(H\backslash G/H)$, and the corresponding Gaussian kernel $k = \exp(-\lambda d(\cdot ,\cdot )^2)$ has a PD decomposition for any $\lambda >0$.
\end{corollary}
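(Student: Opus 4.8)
The plan is to apply Corollary~\ref{bernstein_cor} and let the super-exponential decay of $g$ absorb all geometric growth. Write $o=eH$, let $r=d(o,\cdot)$ be the distance to $o$, and set $s=r^2$, so that $\tilde f=g\circ s$. The structural hypotheses of Corollary~\ref{bernstein_cor} hold under the standing assumptions: a connected semisimple Lie group with finite center is unimodular, the metric is left-invariant and right-$H$-invariant by the cited result of \citet{helgason1962differential}, and $H$ is compact. A non-compact symmetric space is a Cartan--Hadamard manifold, so $\exp_o$ is a diffeomorphism and $s=|\exp_o^{-1}(\cdot)|^2$ is smooth on all of $M=G/H$; thus $\tilde f\in C^\infty_\C(G/H)$, and it is $H$-invariant since $r$ is invariant under the isotropy action. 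Everything therefore reduces to the single analytic claim that $\Delta^k\tilde f\in L^2(M)$ with respect to the Haar measure for every $k\ge 0$.

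To bound $\Delta^k\tilde f$ pointwise I would iterate the chain-rule identity
\[\Delta\!\left(a\,g^{(j)}(s)\right)=(\Delta a)\,g^{(j)}(s)+\big(2\langle\nabla a,\nabla s\rangle+a\,\Delta s\big)\,g^{(j+1)}(s)+4s\,a\,g^{(j+2)}(s),\]
using $|\nabla s|^2=4s$; iterating from $\tilde f=g(s)$ yields $\Delta^k\tilde f=\sum_{j=0}^{2k}b_{k,j}\,g^{(j)}(s)$, where each $b_{k,j}$ is a universal polynomial in $s$ and the iterated covariant derivatives of $s$ (equivalently of $r$). The only non-polynomial ingredients are $\Delta s=2+2r\,\Delta r$ and the higher $\nabla^{m}s$, which are governed by $\Delta r$, $\operatorname{Hess}(r)$, and their covariant derivatives. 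Since $M$ is homogeneous with sectional curvature in $[-b^2,0]$ and bounded covariant derivatives of curvature, comparison geometry controls these quantities and shows that every $b_{k,j}$ grows at most exponentially in $r$. As $\Delta^k\tilde f$ is moreover smooth, hence bounded on the finite-volume unit ball about $o$, this gives
\[|\Delta^k\tilde f(x)|\le C_k\,e^{b_k r(x)}\max_{0\le j\le 2k}\big|g^{(j)}(r(x)^2)\big|\qquad(r(x)\ge 1).\]

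Finally I would integrate. Because $\Delta$ commutes with the isometric action of $H$, which fixes $o$, the function $\Delta^k\tilde f$ is $H$-invariant; integrating in polar coordinates adapted to the Cartan decomposition $G=HAH$ gives
\[\|\Delta^k\tilde f\|_{L^2(M)}^2=c\int_{\mathcal A^+}\big|\Delta^k\tilde f(\exp Y\cdot o)\big|^2\,J(Y)\,dY,\]
where $\mathcal A^+\subset\mathcal A=\operatorname{Lie}(A)$ is the closed positive chamber, $d(o,\exp Y\cdot o)=|Y|$, and the volume density obeys $J(Y)\le C\,e^{C'|Y|}$ (the exponential volume growth of a non-compact symmetric space). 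By hypothesis $\max_{j\le 2k}|g^{(j)}(|Y|^2)|=o(e^{-c|Y|^2})$ with $c=\min_{j\le 2k}c_j>0$, so the integrand is dominated by $e^{(2b_k+C')|Y|}e^{-2c|Y|^2}$, which is integrable over the cone $\mathcal A^+\subset\R^{\ell}$. Hence $\Delta^k\tilde f\in L^2(M)$ for all $k$, and Corollary~\ref{bernstein_cor} gives $f\in B_\C(H\backslash G/H)$. For the Gaussian, $g(x)=e^{-\lambda x}$ satisfies $g^{(k)}(x)=(-\lambda)^k e^{-\lambda x}=o(e^{-cx})$ for any $0<c<\lambda$, so the criterion applies; the resulting function is real-valued, hence lies in $B_\R(H\backslash G/H)$ by Proposition~\ref{real_val_prop}, and Proposition~\ref{correspondence_prop} converts this into a continuous PD decomposition of $k=\exp(-\lambda d(\cdot,\cdot)^2)$.

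The main obstacle is the pointwise step: since $\Delta^k\tilde f$ is genuinely non-radial once $\operatorname{rank}(M)>1$, one needs growth bounds on the covariant derivatives of the distance function that are uniform over all directions. What keeps this from being delicate is precisely the $e^{-cr^2}$ decay of $g$ and all its derivatives, which dominates any exponential: only crude exponential bounds on the coefficients $b_{k,j}$ and on the density $J$ are required, rather than the sharp polynomial estimates one would otherwise have to chase in a rank-one radial computation.
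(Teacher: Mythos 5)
Your proposal is correct in outline and reaches the same conclusion through the same skeleton as the paper --- reduce to Corollary \ref{bernstein_cor}, show $\Delta^k\tilde f\in L^2$, and integrate in polar coordinates against an exponentially growing volume density that the $e^{-cr^2}$ decay of $g$ easily dominates --- but the central pointwise estimate is obtained by a genuinely different route. The paper transfers everything to the flat Cartan subalgebra $\a$: it invokes Helgason's integration formula with density $D(a)=\prod_{\lambda>0}\sinh^{m_\lambda}(|\lambda(a)|)$ and the explicit radial part \eqref{rad_laplace_eq} of the Laplacian, whose coefficients $\coth(\lambda(a))$ have exponentially decaying derivatives, so the Gaussian decay of the iterates can be read off directly from a one-line formula. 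You instead stay on the manifold, iterate the intrinsic chain rule for $\Delta(a\,g^{(j)}(s))$ with $s=r^2$ and $|\nabla s|^2=4s$, and control the resulting coefficients $b_{k,j}$ by comparison-geometry bounds on $\Delta r$, $\operatorname{Hess}(r)$ and their covariant derivatives. Your approach is more general (it would apply to any homogeneous Cartan--Hadamard space with bounded geometry meeting the hypotheses of Corollary \ref{bernstein_cor}) and avoids the Lie-theoretic machinery, at the cost of the one step you flag yourself: the exponential bounds on iterated covariant derivatives of the distance function, uniform over directions, are asserted rather than proved. On a symmetric space this is not a genuine gap --- homogeneity and $\nabla R=0$ make all the relevant curvature quantities parallel, so the Riccati-equation estimates do close --- but it is precisely the work that the paper's explicit formula \eqref{rad_laplace_eq} does for free, and a fully self-contained write-up of your version would need to spell those bounds out. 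The final deductions (unimodularity of semisimple $G$, $H$-invariance of $\tilde f$, the Gaussian case via Proposition \ref{real_val_prop} and Proposition \ref{correspondence_prop}) match the paper.
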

\begin{proof}
This follows from Theorem \ref{bernstein_cor} combined with two of Helgason's formulas for non-compact semisimple symmetric spaces, the first one for integration and the other for the radial part of the Laplace-Beltrami operator. Observe that since $G$ is semisimple, it is also unimodular \cite[Chapter IV Proposition 1.4]{helgason1962differential}, so we are in a position to apply Theorem \ref{bernstein_cor}. Also note that since the metric on $G/H$ is left-invariant, any function of the distance squared $g(d(eH,\cdot ))^2$ defines a function in $C{_\C}(H\backslash G/H)$. \par
Using the notation from {\citet[Equation 13]{Said2018}}, for $\tilde f$ an integrable function on $G/H$,
$$\int_{G/H} \tilde f(x)d\nu(x) = C\int_H\int_\a \tilde f(x(a,h))D(a) da\,d\mu(h)$$
for some constant $C$ independent of $\tilde f$. Here $\a$ is an Abelian subalgebra of $\g$ resulting from the Iwasawa decomposition $\g = \h+\a+\n$ where $\g$, $\h$ are the Lie algebras of $G$, $H$ respectively, $\n$ is a nilpotent subalgebra of $\g$, $da$ is the Lebesgue measure on $\a$, $d\nu$ is a Haar measure on $G/H$, $d\mu$ is the normalized Haar measure of $H$, $x(a,h) := \exp(\text{Ad}(h)a)H$ and
$$D(a) := \prod_{\lambda>0}\sinh^{m_\lambda}(|\lambda(a)|)$$
where the product is over positive roots $\lambda:\a\to\R$ and $m_\lambda$ is the dimension of the root space corresponding to $\lambda$. See {\citet[Chapter IV Proposition 1.17]{helgason1962differential}}. Few details will be important for us, only the form of the integral and the growth rate of $D$ at infinity. \par
Let $\tilde f$ and $g$ be as in the statement of the corollary. We first show that $\tilde f$ is $L^2$-integrable. We have
$$\int_{G/H} |\tilde f(x)|^2 d\nu(x) = A\int_\mathfrak{a}|\tilde f(x(a,e))|^2D(a)da = A\int_\mathfrak{a}|g(\|a\|^2)|^2D(a)da$$
for some constant $A$, where we have noted that the integrand does not depend on $h$ and have integrated out that variable. The integrand is continuous, so to show that it is finite we only need to consider its behavior at infinity. Since $g$ decays exponentially and $D$ grows exponentially (by using that the roots are linear maps), the integrand decays like a Gaussian at infinity. We deduce that $\tilde f$ is $L^2$-integrable. \par
Now since $\tilde f$ is a function of distance only, $\Delta \tilde f = \Delta_r \tilde f$ where
\begin{equation}\label{rad_laplace_eq}
\Delta_r  = \sum_{i=1}^r \frac{\partial^2}{\partial a_i^2} +\sum_{\lambda>0}m_\lambda \coth (\lambda(a))\frac{\partial}{\partial a_\lambda}
\end{equation}
is the radial part of the Laplace-Beltrami operator, where the $a_i$ form an orthonormal basis of $\a$, the second sum is over positive roots $\lambda$, and $\lambda= B(a_\lambda,\cdot )$ where $B:\g\times\g \to \R$ is the Killing form.
See {\citet[Chapter II Theorem 5.24]{helgason_groups_2000}}. \par
Since $g$ and $d(eH,\cdot )^2$ are smooth, so is $\tilde f$. Thus, to show that for $k\geq0$, $\Delta^k \tilde f$ is $L^2$-integrable, it suffices to consider its behavior at infinity. It is not too hard to see from (\ref{rad_laplace_eq}) that since $\tilde f(x(a,h)) = g(\|a\|^2) = g(a_1^2+\dots +a_r^2)$ and the derivatives of $g$ and of $\coth$ decay exponentially at infinity, $\Delta^k \tilde f$ decays like a Gaussian at infinity. Hence, arguing as above we deduce that $\Delta^k \tilde f$ is $L^2$-integrable. So we are done by Theorem \ref{bernstein_cor}.
\end{proof}

\begin{remark} \normalfont
    We can show the PD decomposability of the Gaussian kernel for reductive non-compact symmetric spaces too. For instance, the space of $n\times n$ symmetric positive definite matrices $\mathbb S_{++}^n\cong GL(n)/O(n)$ is not a semisimple symmetric space. However, we can also write it as $\mathbb S_{++}^n \cong SL(n)/SO(n) \times \R_{>0}$, with the diffeomorphism
    \begin{equation}\label{spd_det_eq}
    \begin{aligned}
        \mathbb S_{++}^n &\to SL(n)/SO(n) \times \R_{>0} \\
        X &\mapsto (X/\det(X)^n, \det(X))
    \end{aligned} 
    \end{equation}
    where $SL(n)/SO(n)$ is viewed as the space of symmetric positive definite matrices with unit determinant. Then, if we write $d$ for the Riemannian distance on $\mathbb S_{++}^n\cong GL(n)/O(n)$, sometimes called the affine-invariant Riemannian distance, and $d'$ for the Riemannian distance on $SL(n)/SO(n)$, we can view the map (\ref{spd_det_eq}) as an isometry:
    $$d(X,Y) = d'(X/\det(X)^n, Y/\det(Y)^n)+|\log(\det(X))-\log(\det(Y))|$$
    for all $X,Y\in \mathbb S_{++}^n$. Therefore, the Gaussian kernel on $\mathbb S_{++}^n$ can be viewed as
    $$
    \begin{aligned}
    &\exp(-\lambda d(X,Y)^2) \\
    &= \exp(-\lambda d'(X/\det(X)^n, Y/\det(Y)^n)^2)\exp(-\lambda |\log(\det(X))-\log(\det(Y))|^2)
    \end{aligned}
    $$
    a product of a PD decomposable kernel by Corollary \ref{symmetric_cor} with a PD kernel, which is PD decomposable by Example \ref{product_ex}.
\end{remark}
\section{Conclusion}

We have seen that kernel methods can be applied with non-PD kernels that admit a PD decomposition and that knowledge of the specific form of the decomposition is not necessary for RKKS learning. We have argued that RKKS-based methods are particularly interesting in non-Euclidean settings where PD kernels are challenging to construct. We have then related PD decomposable invariant kernels on locally compact quotient spaces $G/H$ to Hermitian PD decomposable functions on the double coset space $H\backslash G/H$. This allowed us to leverage the extensive harmonic analytic literature on the Fourier-Stieltjes algebra $B_\C(G)$ to describe the invariant kernels that admit a PD decomposition. In particular, assuming some geometry on $G/H$, we showed that smoothness and appropriate decay of the derivatives at infinity is sufficient for the PD decomposability of a kernel, providing weak and verifiable sufficient conditions for a kernel to admit a PD decomposition. This work provides a theoretical foundation for applications of kernel methods on non-Euclidean data spaces. \par
% We note that there remain open questions on the characterization of universal kernels on such data spaces. We aim to investigate such questions in future work.

% Acknowledgements and Disclosure of Funding should go at the end, before appendices and references

\acks{The authors acknowledge financial support from the School of Physical and Mathematical Sciences, the Talent Recruitment and Career Support (TRACS) Office, and the Presidential Postdoctoral Programme at Nanyang Technological University (NTU). \\
%{We thank the anonymous referees for their careful reading and comments that helped improve the manuscript.}
}

% Manual newpage inserted to improve layout of sample file - not
% needed in general before appendices/bibliography.

\newpage

\appendix
{
\section*{Appendix A.}\label{representer_app}
In this appendix we expand on the RKKS representer theorem described in Section \ref{pdd_ker_sec}.}

{
Recall that we are given
\begin{enumerate}
    \item $\mathcal D = \{(x_1,y_1),\dots, (x_N,y_N)\}\subset (X\times\R)^N$ a finite data set,
    \item $g:\R\to \R$ a strictly monotonic differentiable function,
    \item $L:\K \times (X\times\R)^N \to \R$ a loss functional, determined exclusively through function evaluations, Fréchet differentiable in the first argument,
    \item $I:\K \times (X\times\R)^N \to \R^m$ and $E:\K \times (X\times\R)^N \to \R^l$ functionals determined exclusively through function evaluations and Fréchet differentiable in the first argument,
\end{enumerate}
For any hope at formulating, let alone proving, an RKKS representer theorem, a key additional requirement is that the set defined by the constraints
    $$
        \mathcal M := \{f\in\K : I(f,\mathcal D) \leq 0 \text{ and } E(f,\mathcal D) = 0\}
    $$
is an infinite-dimensional submanifold (with boundary) of $\K$. By infinite-dimensional submanifold we mean here a Hilbert manifold with Fréchet differentiable transition maps, since $\K$ is given the topology of the Hilbert space $\mathcal H_+ \times \mathcal H_-$ (see Definition \ref{krein_def}). Given this, we have a well-defined notion of a solution to the problem
\begin{equation*}
\begin{aligned}
    \underset{f\in\K}{\text{stabilize}}&\; L(f,\mathcal D) +g(\langle f,f\rangle_\K) \\
    \text{s.t.}&\; f\in \mathcal M.
\end{aligned}
\end{equation*}
Namely a solution $f^*$ is one for which the Fréchet derivative of the regularized loss $D[L(\cdot,\mathcal D) +g(\langle \cdot,\cdot\rangle_\K)]|_{\mathcal M}(f_*)$ vanishes on any tangent vector to the manifold $\mathcal M$.}

{
Moreover, we assume that the map $(I\times E)(\cdot , \mathcal D): {\mathcal K} \to \mathbb{R} ^{m+l}$ is a submersion. Note that when there are no inequality constraints $I$, this automatically implies that $\mathcal M$ is a manifold \cite[Theorem 3.5.4]{mta}.}

{
Given these assumptions, we can apply \cite[Corollary 3.5.29]{mta} to conclude that for each critical point $f ^\ast  \in {\mathcal K} $ of the restriction of $L(\cdot ,\mathcal D) +g(\langle \cdot ,\cdot \rangle_\K)$ to $\mathcal M$, there exist constants $\lambda\in \R^m$, $\mu\in\R^l$ such that $f ^\ast$ is a critical point of the corresponding Lagrange function defined by
    $$\mathcal L(f) = L(f,\mathcal D)+g(\langle f,f\rangle_\K)+\lambda^{\top}I(f,\mathcal D)+\mu^{\top}E(f,\mathcal D),$$
that is the Fr\'echet derivative $D\mathcal L(f^\ast ) = 0$. We recall that $\lambda_i I_i(f^\ast ,\mathcal D) = 0$ for all $i \in \left\{1, \ldots, m\right\}$, this implies that $\lambda_i$ is non-zero only when the corresponding inequality constraint is active, that is $I_i(f^\ast ,\mathcal D)<0$.}

{
By the Fréchet differentiability of the various functions, we can apply the Fréchet derivative chain rule:
    $$
    0 = D\mathcal L(f^*) = \sum_{i=1}^N\partial_{f(x_i)}(L(f^*,X)+\lambda^TI(f^*,\mathcal D)+\mu^TE(f^*,\mathcal D))k(x_i,\cdot)+ 2f^*\partial_{\langle f,f\rangle}g(\langle f^*,f^*\rangle_\K)
    $$
    where on the RHS, the derivatives are classical derivatives. So we get
    $$f^* = -\frac{1}{2\partial_{\langle f,f\rangle}g(\langle f^*,f^*\rangle_\K)}\sum_{i=1}^N\partial_{f(x_i)}(L(f^*,\mathcal D)+\lambda^TI(f^*,\mathcal D)+\mu^TE(f^*,\mathcal D))k(x_i,\cdot)$$
    where we used $g$ strictly monotonic to deduce $\partial_{\langle f,f\rangle}g(\langle f^*,f^*\rangle_\K) \neq 0$. In particular, when $g(x) = cx$ for all $x\in \R$ we get
    $$f^* = -\frac{1}{c}\sum_{i=1}^N\partial_{f(x_i)}(L(f^*,\mathcal D)+\lambda^TI(f^*,\mathcal D)+\mu^TE(f^*,\mathcal D))k(x_i,\cdot).$$}
\section*{Appendix B.}\label{averaging_app}
In this appendix, we show that when $G$ is compact, the correspondence from Proposition \ref{correspondence_prop} between Hermitian functions on $H\backslash G/H$ with continuous PD decompositions and kernels on $G/H$ with continuous PD decompositions is a surjection. As explained in Remark \ref{stat_rmk}, this requires PD decomposable invariant kernels to admit invariant PD decompositions.
\begin{theorem}
If the locally compact group $G$ is compact and $H$ is a closed subgroup of $G$, then an invariant kernel $k$ on $G/H$ has a PD decomposition if and only if it has a PD decomposition into invariant kernels.
\end{theorem}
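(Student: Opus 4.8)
The plan is to exploit the compactness of $G$ through an averaging argument of the "unitarian trick" type. The nontrivial direction is the "only if" direction, since a PD decomposition into invariant kernels is in particular a PD decomposition. So I would begin by assuming $k = k_+ - k_-$ is a continuous PD decomposition, where $k_+, k_-$ are continuous PD kernels on $G/H$ that are \emph{not} assumed invariant, and then manufacture an invariant one. Since $G$ is compact it is unimodular, so it carries a finite bi-invariant Haar measure $\nu$, which I normalize so that $\nu(G) = 1$. The key device is to define the averaged kernels
$$\tilde k_\pm(x,y) := \int_G k_\pm(g\cdot x, g\cdot y)\, d\nu(g), \qquad x,y \in G/H,$$
and to show that $k = \tilde k_+ - \tilde k_-$ is the desired invariant PD decomposition.

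First I would check that each $\tilde k_\pm$ is a continuous PD kernel. Positive definiteness is immediate: for $x_1,\dots,x_N \in G/H$ and $c_1,\dots,c_N \in \C$, linearity of the integral gives
$$\sum_{i,j}\overline{c_i}\,c_j\,\tilde k_\pm(x_i,x_j) = \int_G \Big(\sum_{i,j}\overline{c_i}\,c_j\,k_\pm(g\cdot x_i, g\cdot x_j)\Big)\,d\nu(g) \geq 0,$$
since for each fixed $g$ the integrand is the quadratic form of the Gram matrix of the points $g\cdot x_1,\dots,g\cdot x_N$, which is nonnegative by positive definiteness of $k_\pm$. Hermiticity is inherited similarly. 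Continuity of $\tilde k_\pm$ follows from joint continuity of the map $(g,x,y)\mapsto k_\pm(g\cdot x, g\cdot y)$ on $G\times G/H \times G/H$ (continuity of the action composed with continuity of $k_\pm$) together with compactness of $G$, via the standard continuity theorem for integrals depending on a parameter.

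Next I would verify invariance, which is where the averaging pays off. For $h\in G$, the substitution $g' = gh$ together with the right-invariance of $\nu$ yields
$$\tilde k_\pm(h\cdot x, h\cdot y) = \int_G k_\pm(gh\cdot x, gh\cdot y)\,d\nu(g) = \int_G k_\pm(g'\cdot x, g'\cdot y)\,d\nu(g') = \tilde k_\pm(x,y),$$
so each $\tilde k_\pm$ is invariant. Finally, because $k$ is already invariant, averaging returns it unchanged: $\int_G k(g\cdot x, g\cdot y)\,d\nu(g) = k(x,y)\,\nu(G) = k(x,y)$. By linearity of the averaging operation this gives $k = \tilde k_+ - \tilde k_-$, a decomposition into continuous invariant PD kernels, completing the proof.

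The argument is essentially elementary once the averaging device is identified, so I do not expect a genuine obstacle; the only point requiring care is the continuity of the averaged kernels, which is precisely where compactness of $G$ is used in an essential way. Compactness is also what makes the Haar measure finite and bi-invariant, both of which the argument relies on — and this is exactly why the method fails in the non-compact setting flagged in Remark \ref{stat_rmk}, where the averaging integral would no longer converge.
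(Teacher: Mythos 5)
Your proposal is correct and is essentially the paper's own proof (Appendix B): both average the non-invariant PD decomposition over the compact group's finite Haar measure, observe that invariance of $k$ makes the average fix $k$ itself, and check that each averaged term remains PD while becoming invariant. The only cosmetic differences are your normalization of the Haar measure and your explicit treatment of continuity, which the paper leaves implicit.
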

\begin{proof}
For $g\in G$, define $T_g$ to be the operator acting on maps $k:G/H\times G/H\to \C$ by
$$T_gk(aH,bH) = k(gaH,gbH)$$
for all $a,b\in G$. Let $k$ be an invariant PD decomposable kernel on $G/H$. Then, by invariance of $k$,
$$k(aH,bH)= \frac{1}{\mu(G)}\int_G T_gk(aH,bH) d\mu(g)$$
for $a,b\in G$, where $\mu$ is a Haar measure for $G$, which is finite by compactness of $G$. So writing the PD decomposition of $k$ as $k=k_+-k_-$, we get
$$k = \frac{1}{\mu(G)}\int_G T_gk_+ d\mu(g)-\frac{1}{\mu(G)}\int_G T_gk_- d\mu(g).$$
Both the first term and minus the second term are PD, and they are invariant by left-invariance of the Haar measure, so this is a PD decomposition into invariant kernels.
\end{proof}
\section*{Appendix C.}\label{tech_sec}

{The aim of this appendix is to provide an alternative general proof of Theorem \ref{lie_to_homog_thm} using the language of representation theory.}
\begin{definition}\label{representation_def} A representation of $G$ is a group homomorphism
$$\pi: G\to GL_\C(V)$$
for some complex vector space $V$. A representation is called unitary if $V$ is a Hilbert space and the range of $\pi$ consists of unitary operators, and it is called continuous if $\pi$ is continuous.
\end{definition}
\begin{proposition}\label{pd_representation_prop}
$f\in B_+(G)$ if and only if there is a continuous unitary representation $\pi: G\to GL_\C(V)$ and a vector $v\in V$ such that
\begin{equation}\label{pd_representation_eq}f = \langle \pi(\cdot )v,v\rangle.\end{equation}
\end{proposition}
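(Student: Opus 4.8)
The plan is to recognize this statement as the group-theoretic form of the Gelfand--Naimark--Segal (GNS) construction and to prove the two implications separately. The \emph{sufficiency} direction ($\Leftarrow$) is a direct computation. Given a continuous unitary representation $\pi\colon G\to GL_\C(V)$ and $v\in V$, set $f=\langle\pi(\cdot)v,v\rangle$. Continuity of $f$ is immediate from the (strong) continuity of $\pi$ composed with the continuous functional $\langle\,\cdot\,,v\rangle$. For positive definiteness I would fix $g_1,\dots,g_N\in G$, $c_1,\dots,c_N\in\C$, and use unitarity in the form $\pi(g_i^{-1}g_j)=\pi(g_i)^{*}\pi(g_j)$ to obtain
\begin{equation*}
\sum_{i,j}\overline{c_i}\,c_j\,f(g_i^{-1}g_j)=\sum_{i,j}\overline{c_i}\,c_j\,\langle\pi(g_j)v,\pi(g_i)v\rangle=\Big\|\sum_{i}c_i\,\pi(g_i)v\Big\|^{2}\ge 0,
\end{equation*}
so the Gram matrix is Hermitian positive semidefinite and $f\in B_+(G)$.

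For the \emph{necessity} direction ($\Rightarrow$), I would manufacture the representation from $f$ itself. Let $A=\C[G]$ be the space of finitely supported functions $a=\sum_i c_i\,g_i$, and define a sesquilinear form by $\langle a,b\rangle:=\sum_{i,j}\overline{c_i}\,d_j\,f(g_j^{-1}g_i)$ for $b=\sum_j d_j\,g_j$, the indices arranged to match the conjugate-linear-in-the-first-slot convention used in the definition of $B_+(G)$. The positive definiteness of $f$ says precisely that $\langle a,a\rangle\ge 0$, and the automatic Hermitian symmetry of a positive semidefinite Gram matrix yields $f(g^{-1})=\overline{f(g)}$, whence the form is Hermitian. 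Since it may be degenerate, I would pass to the quotient $A/N$ by the null space $N=\{a:\langle a,a\rangle=0\}$, which is a subspace by Cauchy--Schwarz, and let $V$ be the Hilbert space completion of $A/N$ with the induced inner product.

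The representation is the left regular action $\pi_0(g)\big(\sum_i c_i\,g_i\big)=\sum_i c_i\,(g g_i)$. A one-line change of variables gives $\langle\pi_0(g)a,\pi_0(g)b\rangle=\langle a,b\rangle$, so each $\pi_0(g)$ is isometric, preserves $N$, descends to $A/N$, and extends to a unitary $\pi(g)$ on $V$; that $g\mapsto\pi(g)$ is a homomorphism is clear. Taking $v$ to be the class of $\delta_e$, one computes $\pi(g)v=[\delta_g]$ and $\langle\pi(g)v,v\rangle=\langle\delta_g,\delta_e\rangle=f(e^{-1}g)=f(g)$, which is the desired reproducing formula \eqref{pd_representation_eq}. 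Note that the classes $\{\pi(g)v:g\in G\}=\{[\delta_g]\}$ span a dense subspace, so $v$ is cyclic.

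The main obstacle is the \emph{strong continuity} of $\pi$, and this is the only step that genuinely uses the continuity built into $f\in B_+(G)$. I would verify it on the dense set of vectors $[\delta_h]$ and then extend using that each $\pi(g)$ has operator norm $1$. For $g\to g_0$ one expands
\begin{equation*}
\big\|\pi(g)[\delta_h]-\pi(g_0)[\delta_h]\big\|^{2}=2f(e)-f(h^{-1}g_0^{-1}g\,h)-f(h^{-1}g^{-1}g_0\,h),
\end{equation*}
and since both arguments tend to $e$ and $f(e)\in\R$, continuity of $f$ forces the right-hand side to $0$. Uniform boundedness of the $\pi(g)$ then upgrades this to strong continuity on all of $V$, completing the construction. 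The remaining verifications (Hermitian symmetry of the form, well-definedness on the quotient, the homomorphism property) are routine and I would treat them briefly.
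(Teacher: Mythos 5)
Your proof is correct and matches the paper's in substance: the sufficiency direction is the same computation (unitarity turns the double sum into $\bigl\|\sum_i c_i\pi(g_i)v\bigr\|^2$, plus the observation that strong continuity of $\pi$ gives continuity of $f$). For necessity the paper merely cites \citet[Theorem 3.20 \& Proposition 3.35]{folland_course_2015}, and what you have written out --- the GNS construction on $\C[G]$, quotient by the null space, completion, the left regular action, and the verification of strong continuity on the dense set $\{[\delta_h]\}$ via $2f(e)-f(h^{-1}g_0^{-1}gh)-f(h^{-1}g^{-1}g_0h)\to 0$ together with uniform boundedness --- is precisely the standard proof of that cited result, so the two arguments coincide, yours simply being self-contained.
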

\begin{proof}
$\Longleftarrow\,:$ For $f$ as in (\ref{pd_representation_eq}), $g_1,\dots,g_N \in G$ and $c_1,\dots,c_N\in \C$,
$$
\begin{aligned}
\sum_{i=1}^N\sum_{j=1}^N\overline{c_i}c_jf(g_i^{-1}g_j) &= \sum_{i=1}^N\sum_{j=1}^N\overline{c_i}c_j\langle \pi(g_i^{-1}g_j)v,v\rangle  \\
&= \sum_{i=1}^N\sum_{j=1}^N \langle c_j\pi(g_j)v, c_i\pi(g_i)v\rangle \\
&= \left\langle \sum_{j=1}^N c_j\pi(g_j)v, \sum_{i=1}^Nc_i\pi(g_i)v\right\rangle\geq 0
\end{aligned}
$$
using the fact that $\pi$ is a unitary representation. Moreover, $f$ is continuous and bounded since $\pi$ is, so $f\in B_+(G)$. \par
$\Longrightarrow\,:$ It can be obtained by combining Theorem 3.20 and  Proposition 3.35 in \cite{folland_course_2015}.
\end{proof}
\begin{proposition}
$f\in B_\C(G)$ if and only if there is a continuous unitary representation $\pi: G\to GL_\C(V)$ and vectors $v,w\in V$ such that
\begin{equation}\label{pdd_representation_eq}f = \langle \pi(\cdot )v,w\rangle.\end{equation}
\end{proposition}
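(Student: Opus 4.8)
The plan is to reduce everything to Proposition~\ref{pd_representation_prop}, which already characterizes $B_+(G)$ as the diagonal matrix coefficients $\langle\pi(\cdot)v,v\rangle$, and to bridge to $B_\C(G)=\spn_\C(B_+(G))$ using polarization in one direction and a finite direct sum of representations in the other.

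For the implication $\Longleftarrow$, I would start from $f=\langle\pi(\cdot)v,w\rangle$ and express this off-diagonal matrix coefficient as a complex combination of diagonal ones. The key observation is that, for each fixed $g$, the map $(a,b)\mapsto\langle\pi(g)a,b\rangle$ is a sesquilinear form (linear in $a$, conjugate-linear in $b$), so the polarization identity yields
$$\langle\pi(g)v,w\rangle=\frac{1}{4}\sum_{k=0}^{3} i^k\,\langle\pi(g)(v+i^kw),\,v+i^kw\rangle.$$
Each term $g\mapsto\langle\pi(g)u_k,u_k\rangle$ with $u_k:=v+i^kw$ is a diagonal matrix coefficient and hence lies in $B_+(G)$ by Proposition~\ref{pd_representation_prop}, so $f$ is a finite $\C$-linear combination of PD functions, i.e. $f\in B_\C(G)$.

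For the implication $\Longrightarrow$, I would write $f=\sum_{j=1}^n c_j f_j$ with $c_j\in\C$ and $f_j\in B_+(G)$, and realize each $f_j=\langle\pi_j(\cdot)v_j,v_j\rangle$ via Proposition~\ref{pd_representation_prop}. The natural way to package these into a single representation is the Hilbert-space direct sum $\pi:=\bigoplus_{j=1}^n\pi_j$ acting on $V:=\bigoplus_{j=1}^n V_j$, which remains continuous and unitary because the sum is finite. Choosing $v:=(c_1v_1,\dots,c_nv_n)$ and $w:=(v_1,\dots,v_n)$ and using that the summands are mutually orthogonal, one gets
$$\langle\pi(g)v,w\rangle=\sum_{j=1}^n c_j\,\langle\pi_j(g)v_j,v_j\rangle=\sum_{j=1}^n c_j f_j(g)=f(g).$$
Absorbing the scalars $c_j$ into the first vector is the trick that replaces the diagonal vector of the PD case by the off-diagonal pair $(v,w)$.

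Both directions are essentially formal once Proposition~\ref{pd_representation_prop} is in hand, so I do not anticipate a serious obstacle. The only points demanding care are getting the conjugation bookkeeping right in the polarization identity (the inner product being conjugate-linear in its second argument, matching the convention used in the proof of Proposition~\ref{pd_representation_prop}) and confirming that a \emph{finite} direct sum of continuous unitary representations is again continuous and unitary. The infinite dimensionality of $\spn_\C(B_+(G))$ poses no difficulty, since every element of the span is by definition a finite combination.
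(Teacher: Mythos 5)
Your proof is correct. The direction $f\in B_\C(G)\Rightarrow f=\langle\pi(\cdot)v,w\rangle$ is essentially the paper's argument: the paper shows that matrix coefficients form a complex linear space (scalar multiples by rescaling $v$, sums via $\pi_1\oplus\pi_2$) containing $B_+(G)$, which is just a repackaging of your explicit choice $v=(c_1v_1,\dots,c_nv_n)$, $w=(v_1,\dots,v_n)$ in $\bigoplus_j\pi_j$. Where you genuinely diverge is the converse: the paper simply cites \citet[Lemma 2.1.4]{kaniuth_fourier_2018}, whereas you give a self-contained proof via the polarization identity
$$\langle\pi(g)v,w\rangle=\frac{1}{4}\sum_{k=0}^{3} i^k\,\bigl\langle\pi(g)(v+i^kw),\,v+i^kw\bigr\rangle,$$
each summand being a diagonal coefficient and hence in $B_+(G)$ by Proposition \ref{pd_representation_prop}. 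Your identity is stated for the convention in which the inner product is linear in the first slot and conjugate-linear in the second, which is indeed the convention used in the paper's proof of Proposition \ref{pd_representation_prop} (there $\overline{c_i}c_j\langle x,y\rangle=\langle c_jx,c_iy\rangle$), so the bookkeeping you flagged as the delicate point does check out. The trade-off is the usual one: the citation keeps the paper short and defers to a standard reference, while your polarization argument makes the proposition self-contained and, as a small bonus, shows explicitly that every element of the form $\langle\pi(\cdot)v,w\rangle$ is a combination of at most four PD functions coming from the \emph{same} representation.
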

\begin{proof}
$\Longrightarrow\,:$ We show that functions $f$ of the form (\ref{pdd_representation_eq}) form a complex linear space. Closure under scalar multiplication can be obtained by scaling $v$. For closure under addition, observe that
$$\langle \pi_1(\cdot )v_1,w_1\rangle+\langle \pi_2(\cdot )v_2,w_2\rangle = \langle (\pi_1\oplus\pi_2)(\cdot )(v_1\oplus v_2),w_1\oplus w_2\rangle$$
and $\pi_1\oplus\pi_2$ is a continuous unitary representation if $\pi_1$ and $\pi_2$ are. Thus, functions of the form (\ref{pdd_representation_eq}) form a complex linear space, and contain $B_+(G)$ by Proposition \ref{pd_representation_prop}, so they contain $B_\C(G)$. \par
$\Longleftarrow\,:$ See {\citet[Lemma 2.1.4]{kaniuth_fourier_2018}}.
\end{proof}
We have described PD and PD decomposable functions on $G$ in the language of representation theory. What about their analogues on $H\backslash G/H$?

\begin{lemma}\label{pd_invariant_lem}
For $f\in B_+(G)$, 
$$f = \langle \pi(\cdot )v,v\rangle$$
as in (\ref{pd_representation_eq}), we have that $f\in C{_\C}(H\backslash G/H)$ if and only if $\pi(h)v= v$ for all $h\in H$.
\end{lemma}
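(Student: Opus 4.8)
The plan is to prove both implications by direct computation, using only that $\pi$ is a unitary representation and that $f = \langle \pi(\cdot)v,v\rangle$; no cyclicity or minimality of the representation realizing $f$ is required, so the argument will apply to any pair $(\pi,v)$ as in \eqref{pd_representation_eq}.

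For the ($\Leftarrow$) direction I would assume $\pi(h)v = v$ for all $h\in H$ and verify bi-$H$-invariance of $f$ directly. Given $h_1,h_2\in H$ and $g\in G$, using that $\pi$ is a homomorphism and that $\pi(h_1)$ is unitary, so that $\pi(h_1)^\ast = \pi(h_1^{-1})$, I would compute
\begin{equation*}
f(h_1 g h_2) = \langle \pi(h_1)\pi(g)\pi(h_2)v, v\rangle = \langle \pi(g)\pi(h_2)v, \pi(h_1^{-1})v\rangle = \langle \pi(g)v, v\rangle = f(g),
\end{equation*}
where the last step uses $\pi(h_2)v = v$ and $\pi(h_1^{-1})v = v$ (valid since $h_1^{-1}\in H$). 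As $f$ is already continuous, being an element of $B_+(G)$, this shows $f\in C{_\C}(H\backslash G/H)$ under the identification of the latter with the bi-$H$-invariant continuous functions on $G$.

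For the ($\Rightarrow$) direction the key observation is that $e$ and every $h\in H$ lie in the same double coset, namely $HeH = H$, so invariance on double cosets forces $f(h) = f(e)$ for all $h\in H$. Rewriting this through $f = \langle\pi(\cdot)v,v\rangle$ gives $\langle \pi(h)v, v\rangle = \langle v,v\rangle = \|v\|^2$, a real quantity. I would then expand, using unitarity of $\pi(h)$ so that $\|\pi(h)v\| = \|v\|$,
\begin{equation*}
\|\pi(h)v - v\|^2 = \|\pi(h)v\|^2 - 2\Re\langle \pi(h)v, v\rangle + \|v\|^2 = \|v\|^2 - 2\|v\|^2 + \|v\|^2 = 0,
\end{equation*}
which yields $\pi(h)v = v$. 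The only genuine step here is this last one: recognizing that the scalar equality $\langle \pi(h)v, v\rangle = \|v\|^2$ together with norm preservation $\|\pi(h)v\| = \|v\|$ pins down $\pi(h)v = v$ via the equality case of Cauchy--Schwarz, equivalently via the vanishing of the expanded squared norm. I do not anticipate any real obstacle beyond correct bookkeeping with the unitary structure; the substance of the lemma is entirely captured by these two short identities.
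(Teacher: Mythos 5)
Your proof is correct and follows essentially the same route as the paper's: the forward implication uses the same identity $f(h)=f(e)$ coming from $h\in HeH$, and your direct expansion of $\|\pi(h)v-v\|^2$ is just an explicit form of the equality case of Cauchy--Schwarz that the paper invokes. No gaps.
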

\begin{proof}
$\Longleftarrow\,:$ If $\pi(h)v = v$ for all $h\in H$, then 
$$f(hgh') = \langle\pi(hgh')v,v\rangle = \langle\pi(g)\pi(h')v, \pi(h^{-1})v\rangle = \langle\pi(g)v,v\rangle = f(g)$$
for all $g\in G$ and $h,h'\in H$. Thus, $f\in C{_\C}(H\backslash G/H)$. \par
$\Longrightarrow\,:$ If $f\in C{_\C}(H\backslash G/H)$, for $h\in H$,
$$\langle \pi(h)v,\pi(h)v\rangle=\langle v,v\rangle = f(e) = f(h) = \langle\pi(h)v,v\rangle$$
where the left most equality uses the fact that $\pi(h)$ is unitary. Thus,
$$\langle\pi(h)v,v\rangle = \sqrt{\langle \pi(h)v,\pi(h)v\rangle\langle v,v\rangle}$$
and by the equality condition of Cauchy-Schwarz's inequality, we deduce that $\pi(h)v = v$.
\end{proof}
For $\pi$ a linear representation of $G$ into $GL_\C(V)$, write
\begin{equation}\label{invariant_subspace_eq}W := \{w\in V : \pi(h) w= w \;\forall\; h\in H\}\end{equation}
and let $p:V\to W$ denote the orthogonal projection. In Lemma \ref{pd_invariant_lem}, we have shown $p(v) = v$ for the corresponding $\pi$ and $v$. For the more general setting $f\in B_\C(G)$, we do not necessarily have $p(v) = v$ or $p(w)=w$, but rather the following weaker lemma. \par
The proof for the forward direction of this lemma is technical, requires quite a bit of work to prove, and relies on Zorn's lemma. The reason for this difficulty is the generality of the setting we are working in, and specifically the generality of the locally compact group $G$. With such generality, there are few results that we can use about its continuous unitary representations. However, this will pay off with the generality that we shall obtain in Theorem \ref{lie_to_homog_thm}.
\begin{lemma}\label{pdd_invariant_lem}
For $f\in B_\C(G)$, 
$$f = \langle \pi(\cdot )v,w\rangle$$
as in (\ref{pdd_representation_eq}), we have that $f\in C{_\C}(H\backslash G/H)$ if and only if we can also write $f$ as
$$f = \langle \pi(\cdot )p(v),p(w)\rangle$$
with $p$ as defined above.
\end{lemma}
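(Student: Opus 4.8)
The plan is to handle the two directions asymmetrically, with $\Longleftarrow$ a one-line computation and $\Longrightarrow$ carrying all the weight. For $\Longleftarrow$ I would assume $f = \langle \pi(\cdot)p(v), p(w)\rangle$ and simply verify bi-invariance directly: since $p(v), p(w)\in W$ are $H$-fixed, for $h, h'\in H$ one has $f(hgh') = \langle \pi(g)\pi(h')p(v), \pi(h^{-1})p(w)\rangle = \langle \pi(g)p(v), p(w)\rangle = f(g)$, using unitarity of $\pi$ and $\pi(h)^{\ast} = \pi(h^{-1})$; continuity is inherited from $\pi$, so $f\in C_\C(H\backslash G/H)$.

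For $\Longrightarrow$, the goal is to prove $\langle \pi(g)v, w\rangle = \langle \pi(g)p(v), p(w)\rangle$ for every $g$, which I would reach in two symmetric steps: first replace $v$ by $p(v)$, then $w$ by $p(w)$. First I would unpack the hypothesis $f\in C_\C(H\backslash G/H)$ into its left- and right-invariance content. Right-invariance $f(gh) = f(g)$ gives $\langle \pi(g)(\pi(h)v - v), w\rangle = 0$ for all $g$, equivalently $\pi(h)v - v \perp V_w$ for every $h\in H$, where $V_w := \overline{\operatorname{span}}\{\pi(g)w : g\in G\}$ is the cyclic subspace of $w$; left-invariance $f(hg)=f(g)$ gives the mirror relation controlling $\pi(h)w - w$.

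The engine of the argument is that a cyclic subspace is $G$-invariant and, $\pi$ being unitary, reducing, so the orthogonal projection $P_w$ onto $V_w$ commutes with every $\pi(h)$. Writing $v' := v - p(v)\in W^\perp$ and noting $\pi(h)v - v = \pi(h)v' - v'$ (because $p(v)$ is $H$-fixed), the relation above reads $P_w(\pi(h)v' - v') = 0$, whence $\pi(h)P_w v' = P_w v'$ for all $h$, i.e.\ $P_w v'\in W$. But $v'\in W^\perp$, so $\langle v', P_w v'\rangle = \|P_w v'\|^2 = 0$, forcing $P_w v' = 0$; thus $v'\perp V_w$ and $\langle \pi(g)v', w\rangle = \langle v', \pi(g^{-1})w\rangle = 0$, giving $f = \langle \pi(\cdot)p(v), w\rangle$. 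Running the identical argument on the pair $(p(v), w)$ — still bi-invariant — now via left-invariance and the cyclic subspace $V_{p(v)} := \overline{\operatorname{span}}\{\pi(g)p(v) : g\in G\}$ removes the non-fixed part of $w$ and yields $f = \langle \pi(\cdot)p(v), p(w)\rangle$.

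The main obstacle I anticipate is precisely what replaces averaging. When $H$ is compact one integrates $\pi(h)v$ over $H$ to land in $W$; here $H$ may be non-compact, so the only canonical map into $W$ is the orthogonal projection $p$, and the content of the lemma is that $p$ is still the right object. The delicate point is therefore the combination of two facts — that projection onto a cyclic (hence reducing) subspace commutes with $\pi(H)$, and that this projection sends the non-fixed part $v'$ back into $W$ — together with the norm-vanishing identity $\langle v', P_w v'\rangle = \|P_w v'\|^2 = 0$ that forces the answer to be expressed through $p$ itself rather than through some incidental $H$-fixed vector. I would carefully check that no type-I or separability hypothesis is tacitly needed here. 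If this targeted cyclic-subspace route met an obstruction, the fallback would be to decompose the (possibly non-decomposable) representation into an orthogonal family of cyclic subrepresentations via a maximality argument, which is where an appeal to Zorn's lemma would naturally enter.
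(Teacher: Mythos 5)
Your proof is correct, and the forward direction takes a genuinely different and substantially more elementary route than the paper's. The paper proves $\Longrightarrow$ by applying Zorn's lemma to a poset of closed subspaces $W\leq Z\leq V$ satisfying the bi-invariance identity, and then rules out a minimal element larger than $W$ by decomposing $\pi$ restricted to the cyclic Abelian subgroup $C_h$ generated by a single $h\in H$ via a projection-valued measure and invoking the injectivity of the inverse Fourier transform on $M_\C(\widehat C_h)$. You instead work with the $G$-cyclic subspace $V_w=\overline{\spn}\{\pi(g)w:g\in G\}$: right-invariance gives $\pi(h)v-v\perp V_w$, the projection $P_w$ onto $V_w$ commutes with every $\pi(g)$ by unitarity of the representation, and the chain $P_wv'\in W$, $v'\in W^{\perp}$, $\langle v',P_wv'\rangle=\|P_wv'\|^2=0$ forces $v'\perp V_w$, hence $f=\langle\pi(\cdot)p(v),w\rangle$; the symmetric pass with $V_{p(v)}$ and left-invariance then removes the non-fixed part of $w$. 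Every step is pure Hilbert-space geometry (closedness and $G$-invariance of $V_w$ and of its orthogonal complement, self-adjoint idempotence of $P_w$), so, as you suspected, no type-I, separability, or compactness hypotheses are tacitly needed; the argument covers exactly the same generality as the paper's and likewise yields the one-sided identities $B_\C(G)\cap C{_\C}(G/H)=B_\C(G/H)$ noted in the paper's closing remark. Your fallback (an orthogonal decomposition into cyclic pieces via a maximality argument) is essentially where the paper's Zorn-based proof lives, but your primary route never needs it: it is shorter, avoids Zorn's lemma and the spectral/Fourier machinery entirely, and makes transparent why the orthogonal projection $p$ is the canonical substitute for averaging over a non-compact $H$. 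The backward direction is the same one-line verification as in the paper.
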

\begin{proof}
$\Longleftarrow\,:$ If $g\in G$ and $h,h'\in H$,
$$f(hgh') = \langle \pi(hgh')p(v),p(w)\rangle = \langle \pi(g)\pi(h')p(v),\pi(h^{-1})p(w)\rangle = \langle \pi(g)p(v),p(w)\rangle=f(g)$$
so $f\in C_{{\C}}(H\backslash G/H)$. \par
$\Longrightarrow\,:$ fix $g\in G$ and consider the set
$$
\begin{aligned}
\P := \{&Z: W\leq Z\leq V \text{ s.t. } Z=\overline{Z} \text{ and } \langle \pi(g)v,w\rangle=\langle \pi(hgh')v,q(w)\rangle \\
&\forall \; h,h'\in H\text{ where }q: V\to Z \text{ is the orthogonal projection}\}.
\end{aligned}
$$
$\P$ is partially ordered by inclusion. We want to apply Zorn's lemma to $\P$. So we break down our proof into two steps: we first show in step I that every chain---i.e. totally ordered subset---of $\P$ has a lower bound in $\P$. Zorn's lemma then tells us that $\P$ has a minimal element. Then we show in step II that this minimal element is $W$. It may seem obvious, but it requires some care since a chain may be uncountable and need not be well-ordered. II, on the other hand, will require us to look more closely at the representation $\pi$. We will expand it in irreducible components along a cyclic subgroup of $H$.\par

$I:$ Let $\Ch\subset \P$ be a chain. We want to show that $\Ch$ has a lower bound in $\P$. This lower bound will, of course, be $Y:=\bigcap_{Z\in \Ch}Z$, so we only need to show $Y\in\P$. Note that for $Z\in\Ch$, $Y\subset Z$ so $Y^\perp \supset Z^\perp$, thus taking finite sums of elements and closure over such $Z$ we get $Y^\perp \supset \overline{\sum_{Z\in\Ch}Z^\perp}$. We can also see that for $Z'\in\Ch$, $Z' \supset \left(\sum_{Z\in\Ch}Z^\perp\right)^\perp$, so taking the intersection over such $Z'$ we get $Y \supset \left(\sum_{Z\in\Ch}Z^\perp\right)^\perp$ and taking orthogonal complements $Y^\perp \subset \overline{\sum_{Z\in\Ch}Z^\perp}$. So we have shown
\begin{equation}\label{ortho_complement_eq}
Y^\perp= \overline{\sum_{Z\in\Ch}Z^\perp}.
\end{equation}
Let $r:V\to Y$, $r^{\perp}:V\to Y^\perp$ be the orthogonal projections, and for $Z\in\Ch$, let $q_Z: V\to Z$, $q_Z^\perp: V\to Z^\perp$ be the orthogonal projections. We have for such $Z$ and $h,h'\in H$,
$$\langle \pi(hgh')v,q_Z(w)\rangle = \langle \pi(g)v,w\rangle=\langle \pi(hgh')v,w\rangle = \langle \pi(hgh')v,q_Z(w)\rangle + \langle \pi(hgh')v,q_Z^\perp(w)\rangle$$
so $\langle \pi(hgh')v,q_Z^\perp(w)\rangle=0$. Similarly, to show $Y\in\P$, we only need to show $\langle \pi(hgh')v,r^\perp(w)\rangle=0$. Now by (\ref{ortho_complement_eq}), $r^\perp(w)$ can be written as a series of countably many elements of the $Z^\perp$ for $Z\in\Ch$, so $q_Z^\perp(r^\perp(w))$ gets arbitrarily close to $r^\perp(w)$ as $Z$ decreases (we don't use the notion of limit here as $\Ch$ may be uncountable). Now $q_Z^\perp(w) = q_Z^\perp(r^\perp(w))$, so $\langle \pi(hgh')v,q_Z^\perp(w)\rangle=0$ for all $Z\in\Ch$ implies $\langle \pi(hgh')v,r^\perp(w)\rangle=0$, and we are done. \par

$II:$ By Zorn's lemma, $\P$ has a minimal element, which we denote by $Z$ and write
$q:V\to Z$ for the corresponding orthogonal projection. If $q(w)=p(w)$, then by minimality of $Z$ we have $Z=W$. Otherwise, suppose first $q(w)\neq p(w)$. Then there is $h\in H$ such that $\pi(h)q(w) \neq q(w)$. Let $C_h$ be the cyclic group generated by $h$. For conciseness, we will treat the cases $C_h$ finite and $C_h$ infinite together and only use the fact that $C_h$ is Abelian. Indeed, {\citet[Theorem 4.45]{folland_course_2015}} tells us that we can decompose $\pi|_{C_h}$ into irreducible components in the form
$$\pi(h^k) = \int_{\widehat C_h} \xi(h^k)dP(\xi)$$
for all $k\in\Z$, for some regular $V$-projection-valued measure $P$ on $\widehat C_h$ (see {\citet[Theorem 1.38]{folland_course_2015}} and the following discussion for the definition of a projection-valued measure). Then
    $$
    \begin{aligned}
    \langle \pi(g)v,w\rangle &=\langle \pi(h'gh'')v,\pi(h^k)q(w)\rangle \\
    &= \left\langle \pi(h'gh'')v,\int_{\widehat C_h}\xi(h^k)dP(\xi)q(w)\right\rangle \\
    &= \int_{\widehat C_h}\xi(h^k)dP_{\pi(h'gh'')v,q(w)}(\xi) \\
    &= \F^{-1}P_{\pi(h'gh'')v,q(w)}(h^k)
    \end{aligned}
    $$
for all $k\in\Z$, where $P_{\pi(h'gh'')v,q(w)}$ is a finite complex Radon measure, i.e. $P_{\pi(h'gh'')v,q(w)}\in M_\C(\widehat C_h)$, and $\F^{-1}$ is the inverse Fourier transform. Then by injectivity of $\F^{-1}$ we must have
$$P_{\pi(h'gh'')v,q(w)} = \langle \pi(g)v,w\rangle\delta_1$$
where $\delta_1$ is the Dirac measure at $\xi = 1$. \par
    Now consider
    $$Y := \{z\in Z : \pi(h)z = z\} $$
    where $h$ is as before. So for all $h',h''\in H$,
    $$
    \begin{aligned}
    \langle \pi(g)v,w\rangle &= P_{\pi(h'gh'')v,q(w)}(1) \\
    &= \int_{\{1\}}dP_{\pi(h'gh'')v,q(w)}(\xi) \\
    &= \left\langle \pi(h'gh'')v,\int_{\{1\}} dP(\xi)q(w)\right\rangle \\
    &= \langle \pi(h'gh'')v,P(1)q(w)\rangle \\
    &= \langle \pi(h'gh'')v,r(w)\rangle
    \end{aligned}
    $$
    where $r:V\to Y$ is the orthogonal projection. Hence, $Y\in \P$. But $\pi(h)q(w) \neq q(w)$ so $r(w) \neq q(w)$, which implies $Y<Z$. This contradicts the minimality of $Z$. Thus, $Z=W$, and hence $W\in \P$. \par
    Now by noting that
    $$\langle \pi(hgh')v,p(w)\rangle = \langle q(v),\pi((hgh')^{-1})p(w)\rangle = \overline{\langle\pi(h'^{-1}g^{-1}h^{-1})p(w),v\rangle},$$
    for all $h,h'\in H$, we can apply the same argument on $v$ to obtain
    $$\langle \pi(g)v,w\rangle = \langle \pi(h'gh'')p(v),p(w)\rangle$$
    for all $h',h''\in H$. Finally, $g\in G$ was arbitrary, so we are done..
\end{proof}
We are now in a position to prove Theorem \ref{lie_to_homog_thm} in its full generality, which we restate here for convenience.
\lietohomogthm*
\begin{proof}
$\supseteq\,:$ For part 1, if $f\in B_\R(H\backslash G/H)$ then $f\in C{_\C}(H\backslash G/H)$ by definition, and $f$ has a PD decomposition, so $f\in B_\R(G)$. Part 2 is similar. \par
$\subseteq\,:$
    For part 1, let $f\in B_\R(G)\cap C{_\C}(H\backslash G/H)$. By Proposition \ref{pd_representation_prop}, there are continuous unitary representations $\pi_\pm: G \to GL_\C(V_\pm)$, and $v_\pm\in V_\pm$ such that
    $$f = \langle \pi_+(\cdot )v_+,v_+\rangle -\langle \pi_-(\cdot )v_-,v_-\rangle = \langle (\pi_+\oplus\pi_-)(\cdot )(v_+\oplus v_-), v_+\oplus(-v_-)\rangle.$$
    Define $W_\pm \leq V_\pm$ as
    $$W_\pm := \{w\in V_\pm : \pi_\pm(h) w= w \;\forall\; h\in H\}$$
    and $p_\pm: V_\pm \to W_\pm$ the orthogonal projections. For $W$ as in (\ref{invariant_subspace_eq}) and $p:V\to W$ the orthogonal projection, observe that $W=W_+\oplus W_-$ and $p = p_+\oplus p_-$. Then by Lemma \ref{pdd_invariant_lem} we have
    $$\begin{aligned}
    f &= \langle (\pi_+\oplus\pi_-)(\cdot )p(v_+\oplus v_-), p(v_+\oplus(-v_-))\rangle \\
    &= \langle (\pi_+\oplus\pi_-)(\cdot )(p_+(v_+)\oplus p_-(v_-)), p_+(v_+)\oplus p_-(-v_-)\rangle \\
    &= \langle \pi_+(\cdot )p_+(v_+),p_+(v_+)\rangle - \langle\pi_-(\cdot )p_-(v_-),p_-(v_-)\rangle.
    \end{aligned}$$
    Now $\langle\pi_\pm(\cdot )p_\pm(v_\pm),p_\pm(v_\pm)\rangle\in B_+(H\backslash G/H)$ by Lemma \ref{pd_invariant_lem}, so $f\in B_\R(H\backslash G/H)$ and we are done. \par
    For part 2, the proof follows in a similar way by first writing $f$ as 
    $$f = \langle \pi_+(\cdot )v_+,v_+\rangle -\langle \pi_-(\cdot )v_-,v_-\rangle +i\langle \pi_{+i}(\cdot )v_{+i},v_{+i}\rangle-i\langle \pi_{-i}(\cdot )v_{-i},v_{-i}\rangle.$$
\end{proof}
\begin{remark} \normalfont
    Note that the proof of Lemma \ref{pdd_invariant_lem} also gives us the results $B_\R(G) \cap C{_\C}(G/H) = B_\R(G/H)$ and $B_\C(G) \cap C{_\C}(G/H) = B_\C(G/H)$.
\end{remark}

\section*{Appendix D.}\label{laplace_app}
In this appendix we investigate properties of the Laplace-Beltrami operator on the Lie group $G$ equipped with a left-invariant Riemannian metric, which are needed in Section \ref{suf_cond_sec}. We will use the following definition of the Laplace-Beltrami operator:
$$\Delta := \sum_{i=1}^n(X_i^2-\nabla_{X_i}X_i)$$
locally around a point $g\in G$, where $X_1,\dots,X_n$ is a local orthonormal frame of vector fields around $g$ and $\nabla$ is the Levi-Civita connection with respect to the metric on $G$. This definition of $\Delta$ is independent of the choice of local orthonormal frame. \par
In the proof of Theorem \ref{bernstein_thm}, we implicitly use the following:
\begin{proposition}
Let $G$ be a connected unimodular Lie group. If $X_1,\dots,X_n$ is a basis of left-invariant vector fields for $G$, and we equip $G$ with the left-invariant Riemannian metric making $X_1,\dots,X_n$ orthonormal, then the Laplace-Beltrami operator $\Delta$ can be written as
    $$\Delta = \sum_{i=1}^nX_i^2.$$
\end{proposition}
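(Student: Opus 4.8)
The plan is to work directly from the given definition $\Delta = \sum_{i=1}^n(X_i^2 - \nabla_{X_i}X_i)$ and to show that the first-order ``drift'' term $\sum_i \nabla_{X_i}X_i$ vanishes precisely because $G$ is unimodular; the claim then follows immediately. Note first that, since both the frame $X_1,\dots,X_n$ and the metric are left-invariant, the $X_i$ are orthonormal at every point and hence form a \emph{global} orthonormal frame, so we are entitled to use them in the (frame-independent) definition of $\Delta$ throughout.

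First I would apply the Koszul formula
\begin{equation*}
2\langle \nabla_X Y, Z\rangle = X\langle Y,Z\rangle + Y\langle X,Z\rangle - Z\langle X,Y\rangle + \langle [X,Y],Z\rangle - \langle [X,Z],Y\rangle - \langle [Y,Z],X\rangle
\end{equation*}
to left-invariant vector fields $X,Y,Z$. For such fields the inner products $\langle Y,Z\rangle$, $\langle X,Z\rangle$, $\langle X,Y\rangle$ are constant functions on $G$, so the first three derivative terms vanish. Setting $X=Y=X_i$ and using $[X_i,X_i]=0$ then yields $\langle \nabla_{X_i}X_i, Z\rangle = -\langle [X_i,Z],X_i\rangle$ for every left-invariant $Z$.

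Next I would sum over $i$ and recognise a trace. Writing $\operatorname{ad}_Z(W) = [Z,W]$ for the adjoint action on the Lie algebra $\g$ of $G$, one has
\begin{equation*}
\sum_{i=1}^n \langle \nabla_{X_i}X_i, Z\rangle = -\sum_{i=1}^n \langle [X_i,Z],X_i\rangle = \sum_{i=1}^n \langle \operatorname{ad}_Z(X_i),X_i\rangle = \operatorname{tr}(\operatorname{ad}_Z),
\end{equation*}
the trace being computed in the orthonormal basis $\{X_i\}$. The crux of the argument, and the step I expect to be the main obstacle, is the equivalence between the vanishing of this trace and unimodularity: for a connected Lie group the modular function is $g\mapsto |\det\operatorname{Ad}(g)|$, and since $\det\operatorname{Ad}(\exp Z) = e^{\operatorname{tr}(\operatorname{ad}_Z)}$, unimodularity is equivalent to $\operatorname{tr}(\operatorname{ad}_Z)=0$ for all $Z\in\g$. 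Granting this, $\langle \sum_i \nabla_{X_i}X_i, Z\rangle = 0$ for every left-invariant $Z$; as these span each tangent space, $\sum_i \nabla_{X_i}X_i = 0$ pointwise, whence $\Delta = \sum_i X_i^2$. The remaining effort is purely the sign bookkeeping in the Koszul computation, which is routine.
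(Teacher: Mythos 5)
Your proposal is correct and follows essentially the same route as the paper: both reduce the claim to showing $\sum_i\nabla_{X_i}X_i=0$ and identify $\bigl\langle\sum_i\nabla_{X_i}X_i,Z\bigr\rangle$ with $\operatorname{tr}(\operatorname{ad}_Z)$, which vanishes by unimodularity; your Koszul-formula computation is just a repackaging of the paper's use of metric compatibility and torsion-freeness with $Z=X_i$. The only difference is that you sketch a proof of the equivalence between unimodularity and $\operatorname{tr}(\operatorname{ad}_Z)=0$ (via the modular function and connectedness), whereas the paper cites this as \citet[Lemma 6.3]{milnor_curvatures_1976}.
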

\begin{proof}
We have
$$\Delta = \sum_{i=1}^n(X_i^2-\nabla_{X_i}X_i),$$
so we need to show that $\sum_{i=1}^n\nabla_{X_i}X_i=0$. \par
$G$ being unimodular is equivalent to the adjoint representation $\text{ad}: \g\to \text{Lie}(\text{Aut}(\g))$ of the Lie algebra $\g$ of $G$ having vanishing trace at every $X\in\g$ \cite[Lemma 6.3]{milnor_curvatures_1976}, where $\text{Lie}(\text{Aut}(\g))$ is the Lie algebra of $\text{Aut}(\g)$, the Lie group of linear automorphisms of $\g$. So for all $i$,
\begin{align*}
0&=\tr(\text{ad}(X_i)) \\
&= \sum_{j=1}^n\langle [X_i,X_j], X_j\rangle_\g \\
&=\sum_{j=1}^n(\underbrace{\langle\nabla_{X_i}X_j,X_j\rangle_\g}_{=0}-\langle\nabla_{X_j}X_i,X_j\rangle_\g) \\
&= -\sum_{j=1}^n\langle\nabla_{X_j}X_i,X_j\rangle_\g \\
&= \sum_{j=1}^n\langle\nabla_{X_j}X_j,X_i\rangle_\g \\
&= \left\langle\sum_{j=1}^n\nabla_{X_j}X_j,X_i\right\rangle_\g
\end{align*}
where we used the orthonormality of the $X_j$ in the fourth and fifth equality. So $\sum_{i=1}^n\nabla_{X_i}X_i=0$ at $e$, and thus everywhere on $G$ by left-invariance. 
\end{proof}
Now in the proof of Theorem \ref{bernstein_cor}, we need the following:
\begin{proposition}
Let $G$ be a Lie group and let $H$ be a compact Lie subgroup of $G$. Suppose $G$ is equipped with a left-invariant, right-$H$-invariant Riemannian metric. Equip $G/H$ with the Riemannian metric induced by the one on $G$. Then, for $\tilde f\in C^\infty_{{\C}}(G/H)$,
$$\Delta(\tilde f\circ \pi_R) = \Delta \tilde f\circ \pi_R$$
where $\pi_R:G\to G/H$ is the projection, and $\Delta$ is the Laplace-Beltrami operator with respect to the respective metrics.
\end{proposition}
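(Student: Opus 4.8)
The plan is to exploit the fact, noted just before the statement, that the hypotheses make $\pi_R\colon G\to G/H$ a Riemannian submersion, and to reduce the identity to the assertion that the fibers of $\pi_R$ are minimal submanifolds of $G$. Writing $F:=\tilde f\circ\pi_R$, I would work locally around a point $g\in G$ with an orthonormal frame $X_1,\dots,X_n$ adapted to the submersion: $X_1,\dots,X_m$ horizontal and basic (each $\pi_R$-related to a vector field $\bar X_i$ on $G/H$, with $\bar X_1,\dots,\bar X_m$ orthonormal) and $X_{m+1},\dots,X_n$ spanning the vertical distribution tangent to the fibers. Since $F$ is constant along fibers, $X_jF=0$ for $j>m$, and for the basic horizontal fields $X_iF=(\bar X_i\tilde f)\circ\pi_R$, hence $X_i^2F=(\bar X_i^2\tilde f)\circ\pi_R$.

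Next I would expand $\Delta F=\sum_{i=1}^n\big(X_i^2F-(\nabla_{X_i}X_i)F\big)$ using the frame definition of $\Delta$ given at the start of this appendix. O'Neill's formula for a Riemannian submersion identifies the horizontal part of $\nabla_{X_i}X_i$ (for basic horizontal $X_i$) with the horizontal lift of $\nabla^{G/H}_{\bar X_i}\bar X_i$, while the vertical part annihilates $F$; thus the horizontal sum collapses to $\sum_{i=1}^m(\bar X_i^2\tilde f-(\nabla^{G/H}_{\bar X_i}\bar X_i)\tilde f)\circ\pi_R=(\Delta\tilde f)\circ\pi_R$. The vertical terms $i>m$ contribute $-\sum_{i>m}(\nabla_{X_i}X_i)F$, and since $F$ only feels horizontal directions this equals minus the mean curvature vector $\mathbf H$ of the fibers acting on $F$. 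Hence $\Delta F=(\Delta\tilde f)\circ\pi_R-\mathbf H\,F$, and everything reduces to showing the fibers are minimal, i.e. $\mathbf H=0$.

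For the minimality, I would first observe that by left-invariance each fiber $gH$ is the isometric image of $H=\pi_R^{-1}(eH)$ under $L_g$, and that $L_h$ preserves $H$ for $h\in H$, so it suffices to show the mean curvature of $H$ vanishes at $e$. Set $\g=\h\oplus\mathfrak m$ with $\mathfrak m=\h^\perp$, so $\mathfrak m$ is the horizontal space at $e$. The crucial structural input is that right-$H$-invariance of the left-invariant metric is equivalent to $\operatorname{Ad}(H)$-invariance of the inner product on $\g$; this makes the decomposition reductive, $[\h,\mathfrak m]\subseteq\mathfrak m$. Extending an orthonormal basis $U_1,\dots,U_k$ of $\h$ to left-invariant fields (which remain tangent to $H$ along $H$), the Koszul formula loses its metric terms and gives $\langle\nabla_{U_j}U_j,Z\rangle=\langle\operatorname{ad}(Z)U_j,U_j\rangle$ for $Z\in\mathfrak m$. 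Summing, the mean curvature pairs with $Z$ to give $\tr\big(P_\h\circ\operatorname{ad}(Z)|_\h\big)$, which vanishes because reductivity forces $\operatorname{ad}(Z)$ to map $\h$ into $\mathfrak m$, so every diagonal entry is zero; thus $\mathbf H=0$.

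The main obstacle is the middle step: correctly organizing the O'Neill bookkeeping so that the horizontal terms assemble into $(\Delta\tilde f)\circ\pi_R$ and the leftover is precisely the fiber mean curvature. This requires care that the horizontal fields be basic with orthonormal projections, and that the vertical components of $\nabla_{X_i}X_i$ genuinely annihilate $F$. Once that identity is in place, the geometric content is entirely in the minimality of the fibers, where the one real idea is that right-$H$-invariance upgrades the splitting to a reductive one, killing the relevant trace.
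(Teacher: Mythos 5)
Your proof is correct and follows the same skeleton as the paper's: an orthonormal frame adapted to the Riemannian submersion, the observation that vertical derivatives annihilate $\tilde f\circ\pi_R$ while basic horizontal ones push down, and O'Neill's identification of the horizontal part of $\nabla_{X_i}X_i$ with the lift of $\nabla_{\tilde X_i}\tilde X_i$. The one place you genuinely diverge is the disposal of the vertical terms $\sum_i\nabla_{U_i}U_i$. The paper invokes \citet[Theorem 9.80]{besse_einstein_1987} to conclude that the fibers are totally geodesic, so each $\nabla_{U_i}U_i$ is itself vertical; you instead isolate the weaker fact that only the horizontal (mean-curvature) component of the sum matters, and prove minimality of the fibers from scratch: left-invariance reduces everything to the fiber $H$ at $e$, right-$H$-invariance is equivalent to $\operatorname{Ad}(H)$-invariance of the inner product on $\g$, which makes $\g=\h\oplus\mathfrak{m}$ reductive, and then the Koszul formula for left-invariant fields gives $\langle\nabla_{U_j}U_j,Z\rangle=\langle[Z,U_j],U_j\rangle=0$ for $Z\in\mathfrak{m}$ since $[Z,U_j]=-[U_j,Z]\in\mathfrak{m}\perp\h$. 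This is sound (in fact your computation shows each $\nabla_{U_j}U_j$ is vertical for the left-invariant frame, recovering the totally geodesic statement, not merely minimality), and what it buys is a self-contained, elementary replacement for the citation, at the cost of a slightly longer argument. Both routes use the same hypotheses in the same way, so neither is more general than the other.
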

\begin{proof}
    This follows many of the steps of {\citet[proof of Theorem 1.5]{bergery_laplacians_1982}}. Write $n$ and $m$ for the dimensions of $G$ and $H$, respectively. For $g\in G$, take a local orthonormal frame $\tilde X_1,\dots, \tilde X_{n-m}$ for $T_{gH}G/H$ around $gH$. This lifts to a local orthonormal frame $X_1,\dots,X_{n-m}$ for $\Ker (\pi_{R*})^\perp$ around $g$, subbundle of $TG$. So $\pi_{R*}X_i = \tilde X_i$ for all $i$. Also take a local orthonormal frame $U_1,\dots,U_m$ for $\Ker (\pi_{R*})$ around $g$. Then, the Laplace-Beltrami operator on $G$ can be locally written as
$$\Delta = \sum_{i=1}^{n-m}(X_i^2-\nabla_{X_i}X_i) + \sum_{i=1}^m(U_i^2-\nabla_{U_i}U_i)$$
and the Laplace-Beltrami operator on $G/H$ can be locally written as
$$\Delta = \sum_{i=1}^{n-m}(\tilde X_i^2-\nabla_{\tilde X_i}\tilde X_i)$$
where $\nabla$ are the Levi-Civita connections induced by the respective metrics. By the chain rule, $U(\tilde f\circ\pi_R) = \pi_{R*} U(\tilde f)\circ\pi_R = 0$ for all $U\in \Ker(\pi_{R*})$. Similarly, the chain rule gives $X_i^2(\tilde f\circ\pi_{R*}) =\pi_{R*}X_i\pi_{R*}X_i(\tilde f)\circ\pi_R =\tilde X_i^2 (\tilde f)\circ\pi_R$ for all $i$. Now by {\citet[Theorem 9.80]{besse_einstein_1987}} if $U\in \Ker(\pi_{R*})$ then $\nabla_UU\in \Ker (\pi_{R*})$ (we use here the left-invariance and right-$H$-invariance of the metric on $G$, and the compactness of $H$). Finally, by {\citet[Definition 9.23 \& following discussion]{besse_einstein_1987}}, $\nabla_{X_i}X_i (\tilde f\circ\pi_R) = \pi_{R*}(\nabla_{X_i}X_i)(\tilde f)\circ\pi_R = \nabla_{\tilde X_i}\tilde X_i (\tilde f)\circ \pi_R$ for all $i$. Thus, we obtain

\begin{align}
\Delta(\tilde f\circ \pi_R) &= \sum_{i=1}^{n-m}(X_i^2-\nabla_{X_i}X_i)(\tilde f\circ\pi_R) + \sum_{i=1}^m(U_i^2-\nabla_{U_i}U_i)(\tilde f\circ\pi_R) \\
&= \sum_{i=1}^{n-m}(\tilde X_i^2-\nabla_{\tilde X_i}\tilde X_i)(\tilde f)\circ\pi_R \\
&= \Delta \tilde f\circ\pi_R.
\end{align}

\end{proof}

\vskip 0.2in
\bibliography{references}

\begin{thebibliography}{57}
\providecommand{\natexlab}[1]{#1}
\providecommand{\url}[1]{\texttt{#1}}
\expandafter\ifx\csname urlstyle\endcsname\relax
  \providecommand{\doi}[1]{doi: #1}\else
  \providecommand{\doi}{doi: \begingroup \urlstyle{rm}\Url}\fi

\bibitem[Abraham and Marsden(1978)]{Abraham1978}
Ralph Abraham and Jerrold~E. Marsden.
\newblock \emph{{Foundations of Mechanics}}.
\newblock Addison-Wesley, Reading, MA, 2nd edition, 1978.

\bibitem[Abraham et~al.(1988)Abraham, Marsden, and Ratiu]{mta}
Ralph Abraham, Jerrold~E. Marsden, and Tudor~S. Ratiu.
\newblock \emph{{Manifolds, Tensor Analysis, and Applications}}, volume~75.
\newblock Applied Mathematical Sciences. Springer-Verlag, 1988.

\bibitem[Alpay(1991)]{alpay_remarks_1991}
Daniel Alpay.
\newblock Some remarks on reproducing kernel {Krein} spaces.
\newblock \emph{Rocky Mountain Journal of Mathematics}, 21\penalty0
  (4):\penalty0 1189--1205, December 1991.
\newblock ISSN 0035-7596.
\newblock \doi{10.1216/rmjm/1181072903}.
\newblock URL
  \url{https://projecteuclid.org/journals/rocky-mountain-journal-of-mathematics/volume-21/issue-4/Some-Remarks-on-Reproducing-Kernel-Krein-Spaces/10.1216/rmjm/1181072903.full}.
\newblock Publisher: Rocky Mountain Mathematics Consortium.

\bibitem[Azangulov et~al.(2023{\natexlab{a}})Azangulov, Smolensky, Terenin, and
  Borovitskiy]{azangulov_stationary_2023}
Iskander Azangulov, Andrei Smolensky, Alexander Terenin, and Viacheslav
  Borovitskiy.
\newblock Stationary kernels and {Gaussian} processes on {Lie} groups and their
  homogeneous spaces {II}: non-compact symmetric spaces, April
  2023{\natexlab{a}}.
\newblock URL \url{http://arxiv.org/abs/2301.13088}.
\newblock arXiv:2301.13088 [cs, math, stat].

\bibitem[Azangulov et~al.(2023{\natexlab{b}})Azangulov, Smolensky, Terenin, and
  Borovitskiy]{azangulov_stationary_2023-1}
Iskander Azangulov, Andrei Smolensky, Alexander Terenin, and Viacheslav
  Borovitskiy.
\newblock Stationary kernels and {Gaussian} processes on {Lie} groups and their
  homogeneous spaces {I}: the compact case, January 2023{\natexlab{b}}.
\newblock URL \url{http://arxiv.org/abs/2208.14960}.
\newblock arXiv:2208.14960 [cs, math, stat].

\bibitem[Berg et~al.(1984)Berg, Christensen, and Ressel]{berg_harmonic_1984}
Christian Berg, Jens Peter~Reus Christensen, and Paul Ressel.
\newblock \emph{Harmonic {Analysis} on {Semigroups}}, volume 100 of
  \emph{Graduate {Texts} in {Mathematics}}.
\newblock Springer, New York, NY, 1984.
\newblock ISBN 978-1-4612-7017-1 978-1-4612-1128-0.
\newblock \doi{10.1007/978-1-4612-1128-0}.
\newblock URL \url{http://link.springer.com/10.1007/978-1-4612-1128-0}.

\bibitem[Bergery and Bourguignon(1982)]{bergery_laplacians_1982}
Lionel~Bérard Bergery and Jean-Pierre Bourguignon.
\newblock Laplacians and {Riemannian} submersions with totally geodesic fibres.
\newblock \emph{Illinois Journal of Mathematics}, 26\penalty0 (2):\penalty0
  181--200, June 1982.
\newblock ISSN 0019-2082, 1945-6581.
\newblock \doi{10.1215/ijm/1256046790}.
\newblock URL
  \url{https://projecteuclid.org/journals/illinois-journal-of-mathematics/volume-26/issue-2/Laplacians-and-Riemannian-submersions-with-totally-geodesic-fibres/10.1215/ijm/1256046790.full}.
\newblock Publisher: Duke University Press.

\bibitem[Besse(1987)]{besse_einstein_1987}
Arthur~L. Besse.
\newblock \emph{Einstein {Manifolds}}.
\newblock Springer, Berlin, Heidelberg, 1987.
\newblock ISBN 978-3-540-74120-6 978-3-540-74311-8.
\newblock \doi{10.1007/978-3-540-74311-8}.
\newblock URL \url{http://link.springer.com/10.1007/978-3-540-74311-8}.

\bibitem[Bognár(1974)]{bognar_indefinite_1974}
János Bognár.
\newblock \emph{Indefinite {Inner} {Product} {Spaces}}.
\newblock Springer, Berlin, Heidelberg, 1974.
\newblock ISBN 978-3-642-65569-2 978-3-642-65567-8.
\newblock \doi{10.1007/978-3-642-65567-8}.
\newblock URL \url{http://link.springer.com/10.1007/978-3-642-65567-8}.

\bibitem[Borovitskiy et~al.(2022)Borovitskiy, Terenin, Mostowsky, and
  Deisenroth]{borovitskiy_matern_2022}
Viacheslav Borovitskiy, Alexander Terenin, Peter Mostowsky, and Marc~Peter
  Deisenroth.
\newblock Mat{\textbackslash}'ern {Gaussian} processes on {Riemannian}
  manifolds, March 2022.
\newblock URL \url{http://arxiv.org/abs/2006.10160}.
\newblock arXiv:2006.10160 [cs, stat].

\bibitem[Calinon(2020)]{calinon_gaussians_2020}
Sylvain Calinon.
\newblock Gaussians on {Riemannian} {Manifolds}: {Applications} for {Robot}
  {Learning} and {Adaptive} {Control}, March 2020.
\newblock URL \url{http://arxiv.org/abs/1909.05946}.
\newblock arXiv:1909.05946 [cs].

\bibitem[Cho et~al.(2019)Cho, DeMeo, Peng, and Berger]{cho_large-margin_2019}
Hyunghoon Cho, Benjamin DeMeo, Jian Peng, and Bonnie Berger.
\newblock Large-margin classification in hyperbolic space.
\newblock In \emph{Proceedings of the {Twenty}-{Second} {International}
  {Conference} on {Artificial} {Intelligence} and {Statistics}}, pages
  1832--1840. PMLR, April 2019.
\newblock URL \url{https://proceedings.mlr.press/v89/cho19a.html}.
\newblock ISSN: 2640-3498.

\bibitem[Cristianini and Ricci(2008)]{cristianini_support_2008}
Nello Cristianini and Elisa Ricci.
\newblock Support {Vector} {Machines}.
\newblock In Ming-Yang Kao, editor, \emph{Encyclopedia of {Algorithms}}, pages
  928--932. Springer US, Boston, MA, 2008.
\newblock ISBN 978-0-387-30162-4.
\newblock \doi{10.1007/978-0-387-30162-4_415}.
\newblock URL \url{https://doi.org/10.1007/978-0-387-30162-4_415}.

\bibitem[Da~Costa et~al.(2023{\natexlab{a}})Da~Costa, Mostajeran, and
  Ortega]{da_costa_gaussian_2023}
Natha{\"e}l Da~Costa, Cyrus Mostajeran, and Juan-Pablo Ortega.
\newblock The gaussian kernel on the circle and spaces that admit isometric
  embeddings of the circle.
\newblock In Frank Nielsen and Fr{\'e}d{\'e}ric Barbaresco, editors,
  \emph{Geometric Science of Information}, pages 426--435, Cham,
  2023{\natexlab{a}}. Springer Nature Switzerland.
\newblock ISBN 978-3-031-38271-0.

\bibitem[Da~Costa et~al.(2023{\natexlab{b}})Da~Costa, Mostajeran, Ortega, and
  Said]{da_costa_invariant_2023}
Nathael Da~Costa, Cyrus Mostajeran, Juan-Pablo Ortega, and Salem Said.
\newblock Invariant kernels on {Riemannian} symmetric spaces: a
  harmonic-analytic approach, October 2023{\natexlab{b}}.
\newblock URL \url{http://arxiv.org/abs/2310.19270}.
\newblock arXiv:2310.19270 [cs, math, stat].

\bibitem[Faraut and Harzallah(1974)]{faraut_distances_1974}
Jacques Faraut and Khelifa Harzallah.
\newblock Distances hilbertiennes invariantes sur un espace homogène.
\newblock \emph{Annales de l'Institut Fourier}, 24\penalty0 (3):\penalty0
  171--217, 1974.
\newblock ISSN 1777-5310.
\newblock \doi{10.5802/aif.525}.
\newblock URL \url{http://www.numdam.org/item/AIF_1974__24_3_171_0/#}.

\bibitem[Feragen et~al.(2014)Feragen, Lauze, and
  Hauberg]{feragen_geodesic_2014}
Aasa Feragen, Francois Lauze, and Søren Hauberg.
\newblock Geodesic {Exponential} {Kernels}: {When} {Curvature} and {Linearity}
  {Conflict}.
\newblock Technical Report arXiv:1411.0296, arXiv, November 2014.
\newblock URL \url{http://arxiv.org/abs/1411.0296}.
\newblock arXiv:1411.0296 [cs] type: article.

\bibitem[Folland(2015)]{folland_course_2015}
Gerald~B. Folland.
\newblock \emph{A {Course} in {Abstract} {Harmonic} {Analysis}}.
\newblock Chapman and Hall/CRC, New York, 2 edition, December 2015.
\newblock ISBN 978-0-429-15469-0.
\newblock \doi{10.1201/b19172}.

\bibitem[Gallot et~al.(2004)Gallot, Hulin, and
  Lafontaine]{gallot_riemannian_2004}
Sylvestre Gallot, Dominique Hulin, and Jacques Lafontaine.
\newblock \emph{Riemannian {Geometry}}.
\newblock Universitext. Springer, Berlin, Heidelberg, 2004.
\newblock ISBN 978-3-540-20493-0 978-3-642-18855-8.
\newblock \doi{10.1007/978-3-642-18855-8}.
\newblock URL \url{http://link.springer.com/10.1007/978-3-642-18855-8}.

\bibitem[Gaudry et~al.(1990)Gaudry, Meda, and Pini]{gaudry_heat_1990}
G.~I. Gaudry, S.~Meda, and R.~Pini.
\newblock A heat semigroup version of {Bernstein}'s theorem on {Lie} groups.
\newblock \emph{Monatshefte für Mathematik}, 110\penalty0 (2):\penalty0
  101--114, June 1990.
\newblock ISSN 1436-5081.
\newblock \doi{10.1007/BF01302779}.
\newblock URL \url{https://doi.org/10.1007/BF01302779}.

\bibitem[Gaudry and Pini(1986)]{gaudry_bernsteins_1986}
Garth~I. Gaudry and Rita Pini.
\newblock Bernstein's theorem for compact, connected {Lie} groups.
\newblock \emph{Mathematical Proceedings of the Cambridge Philosophical
  Society}, 99\penalty0 (2):\penalty0 297--305, March 1986.
\newblock ISSN 1469-8064, 0305-0041.
\newblock \doi{10.1017/S0305004100064215}.
\newblock URL
  \url{https://www.cambridge.org/core/journals/mathematical-proceedings-of-the-cambridge-philosophical-society/article/abs/bernsteins-theorem-for-compact-connected-lie-groups/8562A13FF6D09CA2CC55A0C67174ED51}.
\newblock Publisher: Cambridge University Press.

\bibitem[Graham and McGehee(1979)]{graham_behavior_1979}
Colin~C. Graham and O.~Carruth McGehee.
\newblock The {Behavior} of {Transforms}.
\newblock In Colin~C. Graham and O.~Carruth McGehee, editors, \emph{Essays in
  {Commutative} {Harmonic} {Analysis}}, Grundlehren der mathematischen
  {Wissenschaften}, pages 1--47. Springer, New York, NY, 1979.
\newblock ISBN 978-1-4612-9976-9.
\newblock \doi{10.1007/978-1-4612-9976-9_1}.
\newblock URL \url{https://doi.org/10.1007/978-1-4612-9976-9_1}.

\bibitem[Gretton et~al.(2012)Gretton, Borgwardt, Rasch, Schölkopf, and
  Smola]{gretton_kernel_2012}
Arthur Gretton, Karsten~M. Borgwardt, Malte~J. Rasch, Bernhard Schölkopf, and
  Alexander Smola.
\newblock A kernel two-sample test.
\newblock \emph{Journal of Machine Learning Research}, 13\penalty0
  (25):\penalty0 723--773, 2012.
\newblock ISSN 1533-7928.
\newblock URL \url{http://jmlr.org/papers/v13/gretton12a.html}.

\bibitem[Hassibi et~al.(1999)Hassibi, Sayed, and
  Kailath]{hassibi_indefinite-quadratic_1999}
Babak Hassibi, Ali~H. Sayed, and Thomas Kailath.
\newblock \emph{Indefinite-{Quadratic} {Estimation} and {Control}}.
\newblock Studies in {Applied} and {Numerical} {Mathematics}. Society for
  Industrial and Applied Mathematics, January 1999.
\newblock ISBN 978-0-89871-411-1.
\newblock \doi{10.1137/1.9781611970760}.
\newblock URL \url{https://epubs.siam.org/doi/book/10.1137/1.9781611970760}.

\bibitem[Helgason(1962)]{helgason1962differential}
Sigurdur Helgason.
\newblock \emph{Differential {G}eometry and {S}ymmetric {S}paces}.
\newblock Academic Press, 1962.

\bibitem[Helgason(1984)]{helgason_groups_2000}
Sigurdur Helgason.
\newblock Groups and {Geometric} {Analysis}.
\newblock volume~83 of \emph{Mathematical {Surveys} and {Monographs}},
  Providence, Rhode Island, October 1984. American Mathematical Society.
\newblock ISBN 978-0-8218-2673-7 978-1-4704-1310-1.
\newblock \doi{10.1090/surv/083}.
\newblock URL \url{http://www.ams.org/surv/083}.

\bibitem[Jayasumana et~al.(2013)Jayasumana, Hartley, Salzmann, Li, and
  Harandi]{jayasumana_kernel_2013}
Sadeep Jayasumana, Richard Hartley, Mathieu Salzmann, Hongdong Li, and Mehrtash
  Harandi.
\newblock Kernel {Methods} on the {Riemannian} {Manifold} of {Symmetric}
  {Positive} {Definite} {Matrices}.
\newblock In \emph{2013 {IEEE} {Conference} on {Computer} {Vision} and
  {Pattern} {Recognition}}, pages 73--80, Portland, OR, USA, June 2013. IEEE.
\newblock ISBN 978-0-7695-4989-7.
\newblock \doi{10.1109/CVPR.2013.17}.
\newblock URL \url{http://ieeexplore.ieee.org/document/6618861/}.

\bibitem[Jousse et~al.(2021)Jousse, Pennec, Delingette, and
  Gonzalez]{jousse_geodesic_2021}
Florent Jousse, Xavier Pennec, Hervé Delingette, and Matilde Gonzalez.
\newblock Geodesic squared exponential kernel for non-rigid shape registration.
\newblock pages 1--8. IEEE Computer Society, December 2021.
\newblock ISBN 978-1-66543-176-7.
\newblock \doi{10.1109/FG52635.2021.9666997}.
\newblock URL
  \url{https://doi.ieeecomputersociety.org/10.1109/FG52635.2021.9666997}.

\bibitem[Kanagawa et~al.(2018)Kanagawa, Hennig, Sejdinovic, and
  Sriperumbudur]{kanagawa_gaussian_2018}
Motonobu Kanagawa, Philipp Hennig, Dino Sejdinovic, and Bharath~K.
  Sriperumbudur.
\newblock Gaussian {Processes} and {Kernel} {Methods}: {A} {Review} on
  {Connections} and {Equivalences}, July 2018.
\newblock URL \url{http://arxiv.org/abs/1807.02582}.
\newblock arXiv:1807.02582 [cs, stat].

\bibitem[Kaniuth and Lau(2018)]{kaniuth_fourier_2018}
Eberhard Kaniuth and Anthony Lau.
\newblock \emph{Fourier and {Fourier}-{Stieltjes} {Algebras} on {Locally}
  {Compact} {Groups}}, volume 231 of \emph{Mathematical {Surveys} and
  {Monographs}}.
\newblock American Mathematical Society, July 2018.
\newblock ISBN 978-0-8218-5365-8 978-1-4704-4768-7 978-1-4704-4769-4.
\newblock \doi{10.1090/surv/231}.
\newblock URL \url{https://www.ams.org/surv/231}.
\newblock ISSN: 0076-5376, 2331-7159.

\bibitem[Katznelson(2004)]{katznelson_introduction_2004}
Yitzhak Katznelson.
\newblock \emph{An {Introduction} to {Harmonic} {Analysis}}.
\newblock Cambridge {Mathematical} {Library}. Cambridge University Press,
  Cambridge, 3 edition, 2004.
\newblock ISBN 978-0-521-83829-0.
\newblock \doi{10.1017/CBO9781139165372}.
\newblock URL
  \url{https://www.cambridge.org/core/books/an-introduction-to-harmonic-analysis/67C4CE356E7420BA17F3F1337291EF82}.

\bibitem[Li(2023)]{li_gaussian_2023}
Siran Li.
\newblock Gaussian kernels on nonsimply connected closed {Riemannian} manifolds
  are never positive definite.
\newblock \emph{Bulletin of the London Mathematical Society}, 2023.
\newblock ISSN 1469-2120.
\newblock \doi{10.1112/blms.12928}.
\newblock URL \url{https://onlinelibrary.wiley.com/doi/abs/10.1112/blms.12928}.

\bibitem[Liu et~al.(2021{\natexlab{a}})Liu, Huang, Chen, and
  Suykens]{liu_fast_2021}
Fanghui Liu, Xiaolin Huang, Yingyi Chen, and Johan Suykens.
\newblock Fast learning in reproducing kernel {K}rein spaces via signed
  measures.
\newblock In Arindam Banerjee and Kenji Fukumizu, editors, \emph{Proceedings of
  The 24th International Conference on Artificial Intelligence and Statistics},
  volume 130 of \emph{Proceedings of Machine Learning Research}, pages
  388--396. PMLR, 13--15 Apr 2021{\natexlab{a}}.
\newblock URL \url{https://proceedings.mlr.press/v130/liu21a.html}.

\bibitem[Liu et~al.(2021{\natexlab{b}})Liu, Shi, Huang, Yang, and
  Suykens]{liu_analysis_2021}
Fanghui Liu, Lei Shi, Xiaolin Huang, Jie Yang, and Johan A.~K. Suykens.
\newblock Analysis of regularized least-squares in reproducing kernel {Krein}
  spaces.
\newblock \emph{Machine Learning}, 110\penalty0 (6):\penalty0 1145--1173, June
  2021{\natexlab{b}}.
\newblock ISSN 1573-0565.
\newblock \doi{10.1007/s10994-021-05955-2}.
\newblock URL \url{https://doi.org/10.1007/s10994-021-05955-2}.

\bibitem[Loosli et~al.(2016)Loosli, Canu, and Ong]{bonnet-loosli_learning_2016}
Gaëlle Loosli, Stéphane Canu, and Cheng~Soon Ong.
\newblock Learning {SVM} in {K}rein spaces.
\newblock \emph{IEEE Transactions on Pattern Analysis and Machine
  Intelligence}, 38\penalty0 (6):\penalty0 1204--1216, 2016.
\newblock \doi{10.1109/TPAMI.2015.2477830}.

\bibitem[Marsden and Ratiu(1999)]{Marsden1994}
Jerrold~E. Marsden and Tudor~S. Ratiu.
\newblock \emph{{Introduction to mechanics and symmetry}}.
\newblock Springer-Verlag, New York, second edition, 1999.

\bibitem[Mary(2003)]{mary_hilbertian_2003}
Xavier Mary.
\newblock Hilbertian subspaces, subdualities and applications.
\newblock \emph{Doctoral dissertation, Institut National des Sciences
  apliquées de Rouen}, 2003.

\bibitem[Micchelli et~al.(2006)Micchelli, Xu, and
  Zhang]{micchelli_universal_2006}
Charles~A. Micchelli, Yuesheng Xu, and Haizhang Zhang.
\newblock Universal {Kernels}.
\newblock \emph{Journal of Machine Learning Research}, 7\penalty0
  (95):\penalty0 2651--2667, 2006.
\newblock ISSN 1533-7928.
\newblock URL \url{http://jmlr.org/papers/v7/micchelli06a.html}.

\bibitem[Milnor(1976)]{milnor_curvatures_1976}
John Milnor.
\newblock Curvatures of left invariant metrics on lie groups.
\newblock \emph{Advances in Mathematics}, 21\penalty0 (3):\penalty0 293--329,
  September 1976.
\newblock ISSN 00018708.
\newblock \doi{10.1016/S0001-8708(76)80002-3}.
\newblock URL
  \url{https://linkinghub.elsevier.com/retrieve/pii/S0001870876800023}.

\bibitem[Oglic and Gaertner(2018)]{oglic_learning_2018}
Dino Oglic and Thomas Gaertner.
\newblock Learning in reproducing kernel {K}rein spaces.
\newblock In Jennifer Dy and Andreas Krause, editors, \emph{Proceedings of the
  35th International Conference on Machine Learning}, volume~80 of
  \emph{Proceedings of Machine Learning Research}, pages 3859--3867. PMLR,
  10--15 Jul 2018.
\newblock URL \url{https://proceedings.mlr.press/v80/oglic18a.html}.

\bibitem[Ong et~al.(2004)Ong, Mary, Canu, and Smola]{ong_learning_2004}
Cheng~Soon Ong, Xavier Mary, Stéphane Canu, and Alexander~J. Smola.
\newblock Learning with non-positive kernels.
\newblock In \emph{Proceedings of the twenty-first international conference on
  {Machine} learning}, {ICML} '04, page~81, New York, NY, USA, July 2004.
  Association for Computing Machinery.
\newblock ISBN 978-1-58113-838-2.
\newblock \doi{10.1145/1015330.1015443}.
\newblock URL \url{https://doi.org/10.1145/1015330.1015443}.

\bibitem[Paulsen and Raghupathi(2016)]{paulsen_introduction_2016}
Vern~I. Paulsen and Mrinal Raghupathi.
\newblock \emph{An {Introduction} to the {Theory} of {Reproducing} {Kernel}
  {Hilbert} {Spaces}}.
\newblock Cambridge University Press, 1 edition, April 2016.
\newblock ISBN 978-1-107-10409-9 978-1-316-21923-2.
\newblock \doi{10.1017/CBO9781316219232}.
\newblock URL
  \url{https://www.cambridge.org/core/product/identifier/9781316219232/type/book}.

\bibitem[Rasmussen and Williams(2005)]{rasmussen_gaussian_2005}
Carl~Edward Rasmussen and Christopher K.~I. Williams.
\newblock \emph{Gaussian {Processes} for {Machine} {Learning}}.
\newblock The MIT Press, November 2005.
\newblock ISBN 978-0-262-25683-4.
\newblock \doi{10.7551/mitpress/3206.001.0001}.
\newblock URL
  \url{https://direct.mit.edu/books/book/2320/Gaussian-Processes-for-Machine-Learning}.

\bibitem[Said et~al.(2018)Said, Hajri, Bombrun, and Vemuri]{Said2018}
Salem Said, Hatem Hajri, Lionel Bombrun, and Baba~C. Vemuri.
\newblock Gaussian distributions on {R}iemannian symmetric spaces: Statistical
  learning with structured covariance matrices.
\newblock \emph{IEEE Transactions on Information Theory}, 64\penalty0
  (2):\penalty0 752--772, 2018.
\newblock \doi{10.1109/TIT.2017.2713829}.

\bibitem[Said et~al.(2022)Said, Mostajeran, and Heuveline]{HOS2022}
Salem Said, Cyrus Mostajeran, and Simon Heuveline.
\newblock Chapter 10 - {G}aussian distributions on {R}iemannian symmetric
  spaces of nonpositive curvature.
\newblock In Frank Nielsen, Arni~S.R. {Srinivasa Rao}, and C.R. Rao, editors,
  \emph{Geometry and Statistics}, volume~46 of \emph{Handbook of Statistics},
  pages 357--400. Elsevier, 2022.
\newblock \doi{10.1016/bs.host.2022.03.004}.

\bibitem[Said et~al.(2023)Said, Heuveline, and Mostajeran]{Said2023}
Salem Said, Simon Heuveline, and Cyrus Mostajeran.
\newblock Riemannian statistics meets random matrix theory: Toward learning
  from high-dimensional covariance matrices.
\newblock \emph{IEEE Transactions on Information Theory}, 69\penalty0
  (1):\penalty0 472--481, 2023.
\newblock \doi{10.1109/TIT.2022.3199479}.

\bibitem[Schaback and Wendland(2003)]{schaback_approximation_2003}
Robert Schaback and Holger Wendland.
\newblock Approximation by positive definite kernels.
\newblock In Martin~D. Buhmann and Detlef~H. Mache, editors, \emph{Advanced
  {Problems} in {Constructive} {Approximation}}, {ISNM} {International}
  {Series} of {Numerical} {Mathematics}, pages 203--222, Basel, 2003.
  Birkhäuser.
\newblock ISBN 978-3-0348-7600-1.
\newblock \doi{10.1007/978-3-0348-7600-1_15}.

\bibitem[Schleif and Tino(2015)]{schleif_indefinite_2015}
Frank-Michael Schleif and Peter Tino.
\newblock Indefinite {Proximity} {Learning}: {A} {Review}.
\newblock \emph{Neural Computation}, 27\penalty0 (10):\penalty0 2039--2096,
  October 2015.
\newblock ISSN 0899-7667.
\newblock \doi{10.1162/NECO_a_00770}.
\newblock URL \url{https://doi.org/10.1162/NECO_a_00770}.

\bibitem[Schwartz(1964)]{schwartz_sous-espaces_1964}
Laurent Schwartz.
\newblock Sous-espaces hilbertiens d’espaces vectoriels topologiques et
  noyaux associés ({Noyaux} reproduisants).
\newblock \emph{Journal d’Analyse Mathématique}, 13\penalty0 (1):\penalty0
  115--256, December 1964.
\newblock ISSN 1565-8538.
\newblock \doi{10.1007/BF02786620}.
\newblock URL \url{https://doi.org/10.1007/BF02786620}.

\bibitem[Schölkopf et~al.(2001)Schölkopf, Herbrich, and
  Smola]{scholkopf_generalized_2001}
Bernhard Schölkopf, Ralf Herbrich, and Alex~J. Smola.
\newblock A {Generalized} {Representer} {Theorem}.
\newblock In David Helmbold and Bob Williamson, editors, \emph{Computational
  {Learning} {Theory}}, Lecture {Notes} in {Computer} {Science}, pages
  416--426, Berlin, Heidelberg, 2001. Springer.
\newblock ISBN 978-3-540-44581-4.
\newblock \doi{10.1007/3-540-44581-1_27}.

\bibitem[Simon-Gabriel and Schölkopf(2018)]{simon-gabriel_kernel_2018}
Carl-Johann Simon-Gabriel and Bernhard Schölkopf.
\newblock Kernel {Distribution} {Embeddings}: {Universal} {Kernels},
  {Characteristic} {Kernels} and {Kernel} {Metrics} on {Distributions}.
\newblock \emph{Journal of Machine Learning Research}, 19\penalty0
  (44):\penalty0 1--29, 2018.
\newblock URL \url{https://jmlr.org/papers/v19/16-291.html}.

\bibitem[Smola et~al.(2000)Smola, \'{O}v\'{a}ri, and
  Williamson]{smola_regularization_2000}
Alex Smola, Zolt\'{a}n \'{O}v\'{a}ri, and Robert~C Williamson.
\newblock Regularization with dot-product kernels.
\newblock In T.~Leen, T.~Dietterich, and V.~Tresp, editors, \emph{Advances in
  Neural Information Processing Systems}, volume~13. MIT Press, 2000.
\newblock URL
  \url{https://proceedings.neurips.cc/paper_files/paper/2000/file/d25414405eb37dae1c14b18d6a2cac34-Paper.pdf}.

\bibitem[Sriperumbudur et~al.(2011)Sriperumbudur, Fukumizu, and
  Lanckriet]{sriperumbudur_universality_2011}
Bharath~K Sriperumbudur, Kenji Fukumizu, and Gert R~G Lanckriet.
\newblock Universality, {Characteristic} {Kernels} and {RKHS} {Embedding} of
  {Measures}.
\newblock \emph{Journal of Machine Learning Research}, 12\penalty0
  (70):\penalty0 2389--2410, 2011.
\newblock URL \url{https://jmlr.org/papers/v12/sriperumbudur11a.html}.

\bibitem[Steinwart(2001)]{steinwart2001influence}
Ingo Steinwart.
\newblock {On the influence of the kernel on the consistency of support vector
  machines}.
\newblock \emph{Journal of machine learning research}, 2\penalty0
  (Nov):\penalty0 67--93, 2001.

\bibitem[Sugiura(1971)]{sugiura_fourier_1971}
Mitsuo Sugiura.
\newblock Fourier series of smooth functions on compact {Lie} groups.
\newblock \emph{Osaka Journal of Mathematics}, 8\penalty0 (1):\penalty0 33--47,
  January 1971.
\newblock ISSN 0030-6126.
\newblock URL
  \url{https://projecteuclid.org/journals/osaka-journal-of-mathematics/volume-8/issue-1/Fourier-series-of-smooth-functions-on-compact-Lie-groups/ojm/1200693127.full}.
\newblock Publisher: Osaka University and Osaka Metropolitan University,
  Departments of Mathematics.

\bibitem[Varopoulos(1988)]{varopoulos_analysis_1988}
N.~Th Varopoulos.
\newblock Analysis on {Lie} groups.
\newblock \emph{Journal of Functional Analysis}, 76\penalty0 (2):\penalty0
  346--410, February 1988.
\newblock ISSN 0022-1236.
\newblock \doi{10.1016/0022-1236(88)90041-9}.
\newblock URL
  \url{https://www.sciencedirect.com/science/article/pii/0022123688900419}.

\bibitem[Yaglom(1961)]{yaglom_second-order_1961}
A.~M. Yaglom.
\newblock Second-order {Homogeneous} {Random} {Fields}.
\newblock In \emph{Proceedings of the {Fourth} {Berkeley} {Symposium} on
  {Mathematical} {Statistics} and {Probability}, {Volume} 2: {Contributions} to
  {Probability} {Theory}}, volume 4.2, pages 593--623. University of California
  Press, January 1961.
\newblock URL
  \url{https://projecteuclid.org/ebooks/berkeley-symposium-on-mathematical-statistics-and-probability/Proceedings-of-the-Fourth-Berkeley-Symposium-on-Mathematical-Statistics-and/chapter/Second-order-Homogeneous-Random-Fields/bsmsp/1200512623}.

\end{thebibliography}

\end{document}